\documentclass{amsart}
\usepackage{amsmath}
\usepackage{amsfonts}
\usepackage{amssymb}
\usepackage{graphicx,xcolor}
\usepackage{enumerate}
\usepackage{a4wide}
\usepackage{biblatex}
\usepackage{subcaption} 
\usepackage{placeins}
\usepackage{hyperref}   

\newtheorem{theorem}{Theorem}[section]
\newtheorem{lemma}[theorem]{Lemma}

\theoremstyle{definition}
\newtheorem{definition}[theorem]{Definition}
\newtheorem{assumption}{Assumption}
\newtheorem{remark}[theorem]{Remark}
\newtheorem{example}[theorem]{Example}

\renewcommand{\div}{\text{div}}
\newcommand{\E}{{\mathbb E}}

\newcommand{\R}{{\mathbb R}}
\newcommand{\norm}[1]{\left\|#1\right\|}

\newcommand{\N}{\mathcal{N}}
\newcommand{\Natural}{\mathbb{N}}
\newcommand{\F}{\mathcal{F}}
\newcommand{\KL}[2]{D_{KL}\left(#1 || #2\right)}
\newcommand{\argmax}{\operatorname*{argmax}}
\newcommand{\argmin}{\operatorname*{argmin}}

\newcommand{\D}{\mathcal{D}}

\title[Convex Neural Likelihood Approximation]{A Convex Approximation Framework for Neural Likelihood-Based Bayesian Inverse Problems}

\author[Schneider, Helin, Taghizadeh]{Fabian Schneider$^{1, 2}$ \and Tapio Helin$^2$ \and Leila Taghizadeh$^1$.}
\date{\today}
\thanks{L.\,Taghizadeh acknowledges support from the Austrian Science Fund (FWF), Elise-Richter grant DOI 10.55776/V1000. Funding was also received from the European Research Council (ERC) under the Horizon 2020 research and innovation program of the European Union (Grant agreement No. 101125225). This work was supported by the Research Council of Finland (353094, 348504 and 359183). Support by the Austrian Science Fund (FWF) grant I6667-N is acknowledged.
\\
$^1$ Institute of Analysis and Scientific Computing, TU Wien, Vienna, Austria \texttt{Fabian.Schneider@asc.tuwien.ac.at}, \texttt{Leila.Taghizadeh@asc.tuwien.ac.at} \\
$^2$ School of Engineering Sciences, LUT University, Lappeenranta, Finland \texttt{Tapio.Helin@lut.fi}.}

\keywords{Neural likelihood approximation, infinite dimensional Bayesian inverse problem, data-driven methods, semiconductor devices}
\subjclass[2020]{62G05, 65C05, 65N21, 65N75, 60B12}

\begin{document}

\begin{abstract}
Many problems in science and engineering are difficult to model accurately, either due to unknown physical mechanisms, poorly quantified measurement uncertainty, or prohibitive computational costs of high-fidelity simulations. These challenges limit the applicability of classical probabilistic inference methods such as Markov chain Monte Carlo, especially in high-dimensional Bayesian inverse problems. As data from scientific experiments become increasingly available, machine learning methods offer a flexible alternative to explicit parametric modelling.
We study neural likelihood approximation, where the goal is to learn the likelihood function directly from data without explicit knowledge of the underlying data-generating process. A common approach trains likelihood surrogates by minimizing the Kullback–Leibler divergence between the true posterior and an approximate posterior, which is equivalent to minimizing the expected negative log-likelihood. This work improves the theoretical foundations of neural likelihood approximation by alleviating limitations of restrictive model classes: we show that, by working with un-normalized potentials and folding normalization into the training objective, the resulting learning problem is strictly convex. We show that empirical minimizers of the resulting data-driven objective converge to the true likelihood as the sample size grows. Numerical experiments for the neural likelihood approximation are conducted for a deblurring and a non-linear PDE based imaging problem.
\end{abstract}

\maketitle


\section{Introduction}\label{s:Intro}
In many areas of science and engineering, an accurate mathematical model of the problem of interest is difficult to obtain, or there is no canonical way to quantify the uncertainty in the measurements. Even when a suitable mathematical model exists, evaluating its discretization can be computationally expensive (for example when it requires solving one or more partial differential equations (PDEs)) which may render problems of probabilistic inference such as Markov-Chain Monte Carlo (MCMC) infeasible, especially for very high dimensional problems such as Bayesian inverse problems on function spaces.
Examples of problems that fit this description include inverse problems in medical imaging \cite{arridge1999optical, Holder2005EIT} or geophysical imaging \cite{wang2025sequentialbayesiandesignefficient}, modelling in evolutionary biology \cite{Ratmann2007UsingLikelihood-free, Wood2010Statistical, Meeds2014GPS-ABC}, ground water models \cite{ZENG2025Solvinghigh-dimensional}, transmission dynamics of bacterial infections \cite{Numminen2013Estimating, Gutman2016BayesianOptimization}, financial models \cite{Ravi2004Risks, Ramona2023Bayesiansynthetic} and neurophysiological problems \cite{lueckmann2019likelihoodfree}. 
With increasing availability of measurement data from scientific experiments, machine learning methods are becoming more and more popular over hand-tuned explicit models involving a finite amount of parameters in an approximative class.

We focus on \emph{neural likelihood approximation}, where the goal is to infer the likelihood function from empirical data  without a mathematical model of the precise physics of the data generating process. 
Tuning the likelihood approximation through the KL-divergence between the posterior distribution $\mu^y$ and an approximation $\mu^y_f$ corresponding to the approximate likelihood has seen a lot of promising experiments in the literature \cite{lueckmann2019likelihoodfree, papamakarios2019sequantialneural, Balaji2017SimpleandScalable, rothfuss2019conditional}.
As is established (see e.g. \cite{papamakarios2019sequantialneural, radev2020bayesflowlearningcomplexstochastic}), this corresponds to minimising the expected negative log-likelihood (Lemma~\ref{lem: basic lemma} for details). In spite of notable achievements, the approach  continues to lack a comprehensive theoretical explanation.

In practice, normalization of the likelihood as a probability measure is often achieved through certain classes of parametric distributions \cite{lueckmann2019likelihoodfree, Balaji2017SimpleandScalable} or normalizing flows \cite{papamakarios2019sequantialneural, rothfuss2019conditional} or invertible networks \cite{radev2020bayesflowlearningcomplexstochastic}. This is restrictive: in each case the resulting function class may be too small to contain a near-optimal likelihood.

\subsection{State of the art}\label{ss:SoA}
In the context of inverse problems, the restrictive computational cost has been addressed widely through different classes of surrogate models. One of the variants corresponds to adjusting the observational noise to compensate for modelling error \cite{Arridge_2006, kaipio2006statistical, Helin2025OAE} given an approximation of the forward map. Polynomial Chaos expansion \cite{Marzouk2009Astochastic, MARZOUK2009Dimensionality, Xiu2010} is based on the idea of finding a computationally feasible approximation of the forward map. More recently, deep learning based surrogates for the forward map have been studied extensively, see e.g. \cite{Herrmann_2020,AnAdaptive2020, Li2023PINNsurrogate, Deveny2023DeepSurrogate} among many others.
Random likelihood approximation through Gaussian processes (GP) has been studied widely in the literature including statistical properties such as consistency
\cite{stuart2018PosteriorConsistency, Helin2024IntrotoGPR}. It is applicable to either an approximation of the forward map or the likelihood directly. Notice however that often a finite amount of evaluations of the likelihood is required for the estimation. 

Methods in the field of simulation based inference are applicable without a given mathematical model of the forward process or likelihood. The most classical method is given by approximate Bayesian computations (ABC) \cite{Rubin1984, Marin2012, Sisson2007Sequential}, where samples from a prior distribution are passed through a simulator until a quantity similar enough to the outcome of the physical experiment is observed.

To overcome limitations especially in terms of applicability when no mathematical forward model is available, approximation of probability distribution through its likelihood or a density from data is studied under the terms of neural likelihood approximation \cite{papamakarios2019sequantialneural, WALCHESSEN2024Neurallikelihood, lueckmann2019likelihoodfree, hikida2025multilevelneuralsimulationbasedinference} or conditional density estimation \cite{Papamakarios2016Fast,papamakarios2019neuraldensityestimationlikelihoodfree,rothfuss2019conditional}, which require only empirical data from the joint distribution of measurements and corresponding quantities of interest.
Similar variants exist for neural posterior estimation \cite{NEURIPS2022Truncatedproposals, hikida2025multilevelneuralsimulationbasedinference}.
The statistical accuracy of neural likelihood approximation under suitable assumptions are studied in \cite{frazier2024statisticalaccuracyneuralposterior}.

From a theoretical point of view, stability of the posterior with respect to the likelihood is studied extensively in the literature and Lipschitz continuity of the posterior with respect to a likelihood perturbation is established in KL-divergence, Hellinger and other distances \cite{Stuart_2010, Sprungk_2020, cvetković2025upperlowerboundslocal}. For random surrogates, the Hellinger distance between posterior and its approximation is bounded by a moment of likelihood mismatch in \cite{Lie2018RandomForward}.

\subsection{The problem setting}\label{ss:Setting}
Let the prior measure $\mu$ on a Hilbert space $H$ be given. Denote by $L: H \times\R^m \rightarrow \R$ the negative log-likelihood (NLL) or potential, describing the distribution of the dependent variable (measurement) $y \in \R^m$ conditional on the unknown independent variable (quantity of interest) $x \in H$.
By Bayes' theorem, the posterior distribution $\mu^y$ of $x$ conditional on $y$ has a density with respect to $\mu$ given by
\begin{equation}\label{eq: posterior corresponding to L}
    \frac{d \mu^y}{d \mu}(\cdot) :=\frac{1}{Z(y)} \exp \left( -L(\cdot; y) \right), 
\end{equation}
$\mu$-almost everywhere with
\begin{eqnarray*}
    Z(y) = \int_H \exp \left( -L(x; y) \right) \mu(dx).
\end{eqnarray*}
We denote by $\lambda(dx, dy)$ 
the joint distribution of $(x, y)$ and by $\pi(y)$ the marginal distribution of $y$.
The Kullback-Leibler (KL) divergence for measures $\mu \ll \nu$ on $H$ is given by
\begin{eqnarray*}
    \KL{\mu}{\nu} := \int_H \log \frac{d \mu}{ d \nu}(x) \mu(dx).
\end{eqnarray*}
Notice that the KL-divergence is not symmetric, it may take different values depending on the arrangement of its arguments.

The usual setup of (sequential) neural likelihood estimation can be summarized as follows (see \cite{papamakarios2019sequantialneural} for a detailed discussion):
In a first step, training data $(x_i, y_i)_{i = 1}^N$ from the joint distribution $\lambda$ is generated by sampling from the prior and using a simulator to compute the observation $y_i$ conditional on $x_i$ for $i = 1, \cdots, N$. We emphasize that, since in general the noise level of the simulator is unknown, the simulator cannot serve in a likelihood function.
In the next step a neural likelihood estimate $q_\theta$ is fitted by estimating
 \begin{eqnarray}\label{eq: opt for neural likelihood}
        \theta^*&:=&\argmax_{\theta \in \Theta} \E^{(x, y)} \log q_\theta(y|x) 
    \end{eqnarray}
where the generated data is used to estimate the expectation. 
By an argument similar to Lemma~\ref{lem: basic lemma} in our text,
\begin{eqnarray*}
    \argmax_{\theta \in \Theta} \E^{(x, y)} \log q_\theta(y|x) = \argmin_{\theta \in \Theta} \E^y\KL{ q(y|x) }{q_\theta(y|x)},
\end{eqnarray*}
where $q$ denotes the true likelihood.
The class $\Theta$ can be of parameters $\Theta \subset \R^d$ or of (normalized) functions such as autoregressive flows \cite{Papamakarios2079Maskedautoregressive}. 
The approximative likelihood $q_\theta$ can now be used to approximate the posterior for a given measurement $y$. In the sequential variant, data can be regenerated by sampling the $x_i$ from the approximative posterior conditioned on a given measurement $y$ instead. Multiple iterations of likelihood estimation and data generation can be performed.

This work is motivated by Bayesian inverse problems.
Important challenges include high computational cost of the evaluation of the forward map in large scale inverse problems on function spaces, where an evaluation can correspond to solving a (system of) PDE(s). A computationally cheaper approximation of the NLL $L$ is often necessary to determine unknown functions and allow uncertainty quantification by generating many posterior samples. 
Another feasible application is where no accurate mathematical formula for the forward map is available but instead limited data can be generated from a physical system.

\subsection{Our contribution}\label{ss:Contribution}
As noted above, a key difficulty in neural likelihood approximation is choosing a class of potentials that is both expressive enough and normalized as a probability distribution.
We address this as follows. Given a general function, we normalize it formally by dividing by its expectation, so that the normalization is absorbed into the expected KL-based loss. We show that this loss is convex in the un-normalized NLL, and we establish consistency of the approach: minimizers of the empirical loss converge to the true potential as the amount of data grows.
More precisely, we consider the optimization problem
    \begin{eqnarray}\label{eq: objective}
        \argmin_{f \in \F} \E^{y}\KL{\mu^y}{\mu^y_f},
    \end{eqnarray}
    where $\mu^y_f$ is the posterior corresponding to the approximate NLL:
    \begin{equation}\label{eq: posterior corresponding to f}
    \frac{d \mu^y_f}{d \mu}(x) :=\frac{\exp \left( -f(x; y) \right) }{\int_H \exp(-f(x;y)) \mu(dx)} \; \text{ in } L^1(\mu) 
\end{equation}
and $f\in \F$ in a suitable class of functions, see Definition~\ref{def: approximate posterior} for details.
Our main findings can be summarized as follows:
\begin{itemize}
    \item We remove the need to parameterize normalized likelihoods by optimizing~\eqref{eq: objective} directly over a general class of \emph{un-normalized} potentials. The key observation is the identity (Lemma~\ref{lem: basic lemma}), which reads
    \begin{eqnarray*}
     \E^\pi\KL{\mu^y}{\mu^y_f} =  C + \E^\lambda[f(x;y)] + \E^\pi[\log Z_f(y)],
    \end{eqnarray*}
with $C$ independent of $f$. Normalization is absorbed into the log-normalizer $\E^\pi[\log Z_f(y)]$, so the objective can be evaluated from samples of the joint distribution $\lambda$ without access to the true potential $L$. Minimizing the expected KL-divergence is thus equivalent to minimizing the expected negative log-likelihood together with this log-normalizer.
    \item In Theorem~\ref{thm: convex}, we show that the KL-divergence 
    \begin{eqnarray*}
        \E^{y}\KL{\mu^y}{\mu^y_f}
    \end{eqnarray*}
    is convex as a function of $f$. Exploiting this convexity of course requires the class $\F$ to be convex as well. We stress that this fails for the usual normalizing-flow parametrizations \cite{papamakarios2019sequantialneural, rothfuss2019conditional}: the set of functions $f$ for which $e^{-f}$ integrates to one is in general not convex.
    \item In Section~\ref{sec: data driven}, we derive an approximation of the objective \eqref{eq: objective} that relies solely on data from the joint distribution $\lambda$. This allows accurate and computationally efficient representations of the NLL to be learned directly from such data.
    \item In Theorem~\ref{thm: M estimation ours} we show that, under suitable assumptions, the minimizer of the data-based objective converges to the minimizer of the population objective as the number of data points increases.
    \item In Section~\ref{sec: numerics}, we present numerical experiments on two Bayesian inverse problems: a toy de-blurring problem, in which the unknown is a one-dimensional Gaussian process, and a nonlinear, PDE-based problem of estimating the doping profile of a semiconductor from voltage-current measurements.
\end{itemize}
Our work is motivated by \cite{Helin2025OAE}, where the authors show that the KL-divergence in \eqref{eq: objective} is bi-convex in the mean and covariance parameters of a Gaussian likelihood.

In summary, we propose a framework that enables offline learning purely based on empirical data, and can further be used to speed up posterior sampling once measurement data is acquired. The framework is backed by theoretical convergence guarantees in the data size. It has the advantages that the optimization problem is convex and without approximations in the objective function.

In many scientific problems, the unknown quantity is represented by a function or another infinite-dimensional parameter to a PDE. In particular in the inverse problems community, significant effort was employed to ensuring methods are independent of the choice of discretization \cite{Lehtinen1989Linear, Lassas2009InverseProblemsandImaging}. To this end, we formulate the optimization problem on (possibly infinite-dimensional) Hilbert spaces and show the fundamental properties in Lemma~\ref{lem: basic lemma} and Theorem~\ref{thm: convex} in the un-discretized setting. We state the consistency property in a finite-dimensional setting since the underlying theory on bracketing is developed mostly in the setting of function classes on euclidean spaces.

The rest of the paper is organized as follows: In Section~\ref{s:NLL approximation}, we state the main assumptions and use them to show how the objective function can be implemented. Further we show convexity of the objective function. In Section~\ref{sec: data driven}, we propose an approximate objective function based on empirical data. We show consistency of the empirical approximation. In Section~\ref{sec: numerics}, we implement our approach and illustrate our numerical results for two Bayesian inverse problems, namely a deblurring problem and a doping inverse problem in semiconductor devices. Finally, conclusions are drawn in Section~\ref{s:Conclusions}.
\section{The Negative log-likelihood approximation and main properties}\label{s:NLL approximation}
In this section, first we show that inverse problems with a Gaussian likelihood distribution satisfy certain assumptions. We define a KL-divergence based objective function for an approximative likelihood that is based on an integral in the joint distribution $\lambda$. Then, we show that this objective function is in fact convex.
Finally, we define equivalence class of (likelihood) functions with corresponding norm, where two functions are equivalent if they equal up to normalization in $y$. It is in this norm that we can expect optimal likelihood functions to be unique.

Consider a Bayesian inverse problem with a Gaussian likelihood function.
More precisely, assume to have a forward map $A: H \rightarrow \R^m$ and the observational noise covariance $\Gamma \in \R^{m \times m}$ given. The prior distribution of $x \in H$ before any measurement is made, is given by $\mu$. A measurement $y \in \R^m$ is obtained through
\begin{eqnarray}\label{eq: inverse problem}
    y = A(x) +  \varepsilon, \quad \varepsilon \sim \N(0, \Gamma).
\end{eqnarray}
The NLL corresponding to \eqref{eq: inverse problem} is given by
\begin{equation*}
        L(x, y) = \frac{1}{2} \norm{A(x) - y}_\Gamma^2.
\end{equation*}
When the true NLL is not known, we are interested in estimating one of the quantities
\begin{itemize}
    \item the NLL in a class of functions $\F$ of $(x, y)$. We denote a likelihood emulator in this class a \emph{free-form} approximation.
    \item the true forward map $A$ from a class of forward maps $\mathcal{A}$. We call an approximation in this class \emph{residual} approximation. In this class, we are approximating only the forward map and assume the observational noise level to be known.
    \item the true forward map $A$ and the observational noise $\Gamma$, from a class of forward maps $\mathcal{A}$ and observational noises $\mathcal{G}$, respectively. This approximation we call \emph{calibrated residual} approximation.
\end{itemize}
In the former case, the assumptions on the class $\F$ for this work are expressed in Assumption~\ref{ass: assumption}. In the latter cases, we will elaborate the necessary assumptions on the classes $\mathcal{A}$ and $\mathcal{G}$ in Examples~\ref{ex: gaussian likelihood 1} and~\ref{cor: assumptions} to satisfy Assumption~\ref{ass: assumption}.

\subsection{Assumptions}
Let the following assumptions, which are an extension of the assumptions in \cite{helin2025bayesianoptimalexperimentaldesign} to a general potential, hold throughout this section. 
\begin{assumption}
    \label{ass: prior}
    The prior distribution $\mu$ on $H$ satisfies 
    \begin{eqnarray}\label{eq: sub gaussian prior}
        \E^{\mu} \exp(C_1 \norm{x}^2) < \infty.
    \end{eqnarray}
    for some $C_1>0$.
\end{assumption}

\begin{assumption}\label{ass: assumption}
     Let $\F$ be a convex, closed set of non-negative functions and assume that the true potential $L$ is contained in $\F$, that is $L \in \F$. Assume also that there exits $C_2^-, C_2^+ > 0$ and for every $\varepsilon > 0$ there exists $\delta >0$ such that for every $f \in \F$ and all $x \in H, y \in \R^m$
        \begin{eqnarray}\label{eq: bounds on f}
             -C_2^- - \varepsilon \norm{x}^2   +\delta \norm{y}^2 \le f(x, y) \le C_2^+  (1 + \norm{x}^2   +\norm{y}^2).
        \end{eqnarray}
\end{assumption}
The assumption \eqref{eq: bounds on f} is stronger than non-negativity of $f$. It will ensure that the marginal distribution of $y$ under $f$ is well defined (see Equation \eqref{eq: marginal of y corresponding to L} and Lemma~\ref{lem: Z_fy bounds}).
The following examples give convenient sets of sufficient conditions under which Assumption~\ref{ass: assumption} holds for Gaussian likelihood potentials.

\begin{example}\label{ex: gaussian likelihood 1}
     Let $\mathcal{A}$ be a given set of potential forward maps, and let the noise covariance $\Gamma \in \R^{m \times m}$ be such that the following hold uniformly for all $A \in \mathcal{A}$:
    \begin{itemize}
        \item[(B1)] (Lipschitz) There exists $C_3 > 0$ such that for all $x, x' \in H$
        \begin{eqnarray*}
            \norm{A(x) - A(x')}_\Gamma  \le C_3 \norm{x - x'}.
        \end{eqnarray*}
        \item[(B2)] ($A$ is proper) There exist $R, C_4 > 0$ such that
        \begin{eqnarray*}
            \mu\left( B(0, R)\right) > 0 \text{ and } \sup\limits_{x \in B(0, R)} \norm{A(x)}_\Gamma < C_4.
        \end{eqnarray*}
    \end{itemize}
    Then by \cite[Lemma 3.10]{helin2025bayesianoptimalexperimentaldesign}, the class 
    \begin{eqnarray*}
        \F_{\text{residual}} := \left\{f: (x, y) \mapsto \frac{1}{2} \norm{A(x) -y }_\Gamma^2 \bigg| A \in \mathcal{A} \right\}
    \end{eqnarray*}
     satisfies Assumption \ref{ass: assumption}.
\end{example}
\begin{example}\label{cor: assumptions}
    Let $\mathcal{A}$ be a given set of potential forward maps that satisfies (B1) and (B2) in Lemma~\ref{ex: gaussian likelihood 1} and $\mathcal{G}$ a set of potential covariances such that 
    \begin{itemize}
        \item[(B3)] (Bounded eigenvalues) The eigenvalues of $\Gamma \in \mathcal{G}$ are all uniformly bounded from below and above, that is there exist $0 < C_5^-, C_5^+$ such that for all $\Gamma \in \mathcal{G}$\begin{eqnarray*}
        C_5^- \le \gamma_{min}(\Gamma) \le \gamma_{max}(\Gamma) \le C_5^+,
    \end{eqnarray*}
    where $\gamma_{min}(\Gamma)$ and $\gamma_{max}(\Gamma)$ denote the minimum and maximum eigenvalues of $\Gamma$ respectively.
    \end{itemize}
    Then the class
        \begin{eqnarray*}
        \F_{\text{calibrated}} := \left\{f: (x, y) \mapsto \frac{1}{2} \norm{A(x) -y }_\Gamma^2 \bigg| A \in \mathcal{A}, \Gamma \in \mathcal{G}\right\}
    \end{eqnarray*}
    is contained in a set that satisfies Assumption \ref{ass: assumption}.
\end{example}
\subsection{An equivalence class for likelihood functions}
\begin{definition}\label{def: approximate posterior}
    To any $f \in \F$, we assign a posterior distribution $\mu^y_f$ according to
\begin{eqnarray*}
    \frac{d \mu^y_f}{d \mu}(x) = \frac{1}{Z_f(y)} \exp \left( -f(\cdot; y) \right) \text{ in } L^1(\mu)
\end{eqnarray*}
 where
\begin{eqnarray}\label{eq:normalization}
        Z_f(y) := \int_H \exp \left( -f(x; y) \right) \mu(dx)
\end{eqnarray}
with $\mu$-almost everywhere in $x$.
We define the marginal distribution $\pi_f(\cdot)$ of $y$ through
 \begin{eqnarray}\label{eq: marginal of y corresponding to L}
     \pi_f(y) := \frac{Z_f(y)}{\int_{\R^m} Z_f(y) dy}.
 \end{eqnarray}
 and the joint distribution $\lambda_f(dx, dy)$ by
 \begin{eqnarray}\label{eq: joint}
     \lambda_f(dx, dy) = \pi_f(dy)\frac{e^{-f(x, y)}}{Z_f(y)}\mu(dx).
 \end{eqnarray}
\end{definition}
According to Lemma~\ref{lem: Z_fy bounds}, $\infty > \int Z_f(y) dy > 0$, so Equation~\eqref{eq: marginal of y corresponding to L} is well-defined.
The likelihoods $\exp(-L)$ or $\exp(-f)$ do in general not integrate to one. Instead normalization is enforced by formally dividing $Z(y)$, so any functions that differ by an additive constant will give rise to the same posterior distribution.
We define now an equivalence class of functions $\F_\Phi$, where $f \sim g$ for $f, g \in \F$ if for all $x, y \in H\times \R^m$
\begin{eqnarray*}
   f(x, y) + \log Z_f(y) = g(x, y) + \log Z_g(y).
\end{eqnarray*}
We equip $\F_\Phi$ with the metric
\begin{eqnarray}
\begin{aligned}\label{eq: norm for equivalence}
        \norm{f -g }_\Phi &:= \norm{f(\cdot, \cdot) -g(\cdot, \cdot)  + \log Z_f(\cdot)  - \log Z_g(\cdot) }_{L^1(H \times \R^m, \lambda)}.
\end{aligned}
\end{eqnarray}
Notice that $\Phi$ and $\mu^y_f$ and $\lambda_f$ are indeed independent of the choice of the representative in $\F_\Phi$. We shall view the map $\Phi$ as a mapping of an equivalence class $\F_\Phi \rightarrow \R$ from now on.
\begin{lemma}\label{lem: equivalence class}
    For $f, g \in \F_\Phi$, it holds that $f \sim g$ iff $f-g$ depends only on $y$ on the support of $\lambda$.
\end{lemma}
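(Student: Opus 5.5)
The proof has two directions. In both we work on the support of $\lambda$, reading the definition of $\sim$ there, i.e.\ $\lambda$-almost everywhere, consistent with the metric $\norm{\cdot}_\Phi$.

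\emph{Forward direction.} Suppose $f \sim g$. By definition,
\[
f(x,y) - g(x,y) = \log Z_g(y) - \log Z_f(y)
\]
on the support of $\lambda$. The right-hand side is a function of $y$ alone. It is a well-defined real number because Lemma~\ref{lem: Z_fy bounds} gives $0 < Z_f(y), Z_g(y) < \infty$. This settles the forward direction immediately.

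\emph{Reverse direction.} Suppose $f(x,y) - g(x,y) = c(y)$ for $(x,y)$ in the support of $\lambda$, with $c$ some function of $y$ only. The goal is to show $c(y) = \log Z_g(y) - \log Z_f(y)$. We compute $Z_f(y)$ by integrating $e^{-f(x,y)} = e^{-c(y)}e^{-g(x,y)}$ against $\mu(dx)$. Since $c(y)$ does not depend on $x$, it factors out:
\[
Z_f(y) = e^{-c(y)} Z_g(y).
\]
Taking logarithms gives $\log Z_f(y) = -c(y) + \log Z_g(y)$. Substituting,
\[
f + \log Z_f = g + c + \log Z_g - c = g + \log Z_g,
\]
which is exactly $f \sim g$.

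\emph{The main obstacle.} The factorization of $Z_f(y)$ requires the identity $f - g = c(y)$ for $\mu$-almost every $x$ at a fixed $y$. The hypothesis only gives it on the support of $\lambda$. To bridge this, disintegrate $\lambda(dx,dy) = \pi(dy)\,\mu^y(dx)$. Since $\mu^y$ has density $Z(y)^{-1}e^{-L(x,y)} > 0$ with respect to $\mu$ ($L$ is finite by Assumption~\ref{ass: assumption}), $\mu^y$ and $\mu$ are equivalent measures. A $\lambda$-null set therefore has $\mu$-null $x$-sections for $\pi$-almost every $y$. Consequently the identity $f - g = c(y)$ holds $\mu$-almost everywhere in $x$ for $\pi$-a.e.\ $y$, which is all the computation of $Z_f(y)$ needs. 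The conclusion is then $\lambda$-almost everywhere, matching the $L^1(\lambda)$ metric $\norm{\cdot}_\Phi$. If pointwise equality everywhere is wanted instead, one notes that $Z_f$ is defined for every $y$ and the same computation applies on each $y$-slice where the hypothesis holds.
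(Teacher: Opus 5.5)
Your proof is correct and follows the paper's argument: the forward direction reads off $f-g=\log Z_g(y)-\log Z_f(y)$, and the reverse direction factors $e^{-c(y)}$ out of $Z_f(y)$ so that the $y$-dependent shift cancels. Your extra step uses the equivalence of $\mu^y$ and $\mu$ to pass from the hypothesis on the support of $\lambda$ to $\mu$-a.e.\ validity on each $x$-slice. This makes explicit a measure-theoretic point the paper uses implicitly when it pulls $h(y)$ out of the integral.
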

\begin{proof}[Proof of Lemma~\ref{lem: equivalence class}]
        Assume $f\sim g$. Then for any $(x, y) \in \mathrm{supp}(\lambda)$
        \begin{eqnarray*}
            f(x, y) - g(x, y) = \log Z_g(y) - \log Z_f(y).
        \end{eqnarray*}
        Since the right hand-side depends only on $y$ so does the left hand-side. Suppose now that $f = g + h$, where $h$ depends only on $y$ on $\mathrm{supp}(\lambda)$. Then
        \begin{eqnarray*}
            f(x, y) + \log Z_f(y) &=& g(x, y) + h(y) + \log \int_H \exp(-g(x, y) - h(y)) \mu(dx) \\
            & = & g(x, y) + \log \int_H \exp(-g(x, y)) \mu(dx),
        \end{eqnarray*}
        that is $f\sim g$.
    \end{proof}

The following lemmas ensure that $\norm{f - g}_\Phi$ is finite for  $f, g \in \F$ and will be used later. 
  \begin{lemma}\label{lem: subgaussian joint}
    There exists universal constants $C, \kappa>0$ such that for every $f \in \F$
    \begin{eqnarray*}
       \E^{\lambda_f} e^{\kappa (\norm{x}^2 + \norm{y}^2)}   < C
    \end{eqnarray*}
\end{lemma}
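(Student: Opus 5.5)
We write out the expectation under $\lambda_f$ using Definition~\ref{def: approximate posterior}. Since $\pi_f(dy)\,e^{-f}/Z_f(y)\,\mu(dx) = e^{-f(x,y)}\mu(dx)\,dy / \int_{\R^m} Z_f(y')\,dy'$, we get
\begin{eqnarray*}
\E^{\lambda_f} e^{\kappa(\norm{x}^2+\norm{y}^2)} = \frac{\int_{\R^m}\int_H e^{\kappa(\norm{x}^2+\norm{y}^2)} e^{-f(x,y)}\,\mu(dx)\,dy}{\int_{\R^m} Z_f(y)\,dy}.
\end{eqnarray*}
The plan is to bound the numerator from above and the denominator from below, with constants that depend only on $C_1$, $C_2^\pm$, the $\varepsilon$--$\delta$ relation in Assumption~\ref{ass: assumption}, and $\mu$, and not on $f\in\F$.

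For the numerator we use the lower bound in \eqref{eq: bounds on f}. We fix $\varepsilon = C_1/2$ and take the corresponding $\delta>0$. Then $e^{-f}\le e^{C_2^-}e^{\varepsilon\norm{x}^2}e^{-\delta\norm{y}^2}$, and we choose $\kappa = \min(C_1/2,\delta/2)$. The integrand is then bounded by $e^{C_2^-}e^{C_1\norm{x}^2}e^{-\delta\norm{y}^2/2}$. By Fubini and Tonelli the double integral factorizes into $e^{C_2^-}\,\E^\mu e^{C_1\norm{x}^2}\cdot\int_{\R^m}e^{-\delta\norm{y}^2/2}dy$. This is finite by Assumption~\ref{ass: prior} and the Gaussian integral, and it is uniform in $f$ because $\delta$ depends only on $\varepsilon$.

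For the denominator we use the upper bound $f\le C_2^+(1+\norm{x}^2+\norm{y}^2)$. This gives $Z_f(y)\ge e^{-C_2^+(1+\norm{y}^2)}\int_H e^{-C_2^+\norm{x}^2}\mu(dx)$. The $x$-integral is strictly positive because the integrand is strictly positive and $\mu$ is a probability measure. For a bit more concreteness one can restrict to a ball $B(0,R)$ of positive $\mu$-measure. Integrating in $y$ then gives $\int Z_f(y)\,dy \ge e^{-C_2^+}(\pi/C_2^+)^{m/2}\int_H e^{-C_2^+\norm{x}^2}\mu(dx) > 0$, again uniformly in $f$. The two-sided finiteness and positivity of $\int Z_f\,dy$ is also what Lemma~\ref{lem: Z_fy bounds} is cited for, so we can either invoke it or use this direct argument. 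The quotient is then bounded by a constant $C$ independent of $f$.

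The only delicate point is the interplay of constants in the numerator. The $\varepsilon\norm{x}^2$ loss in the lower bound on $f$ must be absorbed by the prior's sub-Gaussian tail, which requires $\kappa+\varepsilon\le C_1$. Also, $\kappa$ must be chosen after $\delta$, and $\delta$ is only known to exist once $\varepsilon$ is fixed. Fixing $\varepsilon=C_1/2$ first resolves this ordering and makes $\kappa$ universal. Measurability and the use of Tonelli are routine, since all integrands are non-negative.
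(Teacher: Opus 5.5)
Your proof is correct and rests on the same ingredients as the paper's. The lower bound in \eqref{eq: bounds on f} together with Assumption~\ref{ass: prior} controls the unnormalized mass from above, and the upper bound gives a uniform positive lower bound on $\int_{\R^m} Z_f(y)\,dy$, as in Lemma~\ref{lem: Z_fy bounds}. The only difference is organizational: the paper first decouples $x$ and $y$ by Cauchy--Schwarz and bounds $\E^{\lambda_f}e^{2\kappa\norm{x}^2}$ and $\E^{\lambda_f}e^{2\kappa\norm{y}^2}$ separately, whereas you bound the joint integrand in one step after writing $\lambda_f(dx,dy)=e^{-f(x,y)}\mu(dx)\,dy/\int_{\R^m} Z_f(y')\,dy'$, which is shorter and makes the $f$-independence of the constants fully explicit.
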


 \begin{lemma}\label{lem: lambda bounded by Lp}
    There exists $p_0>1$ such that for all $p\ge p_0 $ and fixed $h \in \F$, there exists a universal constant $ C > 0$ and for any $f, g \in \F$ it holds
    \begin{eqnarray*}
        \E^{\pi_h} \left| \log Z_f(y) - \log Z_g(y) \right| \le C \norm{f-g}_{L^p\left(\lambda_h(\cdot, \cdot)\right)}
    \end{eqnarray*}
\end{lemma}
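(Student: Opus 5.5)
Fix $y$ and write $Z_f(y)=\int_H e^{-f(x,y)}\mu(dx)$. The plan is to bound $|\log Z_f(y)-\log Z_g(y)|$ pointwise in $y$ by a posterior average of $|f-g|$, and then move from these posterior averages to the joint measure $\lambda_h$ with a change of measure and H\"older's inequality. The sub-Gaussian control of Lemma~\ref{lem: subgaussian joint} handles the resulting density ratio.

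\textbf{Pointwise step.} Consider the path $f_t=g+t(f-g)$ for $t\in[0,1]$. It lies in $\F$ by convexity, so Assumption~\ref{ass: assumption} applies uniformly along it. Differentiating under the integral (justified by the quadratic growth bounds \eqref{eq: bounds on f} and Assumption~\ref{ass: prior}) gives
\[
\log Z_f(y)-\log Z_g(y)=-\int_0^1 \E^{\mu^y_{f_t}}[f-g]\,dt .
\]
Hence $|\log Z_f(y)-\log Z_g(y)|\le \int_0^1 \E^{\mu^y_{f_t}}|f-g|\,dt$. A simpler alternative avoids the path: Jensen gives $\log Z_f-\log Z_g=\log \E^{\mu^y_g}e^{-(f-g)}\ge -\E^{\mu^y_g}(f-g)$, and swapping the roles of $f$ and $g$ yields
\[
|\log Z_f(y)-\log Z_g(y)|\le \E^{\mu^y_f}|f-g|+\E^{\mu^y_g}|f-g|.
\]
I will use this two-term bound.

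\textbf{Change of measure.} Integrating over $\pi_h$ gives terms of the form $\int |f-g|\,\pi_h(y)\,\mu^y_f(dx)\,dy$. This is an integral of $|f-g|$ against the measure $\pi_h(dy)\mu^y_f(dx)$, which differs from $\lambda_f$ only in its $y$-marginal. Its density with respect to $\lambda_h$ is
\[
w_f(x,y)=\frac{e^{-f(x,y)}/Z_f(y)}{e^{-h(x,y)}/Z_h(y)} .
\]
By H\"older with $1/p+1/q=1$, each term is at most $\norm{f-g}_{L^p(\lambda_h)}\,\norm{w_f}_{L^q(\lambda_h)}$. It therefore remains to show $\sup_{f\in\F}\E^{\lambda_h} w_f^q<\infty$ for $q$ close to $1$, which is exactly why the statement requires $p\ge p_0$.

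\textbf{Main obstacle: bounding $w_f$.} Using \eqref{eq: bounds on f}, the factor $e^{h-f}$ is at most $e^{C(1+\norm{x}^2+\norm{y}^2)}$. The factor $Z_h(y)/Z_f(y)$ needs a lower bound on $Z_f(y)$. Restricting the integral to the ball $B(0,R)$ of positive prior mass and using the upper bound in \eqref{eq: bounds on f} should give $Z_f(y)\ge c\,e^{-C\norm{y}^2}$; the precise form of this bound is what Lemma~\ref{lem: Z_fy bounds} presumably provides. Together, $w_f\le C e^{C'(\norm{x}^2+\norm{y}^2)}$. Then
\[
\E^{\lambda_h}w_f^q=\E^{\lambda_h}\bigl[w_f\cdot w_f^{q-1}\bigr]\le C\,\E^{\pi_h(dy)\mu^y_f(dx)}\,e^{(q-1)C'(\norm{x}^2+\norm{y}^2)} .
\]
Take $q-1$ small enough that $(q-1)C'<\kappa$, where $\kappa$ is the exponent from Lemma~\ref{lem: subgaussian joint}. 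This fixes $p_0$ via $1/p_0=1-1/q$. The remaining difficulty is that this measure mixes the $y$-marginal $\pi_h$ with the conditional $\mu^y_f$, whereas Lemma~\ref{lem: subgaussian joint} concerns $\lambda_f$ and $\lambda_h$. I would bound the $x$-moment of $\mu^y_f$ for fixed $y$ by $e^{C\norm{y}^2}$, using the $\varepsilon$-lower bound on $f$ together with Assumption~\ref{ass: prior}. The remaining $y$-integral is then controlled by the sub-Gaussian tail of $\pi_h$, again with a small enough exponent. The constant depends on $h$ only through these uniform bounds, so it is universal over $f,g\in\F$, as the statement claims.
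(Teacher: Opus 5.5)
Up to the reduction to $\sup_{f\in\F}\E^{\lambda_h}w_f^q<\infty$, your argument is the paper's. The paper also changes measure to $\lambda_h$ via $d\mu^y_f/d\mu^y_h$, applies H\"older, and bounds this ratio uniformly in $L^q(\lambda_h)$ for $q$ near $1$. Your two-sided Jensen bound replaces the paper's path $k_t=g+t(f-g)$ and Leibniz rule, and it is correct. It is also slightly leaner: the density-ratio bound is then needed only at $f$ and $g$, not along $k_t$.

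The gap is in the last step, and as described it does not close. You bound the $x$-moment of $\mu^y_f$ using the lower bound on $f$ and the lower bound on $Z_f$ (Lemma~\ref{lem: Z_fy bounds}). This gives
\[
\E^{\mu^y_f}e^{a\norm{x}^2}\le \frac{e^{C_2^-}e^{-\delta\norm{y}^2}\,\E^{\mu}e^{(a+\varepsilon)\norm{x}^2}}{C^-e^{-C_2^+\norm{y}^2}}=K\,e^{(C_2^+-\delta)\norm{y}^2}.
\]
The $y$-exponent $C_2^+-\delta$ does not depend on $a=(q-1)C'$, so taking $q\to1$ does not make it small. Meanwhile $\pi_h$ is only known to decay like $e^{-\delta\norm{y}^2}$; it has Gaussian moments only of order $\kappa\le\delta/3$ (Lemma~\ref{lem: subgaussian joint}). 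So your bound is integrable against $\pi_h$ only if roughly $C_2^+<2\delta$. The assumptions give only $C_2^+\ge\delta$ and no control of the ratio $C_2^+/\delta$. Your phrase ``again with a small enough exponent'' is exactly the unjustified point.

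The missing idea is that the large factor $1/Z_f(y)\lesssim e^{C_2^+\norm{y}^2}$ may enter only with a power that vanishes as $q\to1$. Interpolation achieves this. Fix $b$ with $b+\varepsilon\le C_1$. For $a<b$, Jensen for the concave map $t\mapsto t^{a/b}$ gives
\[
\E^{\mu^y_f}e^{a\norm{x}^2}\le\bigl(\E^{\mu^y_f}e^{b\norm{x}^2}\bigr)^{a/b}\le K^{a/b}e^{(a/b)(C_2^+-\delta)\norm{y}^2}.
\]
The total $y$-exponent is then $O(q-1)$, and Lemma~\ref{lem: subgaussian joint} for $\pi_h$ bounds the integral uniformly in $f$.

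The paper gets the same effect differently. It writes $\pi_h(dy)\mu^y_f(dx)=\frac{\pi_h(y)}{\pi_f(y)}\lambda_f(dx,dy)$ and applies H\"older in $y$ with an exponent $r$ close to $1$. Then $1/Z_f$ appears only through $\pi_f^{1-r}$ in $\int\pi_h^r\pi_f^{1-r}\,dy$, which is the marginal-ratio lemma. The remaining factor is a $\lambda_f$-moment with exponent proportional to $q-1$.
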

By Lemma \ref{lem: lambda bounded by Lp} the metric $\norm{f-g}_\Phi$ is dominated by $\norm{f-g}_{L_p(\lambda)}$. Since $\lambda$ is sub-Gaussian by Lemma, \ref{lem: subgaussian joint}, the norm $\norm{f-g}_{L_p(\lambda)}$ is finite. Bounding the expected difference of the normalization terms $\log Z_f(y)$ will be crucial in the following.

\subsection{Main properties}

The first observation that we make, is that if we have access to the joint distribution $\lambda$ we can compute the KL-divergence in \eqref{eq: objective} without knowledge of the mathematical expression for $L$, up to additive constants (independent of $f$). This observation is used in practice for neural likelihood approximation in parametric classes or neural flows (see \cite{papamakarios2019sequantialneural, radev2020bayesflowlearningcomplexstochastic}).
\begin{lemma}\label{lem: basic lemma}
    Let Assumptions  \ref{ass: prior} and \ref{ass: assumption} hold. Then 
    \begin{eqnarray*}
         \E^\pi\KL{\mu^y}{\mu^y_f}
     =  \E^\pi \KL{\mu^y}{\mu} + \E^{\lambda}[f(x;y)] + \E^\pi[\log Z_f(y)]< \infty.
    \end{eqnarray*}
\end{lemma}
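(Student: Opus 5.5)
The identity is obtained by expanding the Radon--Nikodym derivative $d\mu^y/d\mu^y_f$ through the common reference measure $\mu$. The finiteness claim then follows by bounding each of the three terms separately using the growth conditions of Assumption~\ref{ass: assumption} and the sub-Gaussianity of Lemma~\ref{lem: subgaussian joint}.

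For fixed $y$, both $\mu^y$ and $\mu^y_f$ have strictly positive densities with respect to $\mu$, because $e^{-L}$ and $e^{-f}$ are positive and $0<Z(y),Z_f(y)<\infty$ by Lemma~\ref{lem: Z_fy bounds}. Hence $\mu^y\ll\mu^y_f$, and $\mu$-a.e.
\begin{eqnarray*}
\log\frac{d\mu^y}{d\mu^y_f}(x)=\log\frac{d\mu^y}{d\mu}(x)+f(x;y)+\log Z_f(y).
\end{eqnarray*}
Integrating against $\mu^y(dx)$ gives
\begin{eqnarray*}
\KL{\mu^y}{\mu^y_f}=\KL{\mu^y}{\mu}+\int_H f(x;y)\,\mu^y(dx)+\log Z_f(y).
\end{eqnarray*}
Next I take $\E^\pi$. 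Since $\lambda(dx,dy)=\mu^y(dx)\pi(dy)$ (disintegration of the joint law, which is $\lambda_L$ in Definition~\ref{def: approximate posterior}), the middle term becomes $\E^\lambda f$. Splitting the expectation of the sum into the sum of expectations is legitimate once each term is shown to be finite, or at least once no $\infty-\infty$ cancellation can occur. So the real work is integrability.

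\textbf{Term $\E^\lambda f$.} Since $f\ge 0$ and $f\le C_2^+(1+\norm{x}^2+\norm{y}^2)$, and $\lambda=\lambda_L$ with $L\in\F$, Lemma~\ref{lem: subgaussian joint} gives finite second moments, so $0\le\E^\lambda f<\infty$.

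\textbf{Term $\E^\pi\log Z_f(y)$.} Non-negativity of $f$ gives $Z_f\le 1$, so $\log Z_f\le 0$. For the lower bound I use Jensen:
\begin{eqnarray*}
\log Z_f(y)\ge -\int_H f(x;y)\,\mu(dx)\ge -C_2^+\bigl(1+\E^\mu\norm{x}^2+\norm{y}^2\bigr).
\end{eqnarray*}
Here $\E^\mu\norm{x}^2<\infty$ by Assumption~\ref{ass: prior}, and $\E^\pi\norm{y}^2<\infty$ by Lemma~\ref{lem: subgaussian joint}. Hence $\E^\pi|\log Z_f|<\infty$.

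\textbf{Term $\E^\pi\KL{\mu^y}{\mu}$.} This term is non-negative, and it equals
\begin{eqnarray*}
\E^\pi\KL{\mu^y}{\mu}=\E^\lambda[-L]-\E^\pi\log Z(y).
\end{eqnarray*}
Since $-L\le 0$, the first piece is at most $0$. The second piece, $-\E^\pi\log Z(y)$, is finite by the previous step applied to $f=L$. So this term lies in $[0,\infty)$.

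The left-hand side is also non-negative. It is therefore well defined and equal to the finite sum of the three terms.

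The main obstacle is the lower bound on $\log Z_f(y)$ that makes $\E^\pi\log Z_f$ finite; the Jensen argument above handles it cleanly using only the upper growth bound on $f$. One further point needs justification: that $\lambda$ coincides with $\lambda_L$, i.e.\ that the data marginal $\pi$ is $\pi_L$, so that Lemma~\ref{lem: subgaussian joint} applies to it. I take this as the modelling convention implicit in Definition~\ref{def: approximate posterior}.
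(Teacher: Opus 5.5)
Your proposal is correct and follows the paper's proof essentially step by step: you use the same expansion of $\log(d\mu^y/d\mu^y_f)$ through the reference measure $\mu$, the same appeal to Lemma~\ref{lem: subgaussian joint} for $\E^\lambda f<\infty$, and the same Jensen lower bound on $\log Z$ to control $\E^\pi\KL{\mu^y}{\mu}$. The only minor differences are that you bound $\E^\pi\log Z_f$ above by $0$ via $f\ge 0$ (the paper uses the lower growth bound in \eqref{eq: bounds on f}), and that you make the two-sided integrability of $\log Z_f$, and hence the absence of $\infty-\infty$, explicit where the paper leaves it implicit.
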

\begin{proof}[Proof of Lemma~\ref{lem: basic lemma}]
The decomposition is immediate from
    \begin{eqnarray*}
         \E^\pi \, \E^{\mu^y} \left[\log\left( \frac{d \mu^y}{d \mu^y_f}(x) \right) \right]  = \E^\lambda \left[\log\left( \frac{d \mu^y}{d \mu^y_f}(x) \right) \right]
         = \E^\pi \KL{\mu^y}{\mu} - \E^\lambda \left[\log \frac{d \mu^y_f}{d\mu} \right].
    \end{eqnarray*}
    We show that all quantities are finite. Since by Lemma~\ref{lem: subgaussian joint}, $\lambda = \lambda_L$ is sub-Gaussian and for $f \in \F$, it holds
   $\E^\lambda f(x, y) <  \infty$.
    By using the lower bound in \eqref{eq: bounds on f} together with \eqref{eq: sub gaussian prior},
    \begin{eqnarray*}
         \E^\pi \log \E^{\mu}\exp(-f(x, y))
        & \le & \E^\pi \log \left(\E^\mu \exp\left(C_2^- + C_1 \norm{x}^2 - \delta(C_1) \norm{y}^2 \right) \right) \\
        & = & C_2^- - \delta(C_1) \, \E^\pi \left[\norm{y}^2\right] + \log \left(\E^\mu \exp(C_1 \norm{x}^2) \right) < \infty.
    \end{eqnarray*}
    Finally, by using $L \ge 0$, the upper bound in \eqref{eq: bounds on f} and Jensen's inequality to the concave function $x\mapsto \log x$
    \begin{eqnarray*}
        \E^\pi \KL{\mu^y}{\mu} & = & -\E^\lambda  \left[L(x, y)\right] - \E^\pi \log \left[  \E^\mu \exp\left( -L(x, y)\right) \right]  \\
        & \le  & - \E^\pi \log \left[\E^\mu \exp\left(-C_2^+ -C_2^+ \norm{x}^2-C_2^+ \norm{y}^2\right)\right] \\
        & \le & C_2^+  (1+\E^\pi \norm{y}^2)  + C_2^+ \, \E^\mu \norm{x}^2. 
    \end{eqnarray*}
\end{proof}

Motivated by Lemma~\ref{lem: basic lemma}, we define an objective function $\Phi$ as
\begin{eqnarray}
\begin{aligned}\label{eq: Phi equation}
    &  \Phi: \F_\Phi \rightarrow \R,  \quad f \mapsto \E^\lambda [f(x;y)] + \E^\pi [\log Z_f(y)],
\end{aligned}
\end{eqnarray}
where $Z_f(y)$ is given in Equation~\eqref{eq:normalization}.
The following result is motivated by an analogous result in \cite{Helin2025OAE} for the covariance in a Gaussian likelihood with fixed mean. We now show that $\Phi$ is convex as a function of the NLL.
\begin{theorem}\label{thm: convex}
    Let Assumptions  \ref{ass: prior} and \ref{ass: assumption} hold. Then the function $\Phi$ given by \eqref{eq: Phi equation} is strictly convex.
\end{theorem}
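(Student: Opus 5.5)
The plan is to split $\Phi(f) = \E^\lambda[f(x;y)] + \E^\pi[\log Z_f(y)]$ into its two terms and treat them separately. The first term is linear in $f$, so all convexity comes from the log-normalizer. Both terms are finite on $\F$ by Lemma~\ref{lem: basic lemma}, and $\F$ is convex by Assumption~\ref{ass: assumption}. Hence for $f,g\in\F$ and $t\in(0,1)$ the combination $f_t := tf+(1-t)g$ lies in $\F$, and it suffices to compare $\log Z_{f_t}(y)$ with $t\log Z_f(y) + (1-t)\log Z_g(y)$ for each fixed $y$.

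For fixed $y$, apply Hölder's inequality with exponents $1/t$ and $1/(1-t)$ to $e^{-f_t} = (e^{-f})^t (e^{-g})^{1-t}$ in $L^1(\mu)$:
\begin{equation*}
Z_{f_t}(y) = \int_H e^{-tf(x;y)} e^{-(1-t)g(x;y)}\,\mu(dx) \le Z_f(y)^t\, Z_g(y)^{1-t}.
\end{equation*}
Taking logarithms and integrating against $\pi$ gives $\Phi(f_t)\le t\Phi(f)+(1-t)\Phi(g)$, which is convexity. Equivalently, $f\mapsto \log Z_f(y)$ is a log-moment-generating (log-partition) functional, and these are convex.

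For strictness we use that $\Phi$ is defined on the equivalence classes $\F_\Phi$. Equality in Hölder holds for a given $y$ iff $e^{-f(\cdot;y)}$ and $e^{-g(\cdot;y)}$ are proportional $\mu$-a.e. That is, $f(x;y)-g(x;y) = c(y)$ for $\mu$-a.e. $x$, with $c(y)=\log Z_g(y)-\log Z_f(y)$. Suppose $\Phi(f_t) = t\Phi(f)+(1-t)\Phi(g)$. Since the pointwise-in-$y$ inequality is integrated against $\pi$ and all quantities are finite, equality must hold for $\pi$-a.e. $y$. Then $f-g$ depends only on $y$ for $\lambda$-a.e. $(x,y)$, using that $\lambda(dx,dy)=\mu^y(dx)\pi(dy)$ with $\mu^y\ll\mu$. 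By Lemma~\ref{lem: equivalence class}, $f\sim g$, equivalently $\norm{f-g}_\Phi = 0$. So for distinct classes the inequality is strict, which is strict convexity on $\F_\Phi$.

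The main obstacle is the measure-theoretic bookkeeping in the equality case. Equality a.e. must be upgraded to the statement "on the support of $\lambda$" required by Lemma~\ref{lem: equivalence class}. One also has to check that $\lambda$-null sets are harmless for the metric $\norm{\cdot}_\Phi$, which is an $L^1(\lambda)$ quantity, so a.e. equivalence suffices. I would phrase strictness directly as $\norm{f-g}_\Phi>0 \Rightarrow$ strict inequality. If that equivalence needs justification, I would use the mutual absolute continuity of $\mu^y$ and $\mu$; this holds because $e^{-L}>0$.
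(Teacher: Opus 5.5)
Your proposal is correct and is essentially the paper's proof. You apply H\"older's inequality with exponents $1/t$ and $1/(1-t)$ to $(e^{-f})^t(e^{-g})^{1-t}$ under $\mu$ for each fixed $y$, and read strictness off the equality case ($e^{-f(\cdot;y)}$ proportional to $e^{-g(\cdot;y)}$, i.e.\ $f-g$ depending only on $y$). You also upgrade equality of the $\pi$-integrals to $\pi$-a.e.\ equality and phrase strictness as $\norm{f-g}_\Phi>0$ implying strict inequality. That is a sound refinement of the a.e.-versus-pointwise step the paper passes over with ``i.e.\ $f=g$ in $\F_\Phi$''.
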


\begin{proof}[Proof of Theorem~\ref{thm: convex}]
It holds for any $\alpha + \beta = 1, \alpha, \beta > 0$ and $f, g \in \F$ that
    \begin{eqnarray*}
         \Phi(\alpha f + \beta g) &=& \alpha \, \E^\lambda f(x, y)  + \beta \, \E^\lambda g(x, y)   + \E^\pi \log \E^\mu  \left[\exp \left( -f(x, y)  \right)^\alpha  \exp \left( - g(x, y)  \right)^\beta \right] \\
         & \le & \alpha \, \E^\lambda f(x, y)  + \beta \, \E^\lambda g(x, y)  \\
         & & + \alpha \, \E^\pi \left[\log \E^\mu  \exp \left( -f(x, y)  \right) \right] + \beta \, \E^\pi \left[\log \E^\mu  \exp \left( -g(x, y)  \right) \right]\\ 
         & =& \alpha \, \Phi(f) + \beta \, \Phi(g),
    \end{eqnarray*}
    where Hölder-inequality was used. We notice further that Hölder-inequality is an equality iff there exists $c(y)>0$ such that $\mu$-almost everywhere
    \begin{eqnarray*}
        \exp \left( -f(x, y)\right) = \exp \left( - g(x, y)  \right)c(y),
    \end{eqnarray*}
    i.e. $f = g$ in $\Phi_\F$.
\end{proof}

\begin{remark}
The convexity of $\Phi$ is a property of the objective as a functional of the
negative log-likelihood $f$, rather than of a particular parametrization.
Consequently, if $f$ is represented by a neural network, the optimization
problem in the network parameters remains non-convex. Nevertheless, strict
convexity implies that the population objective admits a unique minimizer in
$\mathcal{F}_\Phi$, providing an identifiability result that underlies the
consistency analysis in Section~\ref{sec: data driven}. Different
parameterizations may therefore have many local minima in parameter space while
still representing the same optimal likelihood function.
\end{remark}

\section{Consistency and data-driven methods}\label{sec: data driven}
In this section, we consider an approximation of the integral in the objective function with (finite) data. Our goal will be to show the consistency of the NLL approximation.
To this end, we recall the definition of consistency and a classical theorem from \cite{Vaart_1998}. 
For a given domain $\D$, let $\phi_N: \D \rightarrow \R$ be random functions and $\phi: \D \rightarrow \R$ a fixed function, denote by $f_0 := \argmin_{f \in \D} \phi(f)$ and let $d$ be a metric on $\D$. We say a sequence of estimators $\hat{f}_N$ is \emph{(asymptotically) consistent}, if $\hat f_N$ converges to $f_0$ in probability.
\begin{theorem}[Consistency; \cite{Vaart_1998}]\label{thm: consistency, Vaart}
    Assume there exists $f_0 \in \D$ such that for every $\varepsilon > 0$
    \begin{align}
        \inf_{d(f, f_0) > \varepsilon}  \phi (f) >& \phi(f_0). \label{eq: uniqueness condition}
    \end{align}
    and 
    \begin{align}
                \sup_{f \in \D}  \big| \phi_N(f) -& \phi(f) \big|  \rightarrow 0 \label{eq: glivenko cantelli condition} 
    \end{align}
    in probability.
    Then any $\hat f_N$ satisfying $\phi_N(\hat f_N) \le \inf_{f\in \D} \phi_N(f) + o_P(1)$ converges in probability to $f_0$, where $o_P(1)$ converges to zero in probability.
\end{theorem}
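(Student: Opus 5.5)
This is the classical argmin-consistency theorem for M-estimators. The plan is to prove it directly from its two hypotheses. The well-separatedness condition \eqref{eq: uniqueness condition} turns closeness of objective values into closeness in $d$. The uniform convergence \eqref{eq: glivenko cantelli condition} lets us compare $\phi$ at $\hat f_N$ with $\phi$ at $f_0$.

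Fix $\varepsilon>0$. By \eqref{eq: uniqueness condition} there is an $\eta = \eta(\varepsilon) > 0$ with $\phi(f) > \phi(f_0) + \eta$ whenever $d(f,f_0) > \varepsilon$. The inclusion of events
\begin{eqnarray*}
\{ d(\hat f_N, f_0) > \varepsilon \} \subset \{ \phi(\hat f_N) - \phi(f_0) > \eta \}
\end{eqnarray*}
therefore holds. It is enough to show that $\phi(\hat f_N) - \phi(f_0) \to 0$ in probability; more precisely, that its positive part is $o_P(1)$. Then $P(d(\hat f_N,f_0)>\varepsilon) \le P(\phi(\hat f_N)-\phi(f_0) > \eta) \to 0$.

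To bound $\phi(\hat f_N) - \phi(f_0)$, split it into three pieces:
\begin{eqnarray*}
\phi(\hat f_N) - \phi(f_0) = \big[\phi(\hat f_N) - \phi_N(\hat f_N)\big] + \big[\phi_N(\hat f_N) - \phi_N(f_0)\big] + \big[\phi_N(f_0) - \phi(f_0)\big].
\end{eqnarray*}
\begin{itemize}
\item The first and third brackets are each bounded in absolute value by $\sup_{f\in\D}|\phi_N(f)-\phi(f)|$, which is $o_P(1)$ by \eqref{eq: glivenko cantelli condition}.
\item For the middle bracket, use the near-minimizing property: $\phi_N(\hat f_N) \le \inf_{f\in\D}\phi_N(f) + o_P(1) \le \phi_N(f_0) + o_P(1)$. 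So the middle bracket is at most $o_P(1)$.
\end{itemize}
Summing gives $\phi(\hat f_N) - \phi(f_0) \le o_P(1)$. Since $\phi(\hat f_N) \ge \phi(f_0)$ (because $f_0$ minimizes $\phi$), this yields $|\phi(\hat f_N) - \phi(f_0)| = o_P(1)$, which closes the argument.

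The main obstacle is measurability rather than analysis. The supremum in \eqref{eq: glivenko cantelli condition} and the event $\{d(\hat f_N,f_0)>\varepsilon\}$ need not be measurable for general random functions on an abstract domain $\D$. I would handle this as in \cite{Vaart_1998}: interpret probabilities as outer probabilities and convergence in probability in the outer sense. The inequalities above involve only monotone set inclusions and sums of quantities that are $o_P(1)$ in outer probability, so they carry over, using subadditivity of outer probability.

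A second minor point is that the infimum in \eqref{eq: uniqueness condition} may fail to be attained. This is why I extract a strict margin $\eta$ from the strict inequality rather than a minimizer.
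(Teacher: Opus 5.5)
Your argument is correct and is exactly the standard proof of this result (van der Vaart, Theorem 5.7), which the paper invokes by citation without reproducing it: you extract a margin $\eta>0$ from the well-separation condition, then bound $\phi(\hat f_N)-\phi(f_0)$ from above by $2\sup_{f\in\D}|\phi_N(f)-\phi(f)|$ plus the near-minimization slack. Your closing remark that $\phi(\hat f_N)\ge\phi(f_0)$ is harmless but not needed, since only the positive part of $\phi(\hat f_N)-\phi(f_0)$ enters the event inclusion.
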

The addition of the term $o_P(1)$ in the preceding theorem allows for incomplete minimization of the objective. For example one could choose the term $1/N$, where $N$ is the data size and stop minimization once in a $1/N$ neighbourhood of the minimum.

Our goal is to use the preceding theorem for a suitable data-driven estimator $\Phi_N$ of $\Phi$. To this end, we first make some necessary assumptions.

\subsection{Assumptions}
\begin{definition}
     For $s >0$ integer and $\beta \in \R$, let $\F$ be a bounded subset of 
    \begin{eqnarray*}
        C^s(\R^{n + m}, \beta) := \left\{ f: f (1+ \norm{z}^2)^{\beta/2} \in C^s(\R^{n+m}) \right\},
    \end{eqnarray*}
    where $C^s(\R^{n+m}) $ is the space for which
    \begin{eqnarray*}
        \norm{f}_{s, \infty} := \sum\limits_{0 \le |\alpha| \le s} \norm{D^\alpha f}_\infty < \infty
    \end{eqnarray*}
    is finite. 
\end{definition}

\begin{assumption}\label{ass: finite dimension}
    Fix the discretization $H = \R^n$ for some $\infty > n \in \Natural$ and let $\F$ be a bounded subset of functions in $C^1(\R^n \times \R^m, -2)$.
\end{assumption}

\begin{example}
Assume that the assumptions of Example~\ref{cor: assumptions} hold for $\mathcal{A}$ and $\mathcal{G}$, let $H = \R^n$ and
\begin{itemize}
    \item[(B4)] (Differentiability) Any $A\in \mathcal{A}$ is differentiable and there exists $C_6>0$ such that
        \begin{eqnarray*}
            \norm{\nabla_x A(x)} \le C_6 \norm{x}.
        \end{eqnarray*}
\end{itemize}
Then the set 
    \begin{eqnarray*}
        \F_{\text{calibrated}} =\left\{f: (x, y) \mapsto \frac{1}{2} \norm{A(x) -y }_\Gamma^2 \bigg| A \in \mathcal{A}, \Gamma \in \mathcal{G}\right\}
    \end{eqnarray*}
    is contained in a set of function that satisfies Assumptions~\ref{ass: assumption} and \ref{ass: finite dimension}.
\end{example}

\subsection{Data-driven methods}
Now we are ready to present a data–driven estimator $\Phi_N$ of $\Phi$. To this end, we assume that the acquisition of joint data is restricted by computational cost or physical constraints and we can only acquire a limited ensemble $(x_i, y_i)_{i = 1}^{N} \sim \lambda$ of $N$ independent, identically distributed (iid) (supervised) training data. We further assume that data from the prior is obtained easily and in abundance so for each $y_i$, we can get an iid (unsupervised) training data $(x_{i, j})_{j =1}^{M(N)} \sim \mu$ of size $M(N)$ for a coercive function $M: \Natural \to \Natural$.
We introduce a double loop Monte-Carlo estimator for the term $\Lambda(f):= \E^\pi \log \E^\mu e^{-f(x, y)}$, which is approximating the expectation with respect to the marginal $\pi$ of $y$ and the prior $\mu$, by
\begin{eqnarray*}
    \Lambda_{N}(f) := \frac{1}{N} \sum_{i=1}^N \log Z_f^{M(N)}(y_i), \qquad Z_f^M(y_i) := \frac{1}{M} \sum_{j=1}^{M} e^{-f(x_{i, j}, y_i)}.
\end{eqnarray*}
An empirical estimator $\Phi_N$ of $\Phi$ is given by
\begin{eqnarray*}
    && \Phi_N: \F_\Phi \rightarrow \R, \quad
    f   \mapsto \frac{1}{N}\sum_{i = 1}^{N} f(x_i, y_i)  + \Lambda_N(f),
\end{eqnarray*}  
Almost sure convergence of the first summand is straight forward from the law of large numbers. Convergence properties of nested Monte Carlo  are a lot more involved (see the discussions in \cite{rainforth18onnestiting, bartuska2025doublelooprandomizedquasimontecarlo}). In particular, we leave the choice of an optimal function $M$ open.
The following theorem establishes convergence in probability of the nested Monte-Carlo estimator.
\begin{theorem}\label{lem: conv in p}
    Let $f \in \F$ and Assumptions~\ref{ass: prior} and~\ref{ass: assumption} hold. Further suppose $(x_i, y_i)_{i = 1}^{N} \sim \lambda$ and $((x_{i, j})_{j=1}^{M(N)})_{i =1}^{N}$ be iid. Then the convergence
    \begin{eqnarray*}
        \lim_{N \rightarrow \infty}\Lambda_{N}(f) = \Lambda(f)
    \end{eqnarray*}
    holds in probability.
\end{theorem}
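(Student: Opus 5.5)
We split the error as
\begin{eqnarray*}
\Lambda_N(f) - \Lambda(f) = \frac{1}{N}\sum_{i=1}^N \left[\log Z_f^{M(N)}(y_i) - \log Z_f(y_i)\right] + \left[\frac{1}{N}\sum_{i=1}^N \log Z_f(y_i) - \E^\pi \log Z_f(y)\right].
\end{eqnarray*}
The second bracket is an ordinary empirical mean of the iid variables $\log Z_f(y_i)$, $y_i\sim\pi$. We first check $\E^\pi|\log Z_f(y)|<\infty$. The upper bound was shown in the proof of Lemma~\ref{lem: basic lemma}. For the lower bound, the upper bound in \eqref{eq: bounds on f} and Jensen give $\log Z_f(y)\ge -C_2^+(1+\E^\mu\norm{x}^2+\norm{y}^2)$, and this is integrable because $\pi$ has Gaussian moments by Lemma~\ref{lem: subgaussian joint}. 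The weak law of large numbers then sends the second bracket to zero in probability.

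For the first bracket it suffices to show that $\E|D_i|\to0$ as $M\to\infty$, where $D_i:=\log Z_f^{M}(y_i)-\log Z_f(y_i)$. The $D_i$ are identically distributed, so $\E|\frac1N\sum_i D_i|\le \E|D_1|$, and Markov's inequality gives convergence in probability since $M(N)\to\infty$. We work conditionally on $y=y_1$. The $e^{-f(x_{1,j},y)}$ are iid, with mean $Z_f(y)$ and values in $(0, e^{C_2^-+\varepsilon\norm{x}^2-\delta\norm{y}^2}]$. Taking $\varepsilon\le C_1/2$, their second moment is finite by Assumption~\ref{ass: prior}. Hence $Z_f^M(y)\to Z_f(y)>0$ almost surely, and $\log Z_f^M(y)\to\log Z_f(y)$ almost surely by the strong law and continuity of $\log$. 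This gives pointwise convergence of $D_1$, and it remains to upgrade it to $L^1$ convergence by uniform integrability.

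Uniform integrability is the main obstacle, since $\log$ is unbounded near $0$ and $Z_f^M$ may be small. We control each side separately.

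\emph{Upper side.} By Jensen, $\E[\log Z^M_f(y)\mid y]\le\log Z_f(y)$. Moreover $(\log Z_f^M(y))_+\le (C_2^-+\frac{C_1}{2}\max_j\norm{x_{1,j}}^2)_+$. For a cleaner bound we use $(\log u)_+\le u$ and $\E[(Z_f^M)^2\mid y]\le e^{2C_2^-}\E^\mu e^{C_1\norm{x}^2}$. This second moment is uniform in $M$ and $y$, so the positive parts are uniformly integrable.

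\emph{Lower side.} We apply Jensen in the other direction, $\log Z_f^M(y)\ge \frac1M\sum_j -f(x_{1,j},y)\ge -C_2^+(1+\frac1M\sum_j\norm{x_{1,j}}^2+\norm{y}^2)$. So $(\log Z_f^M)_-$ is dominated by $C_2^+(1+\norm{y}^2+S_M)$, where $S_M$ is an empirical mean of $\norm{x}^2$. The family $\{S_M\}$ is uniformly integrable: empirical means of an $L^1$ iid sequence form a uniformly integrable family, and here $\norm{x}^2$ even has all moments. The term $\norm{y}^2$ is integrable.

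Combining both sides, $\{D_1\}_M$ is uniformly integrable, and Vitali's theorem gives $\E|D_1|\to0$, which completes the argument.
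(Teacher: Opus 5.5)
Your proposal is correct and follows essentially the same route as the paper. Both use the same split into the nested Monte Carlo error and an outer weak-law term, and both obtain uniform integrability of $\log Z_f^{M}(y_1)-\log Z_f(y_1)$ from $\log u\le u$ with an exponential prior moment on the upper side and from Jensen with the quadratic upper bound on $f$ on the lower side, before concluding by $L^1$ convergence and Markov's inequality. You also explicitly verify $\E^\pi|\log Z_f(y)|<\infty$ for the weak law, which the paper leaves implicit, and your upper side could be shortened by noting that $f\ge 0$ already forces $Z_f^M\le 1$.
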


\subsection{Consistency}
Our goal in this section is to establish consistency of a data-driven
approximate minimizer $f_N^*$ towards the true potential $L$. To do so we
verify the two hypotheses of Theorem~\ref{thm: consistency, Vaart} for
$\phi = \Phi$, $\phi_N = \Phi_N$ and $d = \norm{\cdot}_\Phi$.  
In Theorem~\ref{lem: uniqueness cond}, we verify the condition of a well-separated minimum \eqref{eq: uniqueness condition} and in Theorem~\ref{thm: M estimation} a uniform law of large numbers \eqref{eq: glivenko cantelli condition} for the empirical objective
$\Phi_N$. Combining the two then yields
the consistency result, Theorem~\ref{thm: M estimation ours}, which is the
main result of this section.

Since existence of an exact minimizer of $\Phi_N$ over $\F_\Phi$ cannot be
guaranteed in general, we work throughout with approximate minimizers $f_N^*$ satisfying
\begin{eqnarray}\label{eq: f_nstar}
    \Phi_N(f_N^*) \le \inf\limits_{f \in \F_{\Phi} } \Phi_N(f) +o_P(1),
\end{eqnarray}
where $o_P(1) \to 0$ in probability as $N \to \infty$ (recall from the
discussion after Theorem~\ref{thm: consistency, Vaart} that this allows,
for instance, minimization only up to tolerance $1/N$).
\begin{theorem}\label{lem: uniqueness cond}
Let Assumptions~\ref{ass: prior},~\ref{ass: assumption} and~\ref{ass: finite dimension} hold. Then for any $f \in \F_\Phi$
    \begin{eqnarray*}
        \inf\limits_{\norm{f-L}_\Phi > \varepsilon} \Phi(f) > \Phi(L).
    \end{eqnarray*}
\end{theorem}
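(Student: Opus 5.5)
Since $\Phi(f) - \Phi(L) = \E^\pi \KL{\mu^y}{\mu^y_f}$ by Lemma~\ref{lem: basic lemma}, the statement amounts to a quantitative lower bound on the expected KL-divergence in terms of $\norm{f-L}_\Phi$. The plan is to prove such a bound directly, instead of arguing abstractly through strict convexity (Theorem~\ref{thm: convex}). Strict convexity alone does not give a well-separated minimum on a non-compact set.

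Write $g_f(x,y) := f(x,y) + \log Z_f(y)$, so that $d\mu^y_f/d\mu = e^{-g_f}$ and $\norm{f-L}_\Phi = \E^\lambda|g_f - g_L|$. Then
\begin{eqnarray*}
\E^\pi\KL{\mu^y}{\mu^y_f} = \E^\lambda[g_f - g_L].
\end{eqnarray*}

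\textbf{Step 1 (Pinsker).} Apply Pinsker's inequality for each $y$ and then Jensen in $y$:
\begin{eqnarray*}
\Phi(f)-\Phi(L) \ge \frac12\Big(\E^\pi \norm{\mu^y-\mu^y_f}_{TV}\Big)^2 = \frac18\Big(\E^\pi \E^\mu |e^{-g_L}-e^{-g_f}|\Big)^2.
\end{eqnarray*}

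\textbf{Step 2 (density differences to potential differences).} Convert $L^1$ control of $e^{-g_L}-e^{-g_f}$ into $L^1(\lambda)$ control of $g_f-g_L$. By the mean value theorem, $|g_f-g_L| \le |e^{-g_L}-e^{-g_f}|\, e^{\max(g_f,g_L)}$. The factor $e^{g}$ is large only where the densities are tiny, and there $\lambda$ puts little mass. Assumption~\ref{ass: finite dimension} (bounded in $C^1(\R^{n+m},-2)$) gives $|g_f| \le C(1+\norm{x}^2+\norm{y}^2)$ uniformly in $f$. For $\log Z_f$ this uses the two-sided bounds of Assumption~\ref{ass: assumption} together with Lemma~\ref{lem: subgaussian joint}.

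Split $H\times\R^m$ into the ball $B_R=\{\norm{x}^2+\norm{y}^2\le R\}$ and its complement:
\begin{itemize}
\item On $B_R$, the weight $e^{\max(g_f,g_L)}$ is bounded by $e^{C(1+R)}$. Because $\lambda(dx,dy)=e^{-g_L}\mu(dx)\pi(dy)$, we have $\E^\lambda[\mathbf 1_{B_R}|g_f-g_L|] \le e^{C(1+R)}\,\E^\pi\E^\mu|e^{-g_L}-e^{-g_f}|$.
\item On $B_R^c$, Cauchy--Schwarz and the sub-Gaussian tail from Lemma~\ref{lem: subgaussian joint} give $\E^\lambda[\mathbf 1_{B_R^c}|g_f-g_L|] \le C' e^{-\kappa R/2}$, uniformly over $f\in\F$.
\end{itemize}

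\textbf{Step 3 (conclusion).} If $\norm{f-L}_\Phi>\varepsilon$, first choose $R$ so that $C'e^{-\kappa R/2}<\varepsilon/2$. Then $\E^\pi\E^\mu|e^{-g_L}-e^{-g_f}| \ge \tfrac{\varepsilon}{2}e^{-C(1+R)} =: \eta(\varepsilon)>0$, and Step 1 yields $\Phi(f)-\Phi(L)\ge \eta^2/8$. This bound is uniform in $f$, which proves the infimum inequality. Equality cases are consistent with Lemma~\ref{lem: equivalence class}, since $\norm{\cdot}_\Phi$ vanishes exactly on equivalence classes.

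\textbf{Main obstacle.} The hard step is the uniform quadratic growth bound on $g_f=f+\log Z_f$ in Step 2. The upper bound on $f$ is immediate. For $\log Z_f(y)$ we need two-sided bounds of order $1+\norm{y}^2$, uniformly in $f$. The lower bound on $f$ in \eqref{eq: bounds on f}, combined with Assumption~\ref{ass: prior}, gives $\log Z_f(y)\le C-\delta\norm{y}^2$. Jensen and the upper bound on $f$ give $\log Z_f(y)\ge -C(1+\norm{y}^2)$. I would first try to get both bounds this way, as in the proof of Lemma~\ref{lem: basic lemma}. If that fails, the fallback is to use the $C^1$ weighted bound directly on a fixed prior ball $B(0,R)$ of positive mass.
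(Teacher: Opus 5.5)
Your argument is correct, and it follows a genuinely different route from the paper's proof. The paper's proof is qualitative:
\begin{itemize}
\item Step~1 identifies $L$ as the unique minimizer, using $\Phi(f)-\Phi(L)=\E^\pi\KL{\mu^y}{\mu^y_f}$ and Lemma~\ref{lem: equivalence class}.
\item Step~2 gets total boundedness of $\F$ in $L^{p_0}(\lambda)$ from the finite bracketing numbers of Theorem~\ref{thm: bracketing number}. This is where Assumption~\ref{ass: finite dimension} enters. Compactness is then transferred to $\norm{\cdot}_\Phi$ via Lemma~\ref{lem: lambda bounded by Lp}.
\item Step~3 uses that $\Phi$ is $1$-Lipschitz in $\norm{\cdot}_\Phi$. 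Hence the infimum over the compact set $\{\norm{f-L}_\Phi\ge\varepsilon\}$ is attained, and so lies strictly above $\Phi(L)$.
\end{itemize}
You instead prove an explicit identifiability inequality. You combine Pinsker, the mean value theorem, and a truncation at $\norm{x}^2+\norm{y}^2\le R$ whose remainder is controlled by the sub-Gaussian tail of $\lambda$.

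Your route gives an explicit separation modulus. Choosing $R\asymp\kappa^{-1}\log(1/\varepsilon)$ makes $\Phi(f)-\Phi(L)\ge\eta(\varepsilon)^2/8$ with $\eta(\varepsilon)$ polynomial in $\varepsilon$. This is the kind of ingredient one would need for the convergence rates the paper leaves open. Your route also uses only Assumptions~\ref{ass: prior} and~\ref{ass: assumption} together with Lemma~\ref{lem: subgaussian joint}. It needs no bracketing, no compactness or closedness of $\F$ in $L^{p_0}(\lambda)$, and no Lemma~\ref{lem: lambda bounded by Lp}. In particular it does not need Assumption~\ref{ass: finite dimension}, so it holds on infinite-dimensional $H$. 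The paper's route is shorter because it reuses the bracketing machinery already needed for Theorem~\ref{thm: M estimation}, but it yields no rate.

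Three small remarks:
\begin{itemize}
\item The growth bound you flag as the main obstacle is immediate. Since $f\ge 0$, you have $Z_f\le 1$, hence $g_f\le f\le C_2^+(1+\norm{x}^2+\norm{y}^2)$. Jensen gives $\log Z_f(y)\ge -C_2^+(1+\E^\mu\norm{x}^2+\norm{y}^2)$. The $C^1$ assumption and the fallback are unnecessary.
\item On $B_R$ you also have to account for the density $e^{-g_L}$ of $\lambda$ with respect to $\mu\otimes\pi$. The relevant weight is $e^{-g_L}e^{\max(g_f,g_L)}=\max(1,e^{g_f-g_L})\le e^{2C(1+R)}$, which only changes the constant.
\item The middle equality in your Step~1 depends on the total-variation convention. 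The resulting bound with constant $1/8$ is nevertheless valid, and is weaker than sharp Pinsker.
\end{itemize}
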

\begin{theorem}\label{thm: M estimation}
    Let Assumptions~\ref{ass: prior},~\ref{ass: assumption} and~\ref{ass: finite dimension} hold.
    Then
    \begin{eqnarray*}
         \sup\limits_{f \in \F} |
         \Phi_N(f) - \Phi(f) | \rightarrow 0
    \end{eqnarray*}
    in probability.
\end{theorem}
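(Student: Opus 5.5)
We split $\Phi_N(f)-\Phi(f)$ into the data-fit term and the normalization term:
\begin{eqnarray*}
\sup_{f\in\F}|\Phi_N(f)-\Phi(f)| \le \sup_{f\in\F}\Big|\frac1N\sum_{i=1}^N f(x_i,y_i)-\E^\lambda f\Big| + \sup_{f\in\F}|\Lambda_N(f)-\Lambda(f)|,
\end{eqnarray*}
and show each supremum tends to zero in probability.

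\textbf{Data-fit term.} This is a Glivenko--Cantelli statement for the class $\F$ under $\lambda$. By Assumption~\ref{ass: finite dimension}, every $f\in\F$ has the form $f=(1+\norm{z}^2)\,g$ with $g$ in a bounded subset of $C^1(\R^{n+m})$. Hence $\F$ has the envelope $F(z)=C(1+\norm{z}^2)$, which lies in $L^1(\lambda)$ (indeed in every $L^p(\lambda)$) by Lemma~\ref{lem: subgaussian joint}. Bracketing numbers of weighted H\"older balls on $\R^d$ are available from \cite[Cor.~2.7.4]{Vaart_1998}-type results. Concretely, on a ball $B_R$ the $C^1$ bound makes the class uniformly bounded and equicontinuous, so finitely many $\varepsilon$-brackets suffice by Arzel\`a--Ascoli. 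Outside $B_R$ we use the single bracket $[-F,F]\mathbf 1_{B_R^c}$, whose $L^1(\lambda)$-size $\E^\lambda[F\mathbf 1_{\norm{z}>R}]$ is small for $R$ large by sub-Gaussianity. Thus $N_{[\,]}(\varepsilon,\F,L^1(\lambda))<\infty$ for every $\varepsilon$, and the bracketing Glivenko--Cantelli theorem gives almost sure uniform convergence.

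\textbf{Normalization term.} We insert the exact normalizers and write
\begin{eqnarray*}
\Lambda_N(f)-\Lambda(f) = \frac1N\sum_{i=1}^N\big(\log Z^{M}_f(y_i)-\log Z_f(y_i)\big) + \Big(\frac1N\sum_{i=1}^N \log Z_f(y_i)-\E^\pi\log Z_f\Big).
\end{eqnarray*}

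The second bracket is again a Glivenko--Cantelli problem, now for the class $\{y\mapsto \log Z_f(y):f\in\F\}$. From \eqref{eq: bounds on f} and Assumption~\ref{ass: prior} we get the two-sided bound $-C(1+\norm{y}^2)\le \log Z_f(y)\le C$, which provides an integrable envelope under $\pi$. For Lipschitz control in $f$, the elementary bound $|\log Z_f-\log Z_g|\le \sup_x|f-g|(x,y)$ transfers the brackets: a bracket $[l,u]$ for $f$ yields the bracket $[\log Z_u,\log Z_l]$ for $\log Z_f$ (since $Z_f$ is monotone decreasing in $f$). Its width is at most $\E^{\mu^y_{u}}(u-l)$, which we control via the weighted sup-norm brackets of the previous step and the exponential moments of Lemma~\ref{lem: subgaussian joint}.

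The first bracket, the nested Monte Carlo error uniformly in $f$, is the main obstacle. For fixed $y$, the ratio $Z^M_f(y)/Z_f(y)$ is an empirical mean over $\mu$ of the class $\{e^{-f(\cdot,y)}/Z_f(y)\}$. Our first attempt uses the same bracketing argument, now applied to $e^{-f}$, which is bounded by $e^{C_2^-+\varepsilon\norm{x}^2}$ and so has an envelope in $L^2(\mu)$ for small $\varepsilon$ by Assumption~\ref{ass: prior}. Together with a union-type bound over $i\le N$ this would give $\sup_f|Z^M_f(y_i)/Z_f(y_i)-1|\to0$ uniformly in $i$, provided $M(N)$ grows fast enough. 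Since $M$ is left open, we instead truncate: on the event $\norm{y_i}\le R$, for which the $y$-dependence is handled by the $C^1$ bounds, we use $|\log t|\le |t-1|/\min(t,1)$ and the lower bound $Z_f(y)\ge e^{-C(1+R^2)}$, together with an $L^1$ (Markov) estimate averaged over $i$, so that only $M(N)\to\infty$ is needed. The contribution of the region $\norm{y_i}>R$ is bounded by the envelope $C(1+\norm{y}^2)$. For $\log Z^M_f$ we also have $\log Z^M_f\le C_2^-+\varepsilon\max_j\norm{x_{i,j}}^2$, whose expectation grows only like $\log M$. Where needed we use a truncation of $\log$ at a level $\to\infty$ slowly. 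Letting $R\to\infty$ after $N\to\infty$ then concludes the argument.
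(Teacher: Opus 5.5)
Your proof is correct in outline but takes a genuinely different route from the paper's. The paper never separates the inner Monte Carlo error from the outer sampling error. It uses that $f\mapsto\Lambda_N(f)$, like $f\mapsto\Lambda(f)$, is monotone decreasing: pointwise $l\le f\le u$ gives $Z^M_u\le Z^M_f\le Z^M_l$. So the bracket sandwich $|\Theta_N(f)-\Theta(f)|\le|\Theta_N(u_k)-\Theta_N(l_k)|+|\Theta_N(l_k)-\Theta(l_k)|+|\Theta(l_k)-\Theta(u_k)|$ passes straight through the nested estimator. Uniformity then only needs two ingredients:
\begin{itemize}
\item the pointwise consistency of Theorem~\ref{lem: conv in p} at the finitely many bracket endpoints;
\item the population width bound $|\Lambda(u_k)-\Lambda(l_k)|\le C\norm{u_k-l_k}_{L^{p_0}(\lambda)}$ from Lemma~\ref{lem: lambda bounded by Lp}.
\end{itemize}
What you call the main obstacle, a uniform-in-$f$ analysis of the inner sample, is therefore unnecessary in the paper's route. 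Your decomposition needs three uniform laws instead: for $\F$ under $\lambda$, for $\{\log Z_f\}$ under $\pi$, and for the inner class under $\mu$. In exchange you obtain uniform control of the nested error itself, and you can use elementary Arzel\`a--Ascoli brackets instead of the density-ratio machinery behind Lemma~\ref{lem: lambda bounded by Lp}. Your data-fit step is fine.

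Your nested step is also simpler than you make it. Since $\F$ consists of non-negative functions, $0<e^{-f}\le 1$, so $\log Z^M_f\le 0$ and the inner class $\{e^{-f(\cdot,y)}:f\in\F\}$ has envelope $1$. For fixed $y$ this class has finite $L^1(\mu)$ bracketing numbers (equi-Lipschitz and bounded on balls, bracket $[0,1]$ outside). Hence $\sup_f|\log Z^M_f(y)-\log Z_f(y)|\to 0$, using $\inf_f Z_f(y)>0$ from Lemma~\ref{lem: Z_fy bounds}. The $f$-free envelope $C_2^+(1+\norm{y}^2+\frac1M\sum_j\norm{x_{i,j}}^2)$ gives uniform integrability exactly as in the paper's proof of Theorem~\ref{lem: conv in p}. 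Therefore $\E\sup_f|\log Z^{M}_f(y_1)-\log Z_f(y_1)|\to 0$, and Markov's inequality handles the average over $i$. You need no condition beyond $M(N)\to\infty$, no $\log M$ bound via $\max_j\norm{x_{i,j}}^2$, and no truncation of the logarithm.

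Two steps in your treatment of $\{\log Z_f\}$ are wrong as written.
\begin{itemize}
\item The width bound is reversed. Jensen gives $\log Z_l-\log Z_u=\log\E^{\mu^y_u}e^{u-l}\ge\E^{\mu^y_u}(u-l)$, so the correct upper bound is $\E^{\mu^y_l}(u-l)$.
\item The tail bracket $[-F,F]$ from your data-fit step breaks this. $Z_{-F}(y)=\E^\mu e^{F(x,y)}$ can be infinite, because the constant in $F=C(1+\norm{z}^2)$ may exceed $C_1$ in Assumption~\ref{ass: prior}. You must clip lower brackets at $0$, using non-negativity of $\F$.
\item Even after clipping, $1/Z_l(y)$ can grow like $e^{C_2^+\norm{y}^2}$. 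The small exponential moments of Lemma~\ref{lem: subgaussian joint} do not absorb this. So on $\norm{y}>R$ use the envelope of $\log Z_f$ rather than the transferred bracket, or prove the uniform law for $\{\log Z_f\}$ directly by Arzel\`a--Ascoli in $y$ on balls plus that envelope.
\end{itemize}
With these repairs your argument is complete.
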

Theorems~\ref{lem: uniqueness cond} and~\ref{thm: M estimation} verify
conditions \eqref{eq: uniqueness condition} and
\eqref{eq: glivenko cantelli condition} of
Theorem~\ref{thm: consistency, Vaart} with $f_0 = L$ (which is indeed the
unique minimizer of $\Phi$ over $\F_\Phi$, by
Lemma~\ref{lem: basic lemma} together with
Lemma~\ref{lem: equivalence class}). Theorem~\ref{thm: consistency, Vaart}
then applies directly and yields the following consistency guarantee for
the approximate minimizer $f_N^*$ defined in \eqref{eq: f_nstar}.
\begin{theorem}\label{thm: M estimation ours}
    Let $f_N^*$ be given by \eqref{eq: f_nstar} and let Assumptions~\ref{ass: prior},~\ref{ass: assumption} and~\ref{ass: finite dimension} hold. Then for every $\varepsilon > 0$
    \begin{eqnarray*}
        \mathbb{P}(\norm{f_N^* - L}_\Phi > \varepsilon) \rightarrow 0  \text{ as } N \rightarrow \infty,
    \end{eqnarray*}
    where the symbol $\mathbb{P}(A)$ denotes the probability of a measurable event $A$ under the probability measure $\lambda$.
\end{theorem}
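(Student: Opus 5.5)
The statement is a direct application of Theorem~\ref{thm: consistency, Vaart}. We take $\D = \F_\Phi$ with metric $d = \norm{\cdot}_\Phi$, $\phi = \Phi$, $\phi_N = \Phi_N$ and $f_0 = L$, and verify its hypotheses one at a time.

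\textbf{Step 1 (the minimizer is $L$).} By Lemma~\ref{lem: basic lemma}, $\Phi(f) = \E^\pi\KL{\mu^y}{\mu^y_f} - \E^\pi\KL{\mu^y}{\mu}$. The second term is finite and independent of $f$, and the first is nonnegative and vanishes at $f = L$. Hence $L$ minimizes $\Phi$ over $\F$, and $L \in \F$ by Assumption~\ref{ass: assumption}. Any other minimizer $f$ must satisfy $\mu^y_f = \mu^y$ for $\pi$-a.e.\ $y$. Then $f + \log Z_f = L + \log Z_L$ holds $\lambda$-a.e., so $f \sim L$ in the sense of Lemma~\ref{lem: equivalence class} and $\norm{f - L}_\Phi = 0$. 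Uniqueness also follows from the strict convexity in Theorem~\ref{thm: convex}.

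\textbf{Step 2 (well-separatedness).} Condition \eqref{eq: uniqueness condition} with $f_0 = L$ is exactly Theorem~\ref{lem: uniqueness cond}, which we invoke directly.

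\textbf{Step 3 (uniform convergence).} Condition \eqref{eq: glivenko cantelli condition} is Theorem~\ref{thm: M estimation}. It is stated as a supremum over $\F$, whereas the abstract theorem uses $\F_\Phi$. This is harmless, since both $\Phi$ and $\Phi_N$ are invariant under adding a function of $y$ alone. For $\Phi_N$, such an $h(y_i)$ enters the first sum with a plus sign and $\log Z_f^{M}(y_i)$ with a minus sign, so it cancels exactly. The supremum over representatives therefore equals the supremum over classes. The same invariance shows that the approximate minimizer condition \eqref{eq: f_nstar} is well posed on $\F_\Phi$.

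\textbf{Step 4 (conclusion).} With all hypotheses verified, Theorem~\ref{thm: consistency, Vaart} gives $f_N^* \to L$ in probability in $\norm{\cdot}_\Phi$. This is the claimed statement $\mathbb{P}(\norm{f_N^* - L}_\Phi > \varepsilon) \to 0$, where the probability is over the sampled data, both the supervised pairs and the prior samples.

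The only delicate point is measurability. The supremum in Step 3 and the event $\{\norm{f_N^* - L}_\Phi > \varepsilon\}$ may fail to be measurable. I would handle this as in van der Vaart, interpreting probabilities as outer probabilities. Alternatively, one can use the separability of the bounded $C^1$ class from Assumption~\ref{ass: finite dimension}, which makes the supremum a countable one. Beyond this, the proof is bookkeeping, because the substantive work sits in Theorems~\ref{lem: uniqueness cond} and~\ref{thm: M estimation}.
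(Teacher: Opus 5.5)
Your proposal is correct and matches the paper's argument. The paper likewise feeds Theorems~\ref{lem: uniqueness cond} and~\ref{thm: M estimation} into Theorem~\ref{thm: consistency, Vaart} with $f_0 = L$, using Lemma~\ref{lem: basic lemma} and Lemma~\ref{lem: equivalence class} to identify $L$ as the unique minimizer over $\F_\Phi$; your extra checks (invariance of $\Phi_N$ under adding functions of $y$, so the supremum over $\F$ equals the one over $\F_\Phi$, and the measurability caveat) are sound and make explicit what the paper leaves implicit.
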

To prove Theorem~\ref{thm: M estimation}, we need the following results on bracketing numbers.
\begin{definition}
    An $\varepsilon$-bracket $[l, u]$ in $L^p(\lambda)$ with $p\ge 1$ with functions $l \le u$ is defined as the set of all functions $f$ such that $l \le f \le u$ and $\int (u-l)^p d \lambda \le \varepsilon^p$. 
    The bracketing number $N_{[]}(\varepsilon, \F, L^p(\lambda))$ is the minimum number of $\varepsilon$-brackets needed to cover $\F$.
\end{definition}
We recall the following powerful result in \cite[Corollary 3]{NicklBracketing2007} on bracketing numbers for bounded subsets of Hölder spaces.
\begin{theorem}\label{thm: bracketing number}
   Let $s >0$ be integer, $p\ge 1$ and $\beta \in \R$. Let $\F$ be a bounded subset of $C^s(\R^{n+m}, \beta)$ and 
   \begin{eqnarray}\label{eq: moments for brackets}
       \norm{(1+ \norm{x}^2)^{(\gamma - \beta)/2}}_{p, \lambda}  < \infty
   \end{eqnarray}
   for some $\gamma > 0; \gamma \neq s$.
    Then 
    \begin{eqnarray*}
        \log N_{[]}(\varepsilon, \F, L_{p}(\lambda)) < C \varepsilon^{-(n+m)/\min({s, \gamma})}. 
    \end{eqnarray*}
    In particular, if Assumptions~\ref{ass: finite dimension} are satisfied, the bracketing number $N_{[]}(\varepsilon, \F, L_{p}(\lambda))$ is finite for every $p \ge 1$. (Choose any $\gamma>0$. Since $\lambda$ is sub-expontinental, the moments in  \eqref{eq: moments for brackets} are finite.)
\end{theorem}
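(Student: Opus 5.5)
The bound on $\log N_{[]}(\varepsilon, \F, L_p(\lambda))$ is \cite[Corollary 3]{NicklBracketing2007}, so I will not reprove it. I will only check that its hypotheses hold in our setting and then derive the ``in particular'' statement from it. The first step is to match conventions. In \cite{NicklBracketing2007} the weighted Hölder class consists of functions $f$ for which $f(1+\norm{z}^2)^{\beta/2}$ lies in a bounded subset of $C^s(\R^{n+m})$, where $z=(x,y)\in\R^{n}\times\R^m$. This is exactly the class $C^s(\R^{n+m},\beta)$ defined above. The weight in \eqref{eq: moments for brackets} is to be read as a function of the full variable $z=(x,y)$, and the $L_p(\lambda)$-norm is taken with respect to the joint law $\lambda=\lambda_L$. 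With these identifications the first part of the theorem is a direct citation.

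For the ``in particular'' part, Assumption~\ref{ass: finite dimension} gives $s=1$ and $\beta=-2$. Thus $\F$ is bounded in $C^1(\R^{n+m},-2)$, meaning its elements grow at most quadratically together with their first derivatives. This is consistent with the upper bound in \eqref{eq: bounds on f}. I will choose any $\gamma>0$ with $\gamma\neq 1$, for instance $\gamma=2$. Condition \eqref{eq: moments for brackets} then reduces to
\begin{eqnarray*}
\E^{\lambda}\left[(1+\norm{z}^2)^{p(\gamma+2)/2}\right]<\infty .
\end{eqnarray*}
To verify it, I apply Lemma~\ref{lem: subgaussian joint} with $f=L\in\F$, which is allowed by Assumption~\ref{ass: assumption}. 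This gives $\E^{\lambda}e^{\kappa\norm{z}^2}<C$. Since for every $q>0$ there is $C_q$ with $(1+t)^{q}\le C_q e^{\kappa t}$ for all $t\ge 0$, the polynomial moment above is bounded by $C_q C$ with $q=p(\gamma+2)/2$. It is therefore finite for every $p\ge 1$.

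Corollary 3 then yields $\log N_{[]}(\varepsilon,\F,L_p(\lambda))\le C\varepsilon^{-(n+m)/\min(1,\gamma)}<\infty$ for each fixed $\varepsilon>0$. Hence the bracketing number itself is finite for every $p\ge1$ and every $\varepsilon>0$.

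The only delicate point is the bookkeeping of conventions: the sign of $\beta$ in the weight, and the fact that the moment condition involves the whole variable $z=(x,y)$ rather than $x$ alone. If one got the sign wrong, the weight would decay instead of grow and the argument would fail to cover quadratically growing potentials. The probabilistic input is only the sub-Gaussianity of $\lambda$ from Lemma~\ref{lem: subgaussian joint}.
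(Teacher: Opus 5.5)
Your proposal is correct and follows the paper's own argument. The rate is taken directly from \cite[Corollary 3]{NicklBracketing2007}, and the ``in particular'' part comes from checking the weighted moment condition using the sub-Gaussianity of $\lambda=\lambda_L$ (Lemma~\ref{lem: subgaussian joint}). You are slightly more careful than the paper's parenthetical remark: you read the weight as a function of $z=(x,y)$, and you choose $\gamma\neq s=1$ (e.g.\ $\gamma=2$), a requirement the paper's ``choose any $\gamma>0$'' glosses over.
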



\begin{remark}
It is difficult to apply classical rate-of-convergence results for M-estimators
\cite{van1996weak} to our setting.
The difficulty is twofold. First, the second term in $\Phi_N$
contains the nested Monte Carlo estimator $\Lambda_N(f)$, which is
not a linear empirical average. Although
Theorem~\ref{lem: conv in p} proves convergence of $\Lambda_N(f)$,
it does not provide quantitative or uniform rates over
$\mathcal{F}_\Phi$. Second, the classical theory is based on
localized empirical processes controlled by $L^2(\lambda)$
bracketing entropy, whereas our analysis relies on
$L^p(\lambda)$ estimates (Lemma~\ref{lem: lambda bounded by Lp}) for
possibly large $p$. Extending the existing theory to this setting is
left for future work.
\end{remark}

\section{Numerical demonstration}\label{sec: numerics}
In this section we consider the deblurring problem and the problem of recovering the doping profile in a semiconductor. Our goal is to test the free-form, residual and calibrated residual approximations of the negative-log-likelihood as given in Section~\ref{s:NLL approximation}. In the first example, we further compare with  a GP regression of the likelihood. The GP assumes a squared exponential covariance with learnable length-scale $l\in \R$ and variance level $\sigma^2 \in \R$. To emphasize the learnable parameters, we denote by $f^\theta: \R^n\to \R$ or $g^\theta:\R^n \to \R^m$ a neural-network with learnable parameters $\theta$ and by $\Sigma_\theta \in \R^{m\times m}$ a learnable covariance matrix. The underlying output functions and assumptions made are summarized in Table~\ref{tab: NLL appproxs}. 

\begin{table}[h]
    \centering
        \caption{Parametrization and required data for the NLL approximations and the GP regression. Training the free-form and calibrated residual-approximation requires only joint data, residual approximation needs additionally the noise level and GP regression needs access to evaluations of the true NLL.}
    \label{tab: NLL appproxs}
    \begin{tabular}{c | c c c c}
         & Free-form & calibrated residual-  & residual- & GP  \\
         & approximation& approximation  & approximation& regression \\
         \hline
        output &  $f^\theta(x, y)$ & $\frac{1}{2} \norm{g^\theta(x) - y}^2_{\Sigma_\theta}$ & $\frac{1}{2 \delta^2} 
        \norm{g^\theta(x) - y}^2$ & GP(x, y) \\
        \hline 
        required & $(x, y)$- data & $(x, y)$- data  & $(x, y)$- data &  $(x, y)$- data \\
        data & & & and $\delta^2$ & and $L(x, y)$ \\
        \hline 
        learnable & $f^\theta$ & $g^\theta, \Sigma_\theta$ & $g^\theta$ &  $\sigma^2, l$ \\
        parameters &  &  & & 
    \end{tabular}
\end{table}
\subsection{Deblurring}
We consider a de-blurring problem as the first example for the numerical experiments. Define  the corresponding Bayesian inverse problem as following:
The unknown $x \in H$ is supported on the Hilbert space $H = L^2(D)$ with $D = [0, 1]$.
The prior is given by a GP $\N(0, C)$ with squared exponential kernel $C$:
\begin{eqnarray*}
    C(x, y) = \frac{1}{\sigma^2}\exp\left(-\frac{1}{2l^2}\norm{x-y}^2 \right), \quad x, y \in D, 
\end{eqnarray*}
where the parameter $\sigma = 0.1$ models the variance and $l =0.1$ the length-scale of the process.
The forward map $A: L^2(D) \rightarrow \R^m$ blurs any given unknown $x \in L^2(H)$ by computing the convolution
\begin{eqnarray*}
    \left( x \ast k \right)(t) = \int_{[0, 1]} x(s) k(t-s) \, ds
\end{eqnarray*}
for a given kernel function $k$. This convolution is then evaluated at a finite grid $(t_1, \cdots, t_d) \subset \R^m$ with $m = 50$:
\begin{eqnarray*}
    Ax=\begin{pmatrix}
    (x * k)(t_1) \\
    \vdots \\
    (x * k)(t_d)
    \end{pmatrix}\in \mathbb{R}^d .
\end{eqnarray*}
We choose $k$ to be the PDF of a Gaussian centred random variable with variance two. The measurement $y \in \R^m$ is obtained by corrupting $Ax$ with additive noise:
\begin{eqnarray*}
    y = Ax + \varepsilon,
\end{eqnarray*}
where observational noise $\varepsilon \sim \N(0, \delta^2 I_d)$ is Gaussian with white noise statistics and noise level $\delta = 0.1$. 
The free-form NLL approximation uses $4$ fully connected layers, the residual approximation $5$ fully connected layers and the calibrated residual approximation uses $5$ fully connected layers followed by a diagonal layer that approximates the noise level.

We deploy a Markov Chain Monte Carlo (MCMC) algorithm, namely the preconditioned Crank-Nicolson (pCN) \cite{NicklRichard2023Bnsi, Taghizadeh2025Bayesian} algorithm to sample from the posterior distributions. In the pCN algorithm, a proposal $v_n$ given a previous sample $u_n$ are generated according to
\begin{equation*}
    v_n = \sqrt{1-2\beta} u_n + \sqrt{2 \beta} \zeta \qquad \zeta \sim \mu,
\end{equation*}
where $\mu$ is a Gaussian prior and $\beta \in (0, 1)$ a hyperparameter. We choose $\beta = 0.2$ in this Section. The proposal $v_n =: u_{n+1}$ is then accepted with probability 
\begin{eqnarray*}
    p_n = \min\left(\exp(L(v_n, y)-L(u_n, y), 1 \right),
\end{eqnarray*}
where $L$ is the NLL.
In Figure~\ref{fig: NLL mcmc_for_deblurring}, we first analyse the different NLL approximations. We demonstrate that the posterior mean of the free-form, residual and calibrated residual approximations are very close to the ground truth and contained in a given confidence interval. The free-form approximation is slightly closer to the ground truth with slightly smaller confidence band compared to the residual approximation and calibrated residual approximations. The approximations have slightly lower accuracy than using the true posterior and wider confidence bands. The calibrated residual approximation is furthest from the ground truth.
\begin{figure}[h]
    \centering
    \begin{minipage}{0.23\textwidth}
        \centering
        Exact posterior \\
        \phantom{...} \\
        \includegraphics[width=\linewidth]{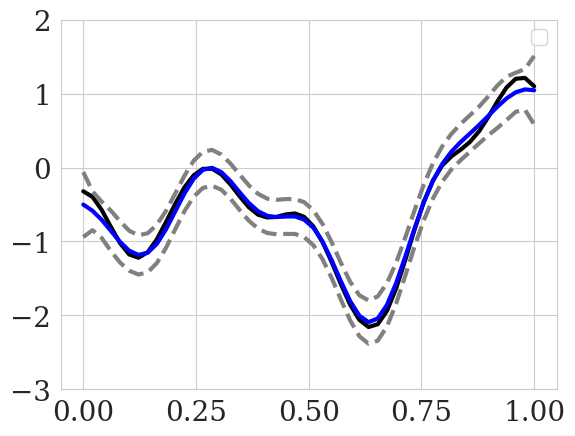}
    \end{minipage}%
    \hfill
    \begin{minipage}{0.23\textwidth}
        \centering
        Free-form approximation \\
        \includegraphics[width=\linewidth]{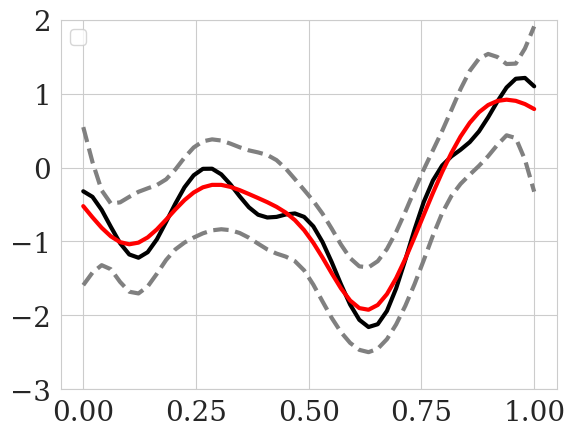}
    \end{minipage} 
    \hfill
        \begin{minipage}{0.23\textwidth}
        \centering
        Residual approximation \\
        \includegraphics[width=\linewidth]{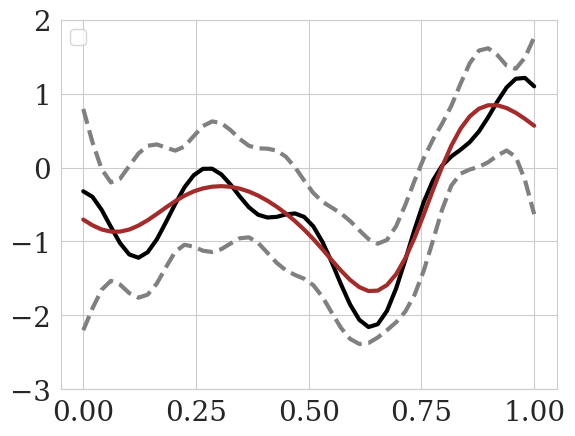}
    \end{minipage}%
        \hfill
        \begin{minipage}{0.23\textwidth}
        \centering
        Calibrated residual approximation \\
        \includegraphics[width=\linewidth]{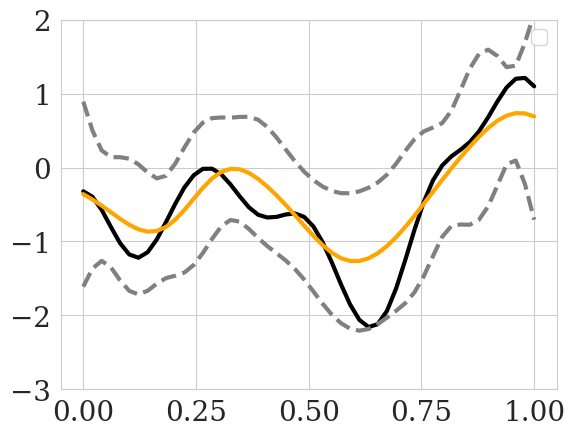}
    \end{minipage}%
    \caption{Posterior MCMC samples for the deblurring problem. From left to right: true posterior, free-form approximation, residual approximation and calibrated residual approximation. All methods use only $1,000$ training points.}
    \label{fig: NLL mcmc_for_deblurring}
\end{figure}

In Figure \ref{fig: mcmc_for_deblurring}, we illustrate that GP regression, even though it utilizes the additional information of evaluations of the true likelihood, struggles to accurately describe the likelihood especially with 1,000 training points. Even with 10,000 training points, it is less accurate than all of the NLL approximations.
\begin{figure}[h]
    \centering
    \begin{minipage}{0.3\textwidth}
        \centering
        Exact posterior \\
        \phantom{...} \\
        \includegraphics[width=\linewidth]{images/mcmc_exact.png}
    \end{minipage}%
        \hfill
        \begin{minipage}{0.3\textwidth}
        GP regression \\
        1,000 training points \\
        \centering
        \includegraphics[width=\linewidth]{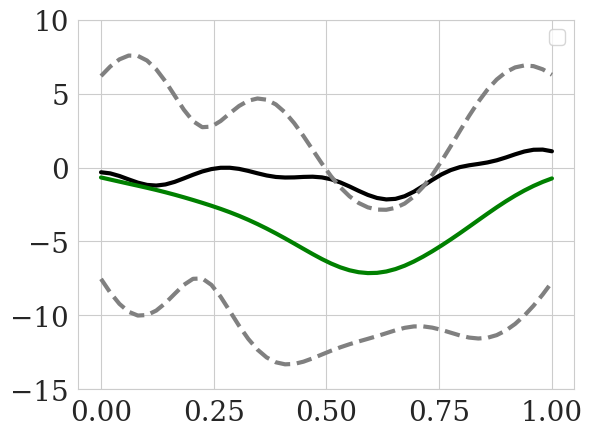}
    \end{minipage}%
    \hfill
    \begin{minipage}{0.3\textwidth}
        \centering
        GP regression \\
        10,000 training points \\
        \includegraphics[width=\linewidth]{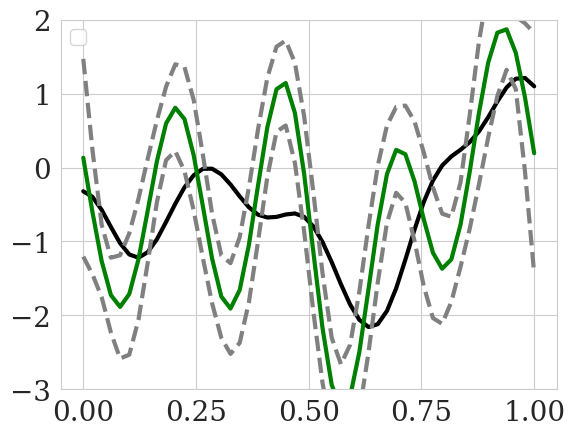}
    \end{minipage}%
    \caption{Posterior MCMC samples for the deblurring problem. From left to right: true posterior, GP regression with 1,000 training points and GP regression for the likelihood with 10,000 training points.}
    \label{fig: mcmc_for_deblurring}
\end{figure}
This experiment demonstrates that neural likelihood approximation is a highly data-efficient way to accurately approximate the likelihood in inverse problems. In settings where the observational noise is unknown, the free-form approximation seems preferable over a calibrated residual approximation based on this experiment.

\FloatBarrier
\subsection{Semiconductor devices}

A non-invasive method for estimation of the doping profile inside a semiconductor is the identification by voltage-current measurements. For a thorough introduction we refer to \cite{Taghizadeh2025Bayesian} and the references therein.
In short, the domain $\Omega = [-1, 1]^2$ models the cross-section of a $pn$-diode with Ohmic contacts $\Gamma_N = [-1, 1] \times\{ +1\}$ and $\Gamma_P = [-1, 1] \times\{ -1\}$. $\Omega$ is separated into the sets $\Omega_N$ and $\Omega_P$ by a known junction line $\Gamma$.
We assume that the unknown function $C$ is given by two independent Gaussian random fields $C_N, C_P$ separated by the junction line $\Gamma$:
\begin{eqnarray}\label{eq: piecewise constant C}
    C(x) = \begin{cases}
        C_N(x) & x \in \Omega_N \\
        C_p(x) & x \in \Omega_p.
    \end{cases}
\end{eqnarray}

\paragraph{Forward model}
We consider the following model based on the (close to) thermal equilibrium for  the electric potential $V_e$ given a concentration $C(x)$
\begin{equation}
    \begin{aligned}\label{eq: poisson equation}
    \lambda^2 \triangle V_e &= \delta^2 \left( e^{V_e} - e^{-V_e}\right) - C \quad  \text{ in } \Omega \\
    V_e &= V_{bi} \quad \text{ on } \partial \Omega_D \\
    \nabla V_e \cdot \nu & = 0 \quad \text{ on } \Omega_N.
    \end{aligned}
\end{equation}

Given $V_e$, we observe the quantity $\delta^2 e^{V_{bi}}\nabla \hat u \cdot \nu |_{\Gamma_P}$, where $\hat u$ solves the continuity equation
\begin{equation}\label{eq: continuity equation}
    \begin{aligned}
    \div\left( \gamma(x) \nabla \hat u(x) \right) & = 0 \text{ in } \Omega \\
    \hat u & = U \text{ on } \partial \Omega_D \\
    \nabla \hat u \cdot \nu & = 0 \text{ on } \partial \Omega_N
    \end{aligned}
\end{equation}
and $\gamma = e^{V_e}$. We fix the parameters $V_{bi} = 0.6$, $\lambda = \delta = 1$ and $U = 2$.

\paragraph{Prior model}
We model the spatial functions $C_N$ and $C_p$ as independent Gaussian fields with Matern–Whittle covariance operator given 
\begin{eqnarray*}
    K(x, y) = \sigma^2 \frac{2^{1-\nu}}{\Gamma(\nu)} \left(\frac{\norm{x-y}_2}{l}\right)^\nu K_\nu\left(\frac{\norm{x-y}_2}{l}\right),
\end{eqnarray*}
where $\Gamma$ denotes the Gamma function, $K_\nu$ the modified Bessel function and the parameters $\sigma, \nu, l$ denote the variance, smoothness and length scale of the process. We choose the parameters $\sigma = 2, l = 20$ and $\nu = 0.9$. 

\paragraph{Training}
Training data is given by $N = 10,000$ observations from the joint distribution of $\lambda$ with a resolution of $100 \times 100$ for $C$ and measurement resolved at $100$ pixels. The PDE is solved using a finite element solver in MATLAB and with a mesh-size of $0.1$. Generating the training data takes $\approx 5$ hours.
The boundary measurement is corrupted with additive noise of level $\sigma = 0.01$. We assume that the true forward model is unknown and test the free-form, residual and calibrated residual approximations, all of which can be employed without knowing the forward map. We use similar number of parameters for the underlying neural networks. The parametrizations utilize four convolution layers followed by four fully connected layers. The calibrated residual approximation finally has an additional diagonal layer to approximate the noise level.
Training is done for 30 epochs and takes roughly $120$ seconds per epoch.

The computational times using MATLAB's PDE solver and the NLL approximation are reported in Table~\ref{tab: computational times}. 
The CPU-based runtimes correspond to the same computational hardware, whereas the GPU-based NLL approximation is performed on a GPU cluster. For the approximations, we display computational runtimes for parallel evaluations, which could be used to generate many chains in parallel, which is however outside the scope of our numerical experiment. On the same hardware, the NLL approximations are $\approx 200$ times faster compared with the PDE model.
    \begin{table}[h]
        \centering
        \caption{Computation time in seconds for the NLL and its surrogates. Whenever possible, the 100 and 1000 evaluations are made in parallel. For the free-form, residual and calibrated residual parametrizations, computational times are essentially identical. The NLL approximations are at least $\approx 200$ times faster compared to the finite elements PDE model on the same device and up to $20,000$ time faster under parallelization on a GPU device.}
        \begin{tabular}{c|c c c c}
            & PDE solve &NLL approx. &  NLL approx.  \\
            & (CPU)  & (CPU) & (GPU) \\
            \hline
            single evaluation & 0.7008 & 0.0042 &0.0016 \\
            100   evaluations    & 70.0083 &  0.3972& 0.0036\\
            1000  evaluations  &  700.8333& 3.6880&0.0331 
        \end{tabular}
        \label{tab: computational times}
    \end{table}
\paragraph{MCMC sampling}
We fix a previously unseen pair $(x, y) \sim \lambda$ and use a preconditioned Crank-Nicolson (pCN) sampler to sample from the posterior conditional on the measurement $y$. In the pCN algorithm, we use a truncated KL-expansion to sample from the Gaussian random field. For all approaches, we generate $10,000$ samples sequentially, discard the first $5,000$ as burn-in and finally thin the chain by factor five. 

The computational effort for posterior sampling using the NLL approximation varies due to the underlying posterior variance and acceptance rate.  The computational times for the NLL approximation is $< 50$ minutes on CPU and $<10$ minutes on a GPU cluster.
In contrast, utilizing the discretized PDE model on a CPU device, posterior sampling takes $265$ hours.

For the NLL approximation, we require the offline computations of data generation and training, which together take less than 6 hours. In this time it would be possible to generate approximately 440 samples with the PDE model.

Posterior means, biases, and variances are summarized in Figure~\ref{fig: MCMCfull semiconductor}. We see that the biases and variances of the posterior with the true PDE model and the free-form and residual approximations are very similar. The spatial $L^1$ norm of the biases are given by $0.1807$ (exact), $0.1839$ (free-form) and $0.1954$ (residual), confirming that the bias of the true PDE model is extremely close to the free-form and residual approximations. In contrast, the calibrated residual approximation fails to generate accurate posterior variance and means.

This experiment demonstrates that neural likelihood approximation can lead to enormous computational acceleration in PDE-based, non-linear inverse problems. The posterior summary statistics, such as bias and variance, are highly accurate for the free-form and residual approximations. In settings where the observational noise level is unknown, the free-form approximation can recover the posterior with high fidelity. The approximation methods require only joint data for training and no evaluations of the true forward map or likelihood and are thus highly versatile.
\begin{figure}[ht!]
    \centering
    \begin{subfigure}[b]{0.32\textwidth}
        \centering
        True $C$
        \includegraphics[width=\textwidth]{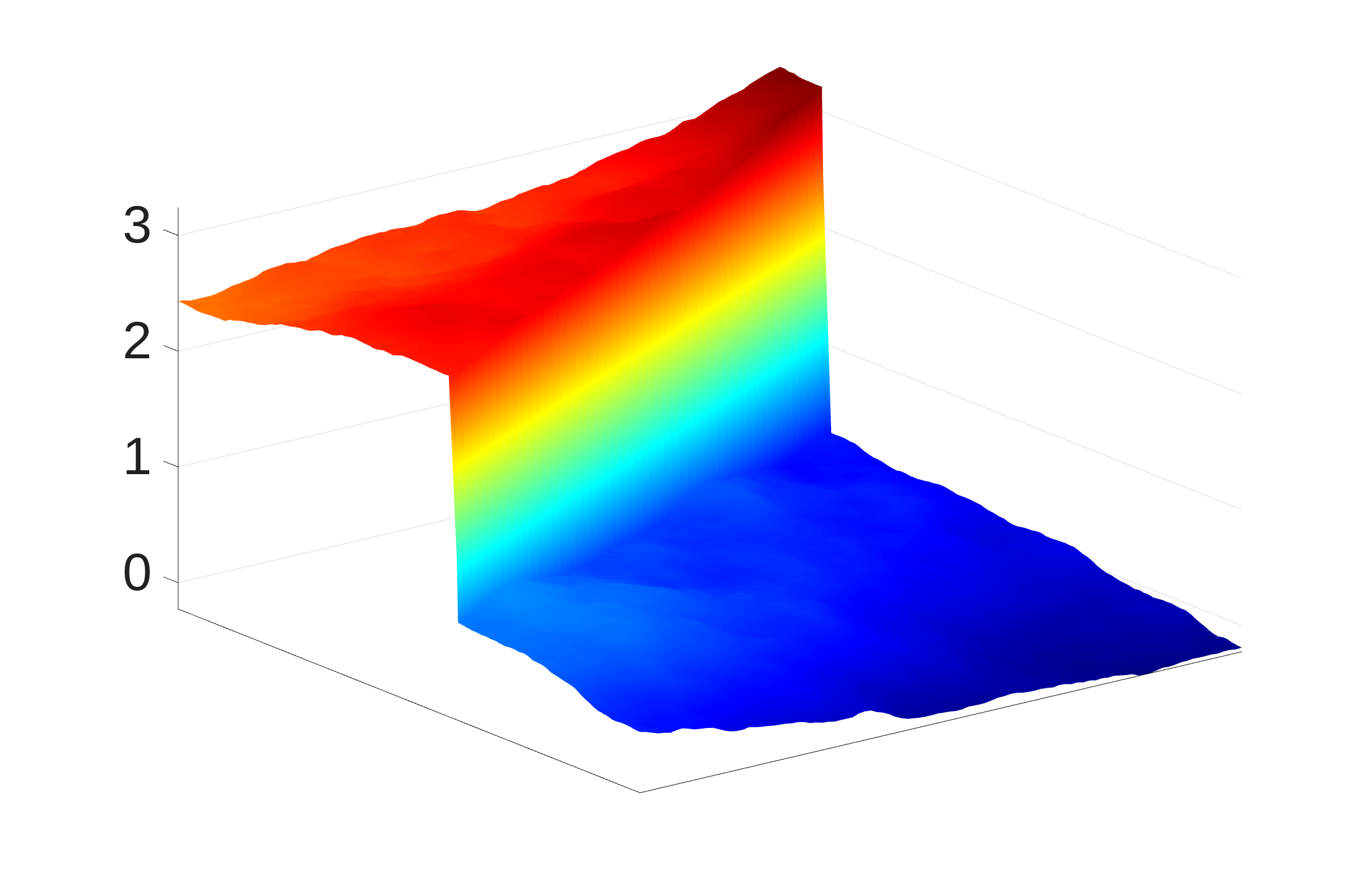}
    \end{subfigure} 
    \\
        \begin{subfigure}[b]{0.3\textwidth}
        \centering
        \includegraphics[width=\textwidth]{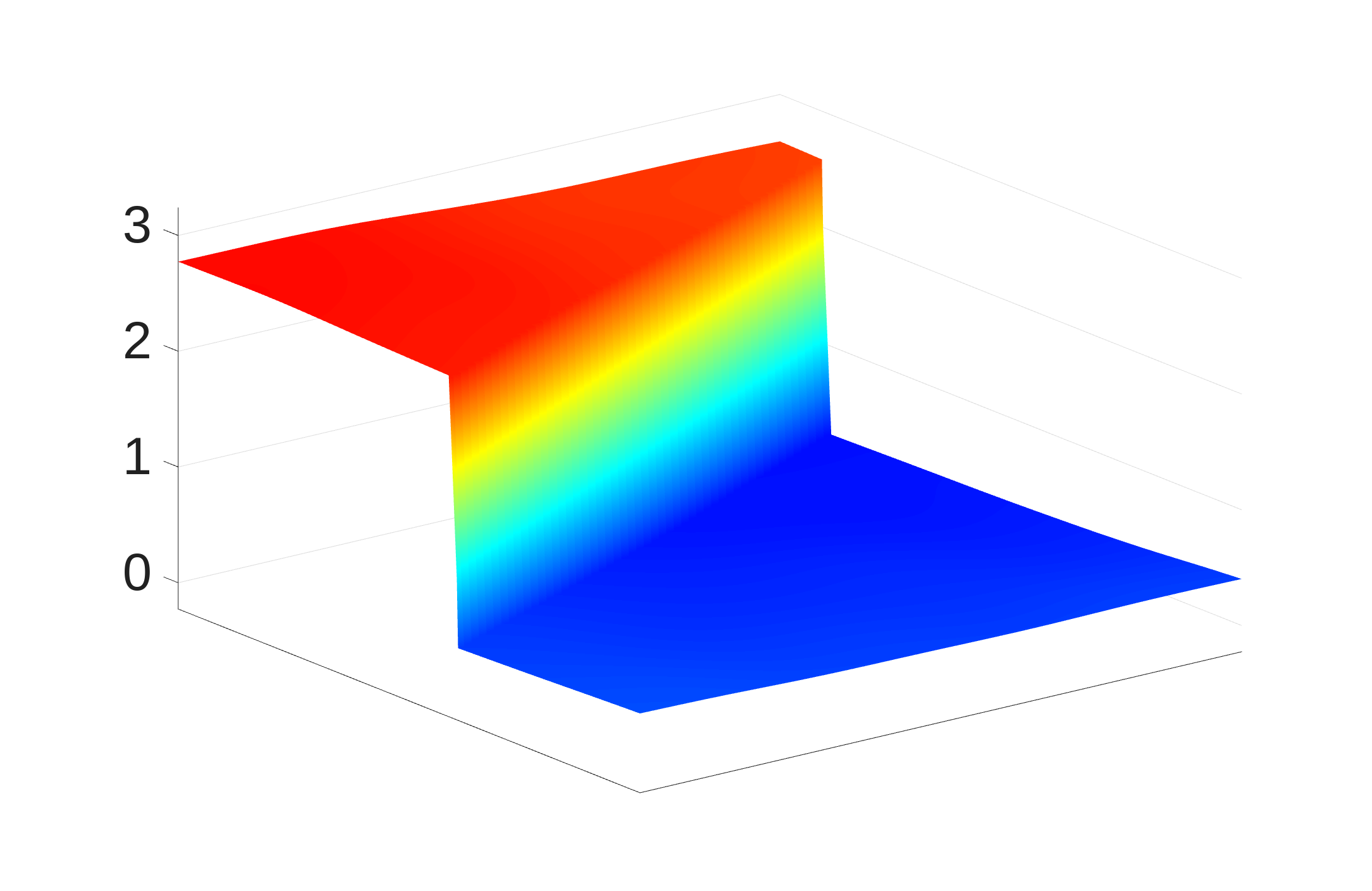}
    \end{subfigure}
            \begin{subfigure}[b]{0.3\textwidth}
        \centering
        Exact, PDE-based NLL\\
        \includegraphics[width=\textwidth]{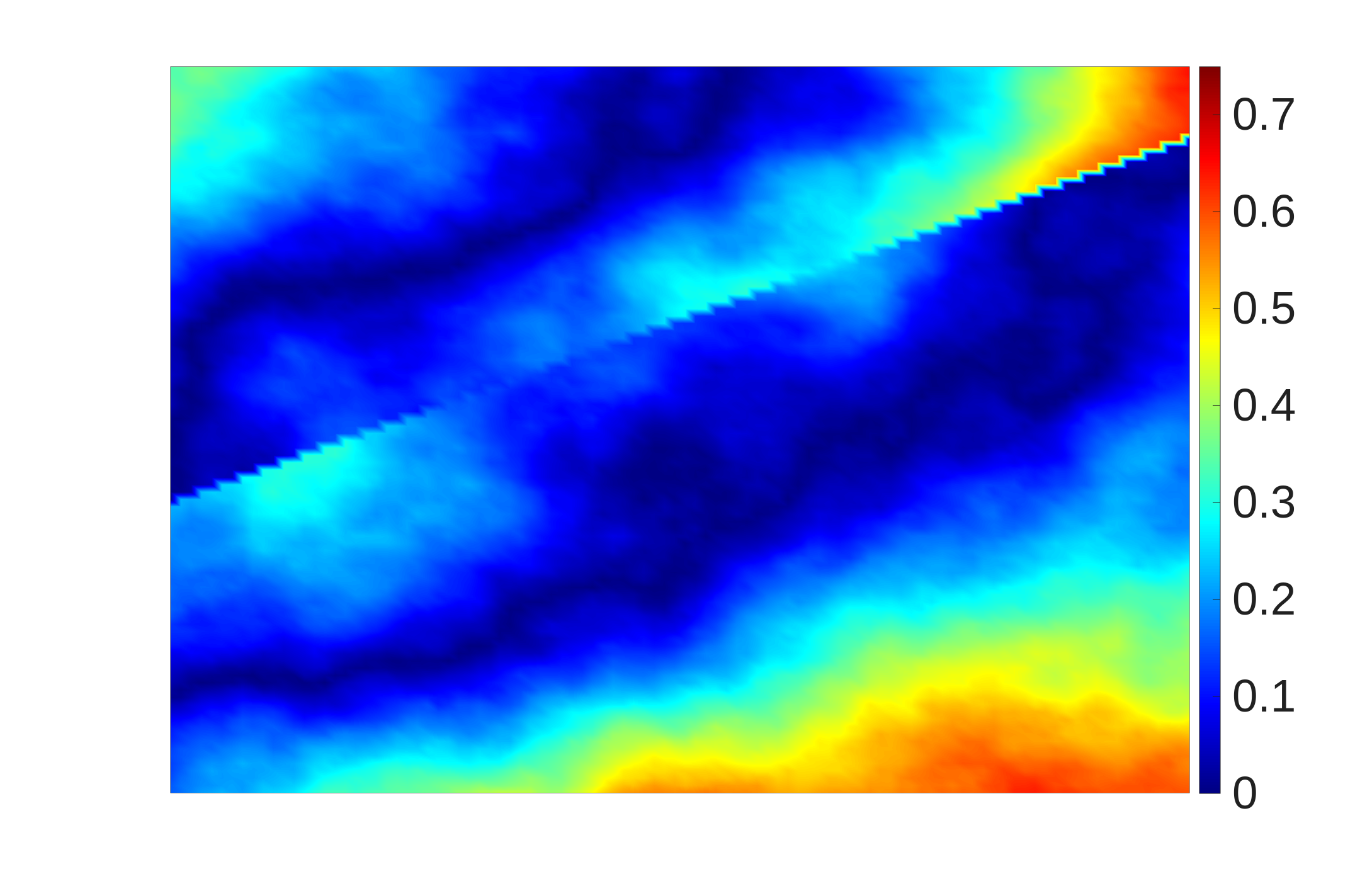}
    \end{subfigure}
    \begin{subfigure}[b]{0.3\textwidth}
        \centering
        \includegraphics[width=\textwidth]{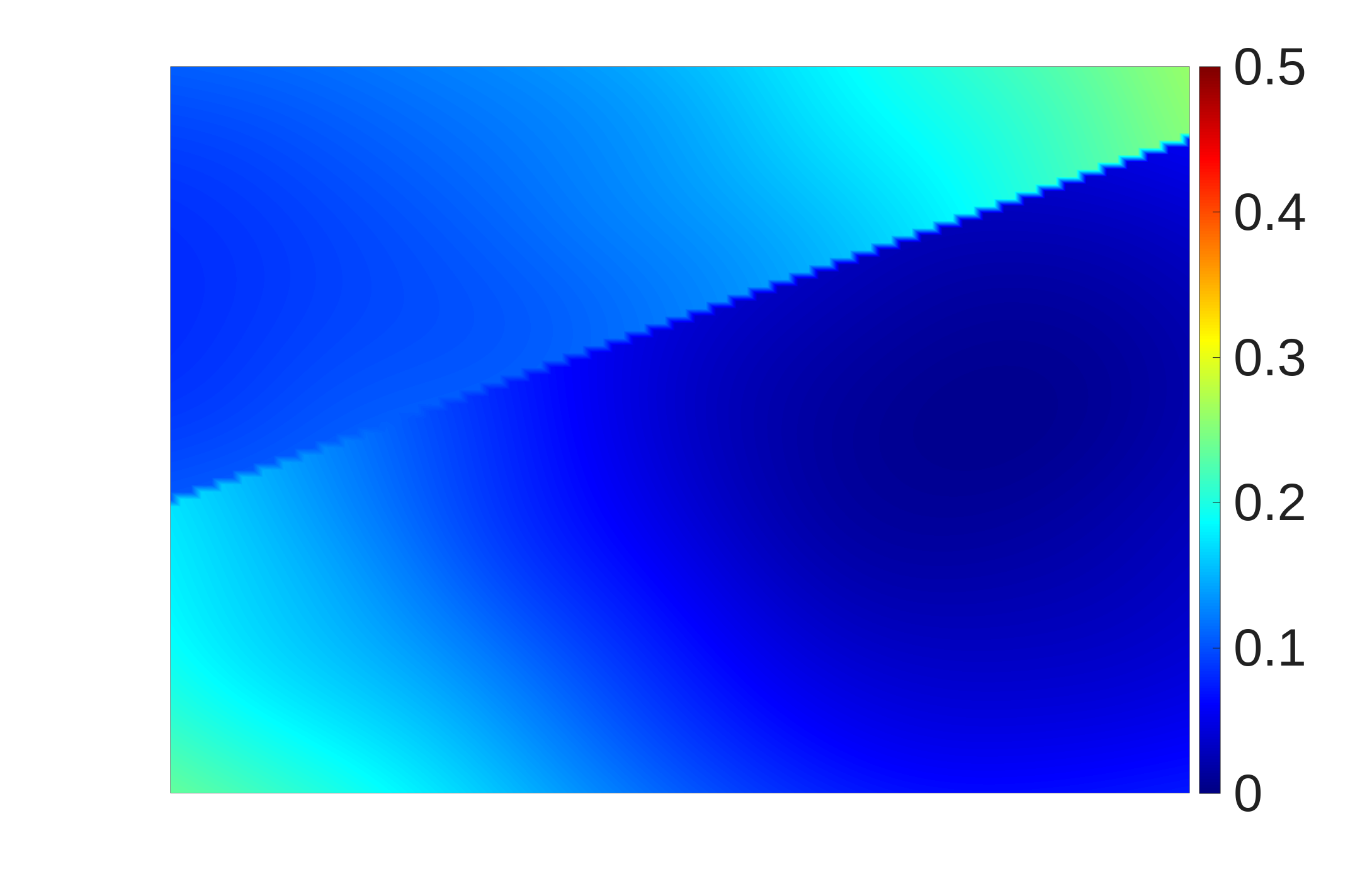}
    \end{subfigure} 

           \begin{subfigure}[b]{0.3\textwidth}
        \centering
        \includegraphics[width=\textwidth]{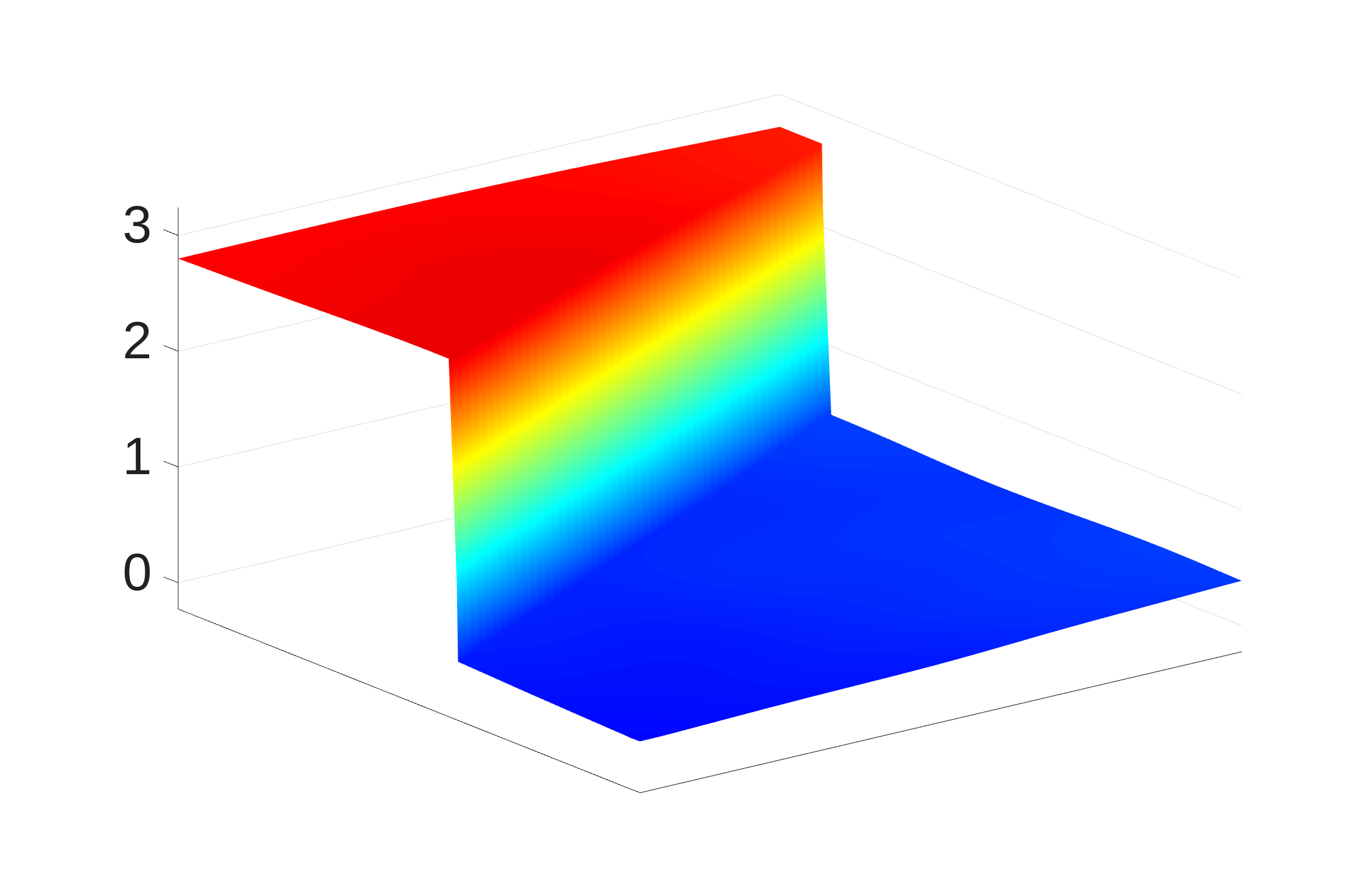}
    \end{subfigure}
            \begin{subfigure}[b]{0.3\textwidth}
        \centering
        Free-form approximation\\
        \includegraphics[width=\textwidth]{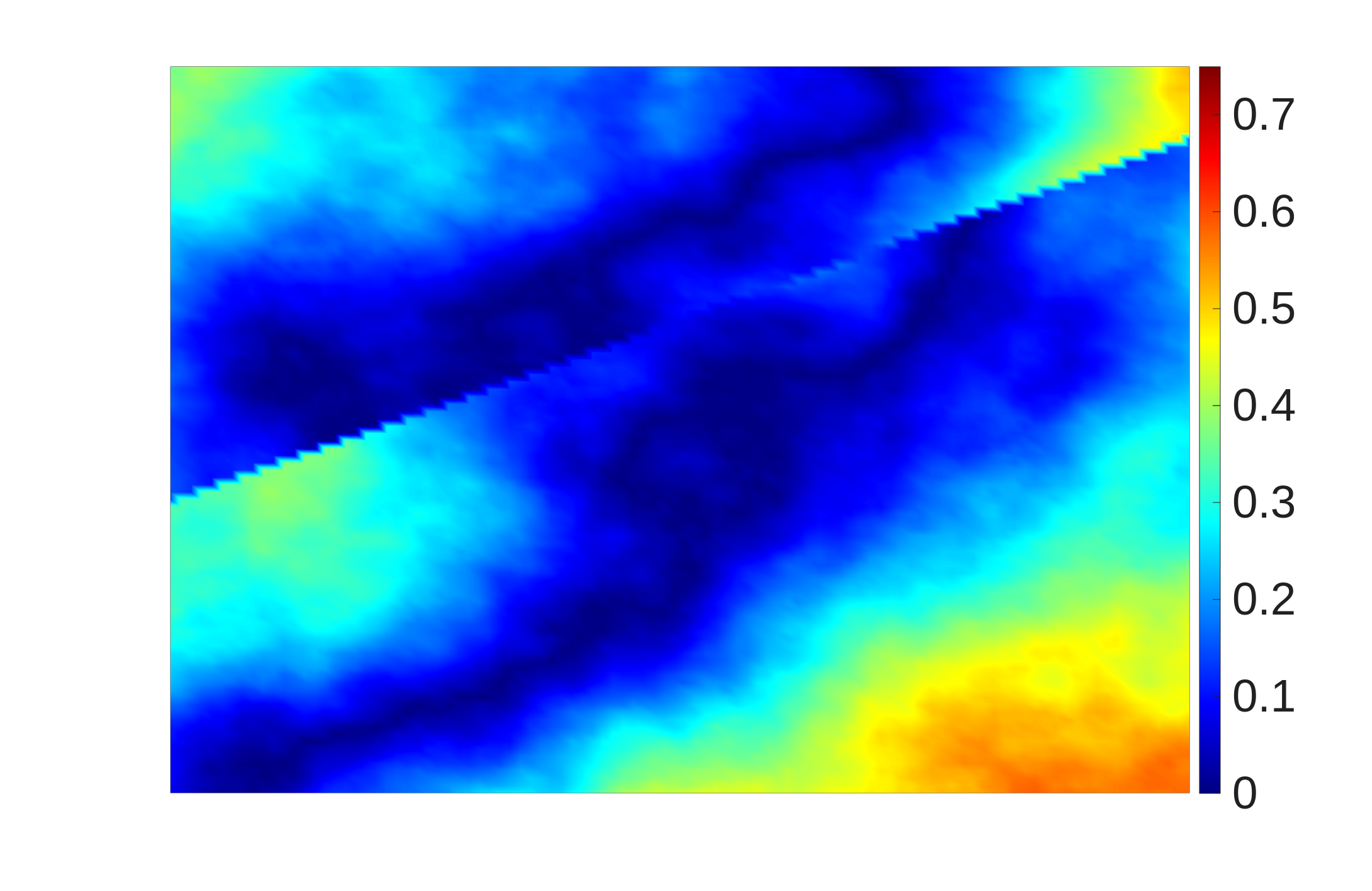}
    \end{subfigure}
    \begin{subfigure}[b]{0.3\textwidth}
        \centering
        \includegraphics[width=\textwidth]{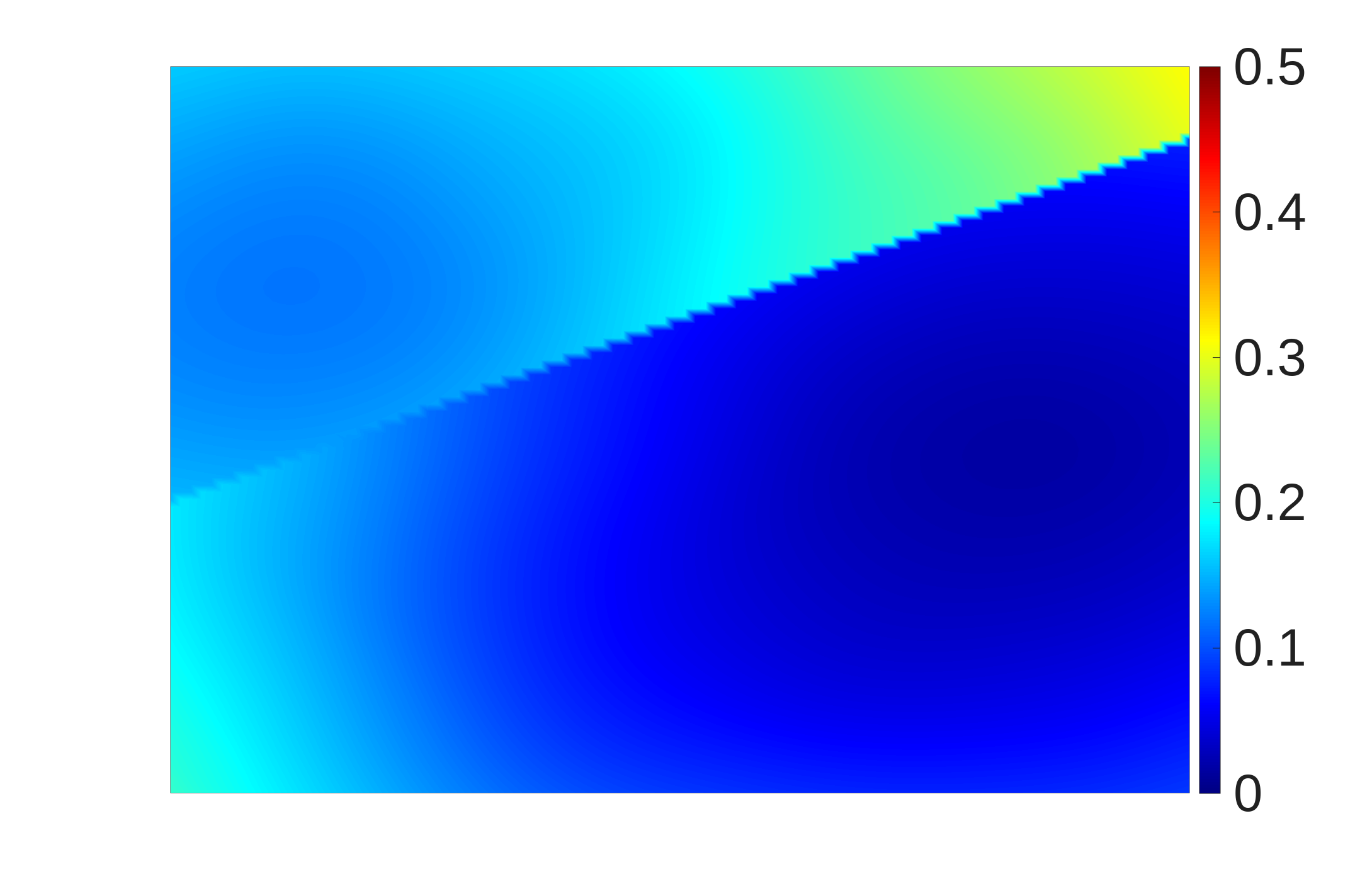}
    \end{subfigure}

               \begin{subfigure}[b]{0.3\textwidth}
        \centering
        \includegraphics[width=\textwidth]{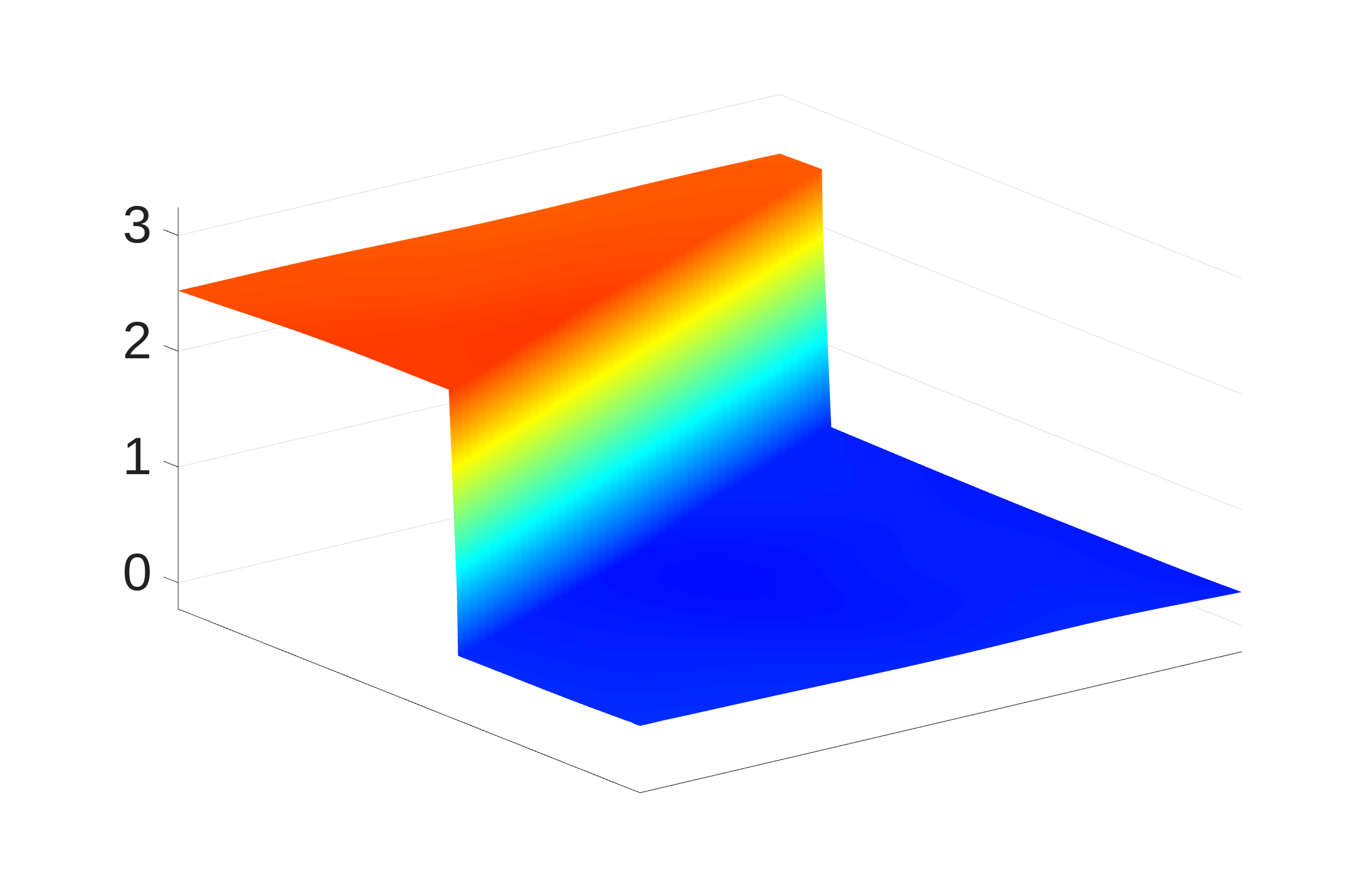}
    \end{subfigure}
            \begin{subfigure}[b]{0.3\textwidth}
        \centering
        Residual approximation\\
        \includegraphics[width=\textwidth]{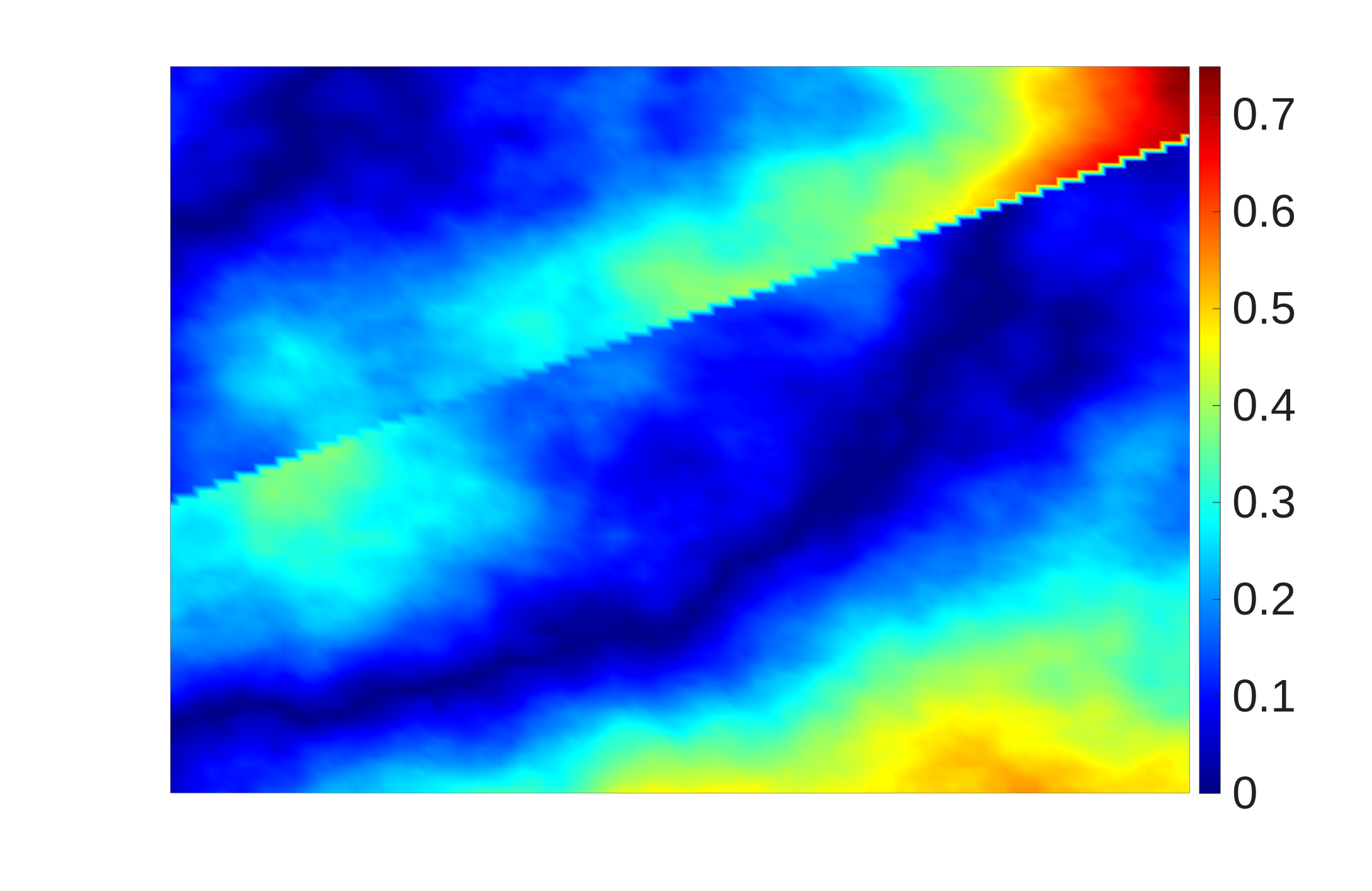}
    \end{subfigure}
    \begin{subfigure}[b]{0.3\textwidth}
        \centering
        \includegraphics[width=\textwidth]{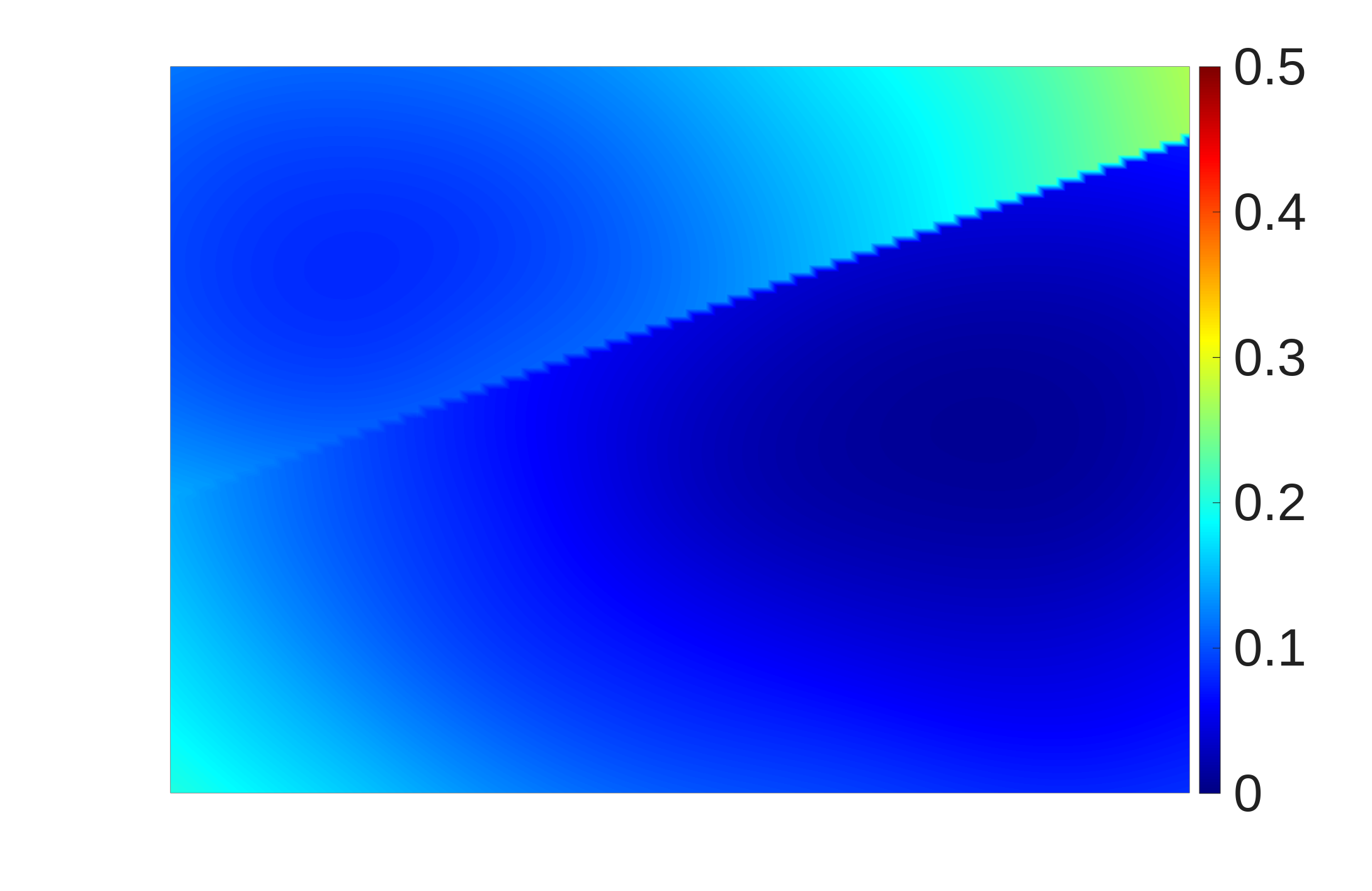}
    \end{subfigure}

               \begin{subfigure}[b]{0.3\textwidth}
        \centering
        \includegraphics[width=\textwidth]{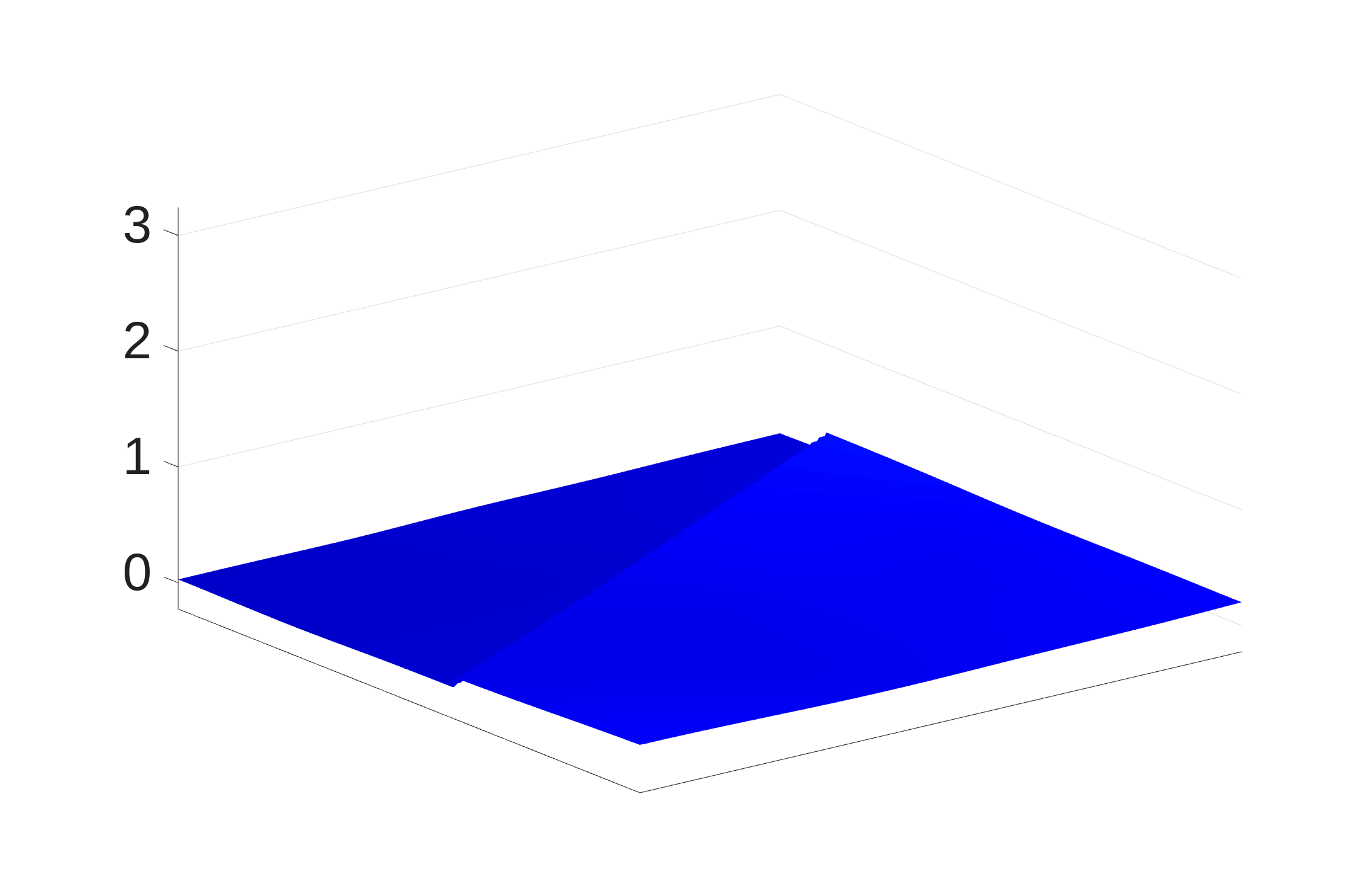}
    \end{subfigure}
            \begin{subfigure}[b]{0.3\textwidth}
        \centering
        Calibrated residual approximation\\
        \includegraphics[width=\textwidth]{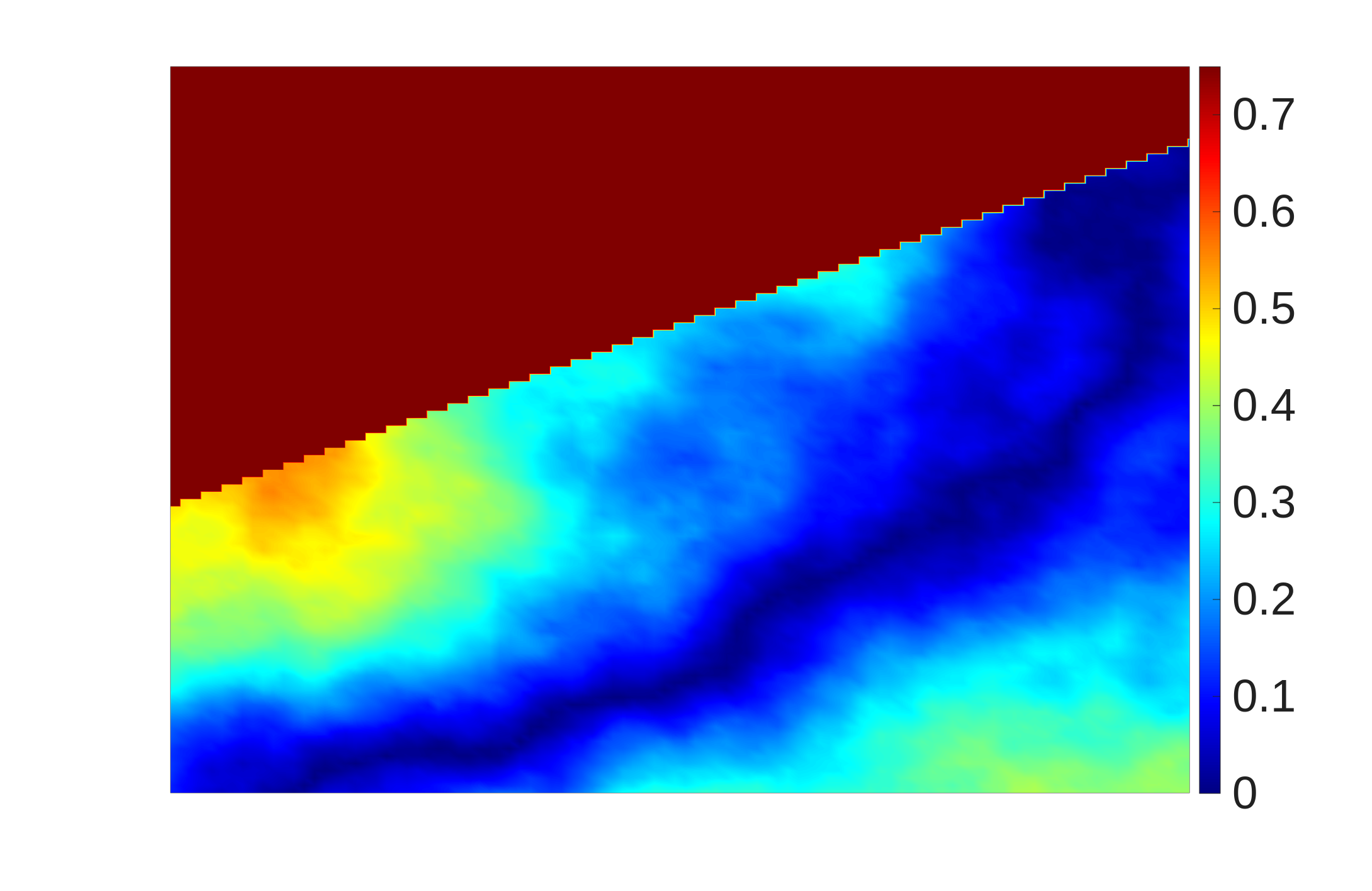}
    \end{subfigure}
    \begin{subfigure}[b]{0.3\textwidth}
        \centering
        \includegraphics[width=\textwidth]{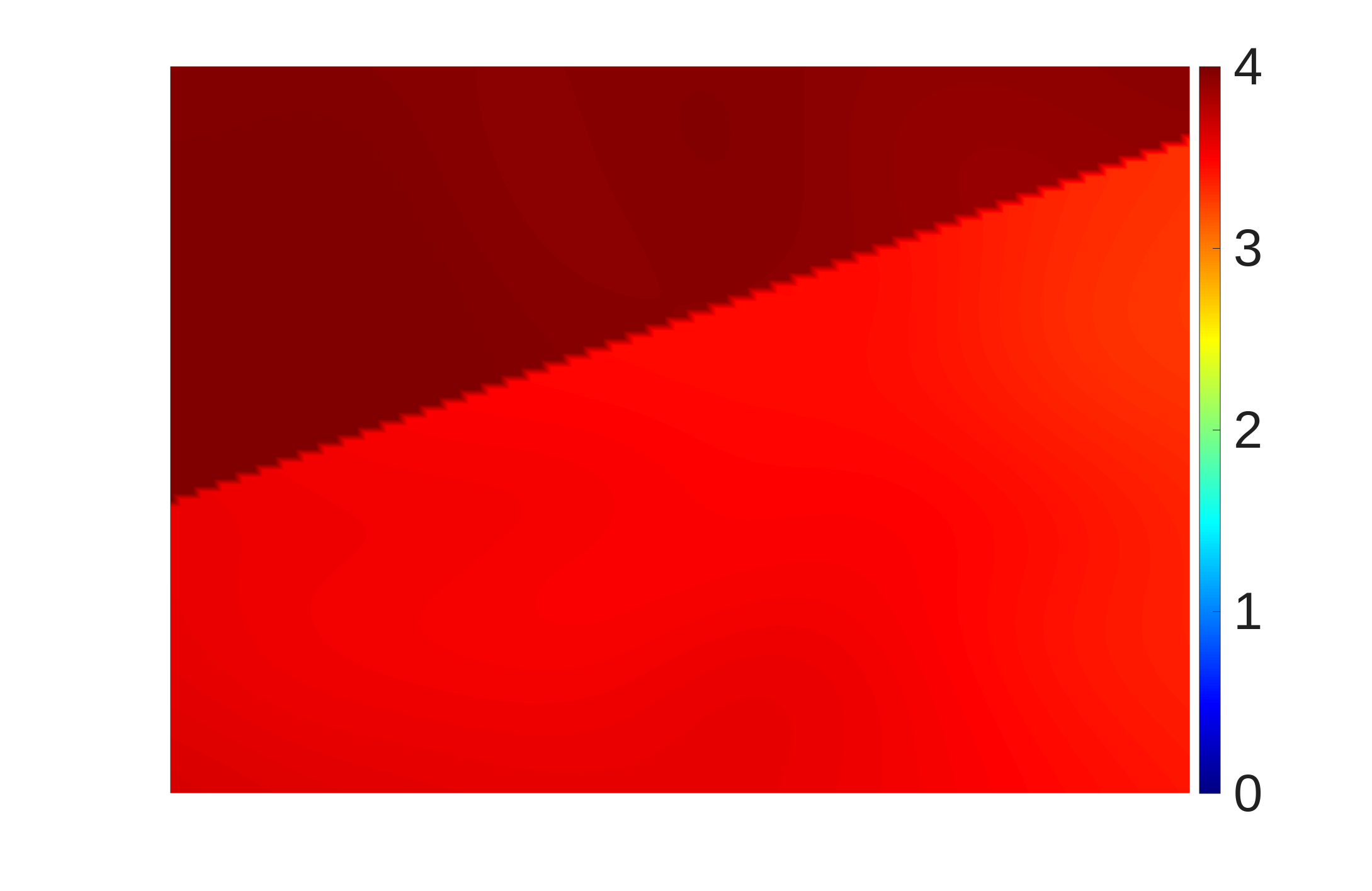}
    \end{subfigure}
    \hfill
    \caption{Summary of MCMC samples. Top row: True quantity of interest followed by posterior mean (left), bias (middle) and posterior variance (right) of the posterior samples with the true PDE-based NLL and its approximations. The $L^1$ norm of the biases are given by $0.1807$ (exact), $0.1839$ (free-form), $0.1954$ (residual), $1.072$ (calibrated residual).} 
    \label{fig: MCMCfull semiconductor}
\end{figure}


\section{Conclusions}\label{s:Conclusions}
This work developed a theoretical foundation for neural likelihood approximation by overcoming limitations imposed by restrictive parametric model classes. We introduced a normalization scheme in which an unconstrained function is mapped to a valid probability density by division by its expectation, thereby embedding normalization directly into a KL-based learning objective. This eliminates the need to enforce normalization through architectural constraints while preserving full flexibility of the approximative class. We showed that the resulting objective is convex in the un-normalized NLL. Moreover, we established consistency of the estimator under empirical approximation of the normalization integral, ensuring convergence to the true target in the large-sample regime.

We demonstrated the practical effectiveness of the proposed approach on a deblurring task and a non-linear PDE-based imaging problem, where only joint $(x, y)$ training data is available but the forward model is assumed unknown. We studied different parametrizations of the NLL and found that a free-form neural-network based approximation can accurately approximate the NLL, even without knowledge of the observational noise level, with significant computational acceleration. A residual-based parametrization struggles when the observational noise level is not known but performs well with known observational noise.

Quantification of the convergence through convergence rates remains an open problem. A further current limitation is that the consistency result is established under finite-dimensional spaces supporting the unknown and measurement due to the scope of existing empirical process theory used in the analysis. Extending these results to fully infinite-dimensional function classes remains an important direction for future work.

\FloatBarrier

\appendix
\section{Auxiliary results}

To prove Lemma~\ref{lem: subgaussian joint}, we show the following assertion, which is a counterpart to \cite[Lemma 3.11]{helin2025bayesianoptimalexperimentaldesign}
\begin{lemma}\label{lem: Z_fy bounds}
    There exists universal constants $C^-, C^+$ such that for all $f \in \F$
    \begin{eqnarray*}
        C^-  e^{-C_2^+ \norm{y}^2}   \le Z_f(y) \le C^+ e^{- \delta(C_1) \norm{y}^2} .
    \end{eqnarray*}
\end{lemma}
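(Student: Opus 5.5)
Both bounds come from inserting the two-sided growth condition \eqref{eq: bounds on f} of Assumption~\ref{ass: assumption} into the definition \eqref{eq:normalization} of $Z_f(y)$. The prior integrability \eqref{eq: sub gaussian prior} of Assumption~\ref{ass: prior} then controls the $x$-integral. The constants will depend only on $C_1$, $C_2^\pm$, $\delta(C_1)$ and $\mu$, never on $f$, which gives the required uniformity over $\F$.

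\emph{Upper bound.} Apply the lower bound in \eqref{eq: bounds on f} with $\varepsilon = C_1$, and write $\delta(C_1)$ for the associated $\delta$. Then for every $f\in\F$,
\begin{eqnarray*}
 e^{-f(x,y)} \le e^{C_2^-} e^{C_1\norm{x}^2} e^{-\delta(C_1)\norm{y}^2}.
\end{eqnarray*}
Integrating against $\mu$ gives
\begin{eqnarray*}
 Z_f(y)\le e^{C_2^-}\,\E^\mu e^{C_1\norm{x}^2}\, e^{-\delta(C_1)\norm{y}^2}.
\end{eqnarray*}
The middle factor is finite by \eqref{eq: sub gaussian prior}, so we may take $C^+ := e^{C_2^-}\E^\mu e^{C_1\norm{x}^2}$. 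This is exactly the computation already used in the proof of Lemma~\ref{lem: basic lemma}.

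\emph{Lower bound.} The upper bound in \eqref{eq: bounds on f} gives
\begin{eqnarray*}
 e^{-f(x,y)}\ge e^{-C_2^+}e^{-C_2^+\norm{x}^2}e^{-C_2^+\norm{y}^2},
\end{eqnarray*}
so that
\begin{eqnarray*}
 Z_f(y)\ge e^{-C_2^+}\,\E^\mu e^{-C_2^+\norm{x}^2}\; e^{-C_2^+\norm{y}^2}.
\end{eqnarray*}
It then suffices to show $c:=\E^\mu e^{-C_2^+\norm{x}^2}>0$. This holds because the integrand is strictly positive everywhere and $\mu$ is a probability measure. For a quantitative version, restrict to a ball $B(0,R)$ with $\mu(B(0,R))>0$ and obtain $c\ge e^{-C_2^+R^2}\mu(B(0,R))$; such a ball exists for every probability measure on $H$, by continuity from below. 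Alternatively, Jensen gives $c\ge \exp(-C_2^+\E^\mu\norm{x}^2)$, and the second moment is finite by Assumption~\ref{ass: prior}. Set $C^- := e^{-C_2^+}c$.

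There is no real obstacle here. The only point needing care is that $\delta$ in \eqref{eq: bounds on f} depends on $\varepsilon$, so $\varepsilon$ must be fixed to the same $C_1$ as in \eqref{eq: sub gaussian prior} to keep the $x$-integral finite. This is why the exponent in the upper bound reads $\delta(C_1)$. As a consequence, $0<\int_{\R^m} Z_f(y)\,dy<\infty$ uniformly in $f$, since it is sandwiched between two Gaussian integrals. This gives the well-definedness of \eqref{eq: marginal of y corresponding to L}.
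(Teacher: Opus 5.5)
Your proof is correct and matches the paper's argument. The lower growth bound in \eqref{eq: bounds on f} with $\varepsilon = C_1$, combined with \eqref{eq: sub gaussian prior}, gives the upper bound, and the upper growth bound gives the lower bound with $C^- = e^{-C_2^+}\E^\mu e^{-C_2^+\norm{x}^2}$. Your check that this last expectation is strictly positive is a step the paper leaves implicit.
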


\begin{proof}
    Using \eqref{eq: bounds on f} with $\varepsilon = C_1$
    \begin{eqnarray*}
        Z_f(y) = \E^\mu \left[ e^{-f(x; y)} \right] \le e^{C_2^-}  e^{- \delta(C_1) \norm{y}^2} \, \E^\mu \left[ e^{C_1 \norm{x}^2} \right].
    \end{eqnarray*}
    Using the lower bound of $\eqref{eq: bounds on f}$:
    \begin{eqnarray*}
        Z_f(y) = \E^{\mu } \left[e^{-f(x; y)}\right] \ge e^{-C_2^+ (1+\norm{y}^2)} \E^\mu \left[ e^{- C_2^+ \norm{x}^2} \right] .
    \end{eqnarray*}
\end{proof}

\begin{proof}[Proof of Lemma~\ref{lem: subgaussian joint}] We choose $\kappa:=  \min\left(C_1, \delta(C_1) \right) / 3$. It holds
    \begin{eqnarray*}
        \E^{\lambda_f} \left[ e^{\kappa (\norm{x}^2 + \norm{y}^2)} \right] & \le & \left( \E^{\lambda_f} \left[e^{2 \kappa \norm{x}^2} \right]  \right)^{1/2}  \left( \E^{\lambda_f} \left[e^{2 \kappa \norm{y}^2} \right]  \right)^{1/2} .
    \end{eqnarray*}
    We will now bound the terms on the right-hand side in the previous equation.
    Using Lemma~\ref{lem: Z_fy bounds}:
    \begin{eqnarray*}
        \E^{\lambda_f} \left[e^{2 \kappa \norm{y}^2} \right] &=& \E^{\pi_f} \left[e^{2 \kappa \norm{x}^2} \right] = \frac{\int_{\R^m} e^{2 \kappa \norm{y}^2}  Z_f(y) dy }{\int_{\R^m} Z_f(y) dy} \\
         & \le &\frac{ \int_{\R^m} C^+ e^{2 \kappa \norm{y}^2} e^{-\delta(C_1) \norm{y}^2} dy}{\int_{\R^m} Z_f(y) dy} .
    \end{eqnarray*}
   By Lemma~\ref{lem: Z_fy bounds}
    \begin{eqnarray}\label{eq: expected Zf lower bound}
        \int_{\R^m}Z_f(y) dy\ge C^- \int_{\R^m}    e^{-C_2^+ \norm{y}^2} dy > 0,
    \end{eqnarray}
    we have
    \begin{eqnarray*}
        \E^{\lambda_f} \left[e^{2 \kappa \norm{y}^2} \right] &\le &\frac{\int_{\R^m} C^+ e^{2 \kappa \norm{y}^2} e^{-\delta(C_1) \norm{y}^2} dy}{\int_{\R^m}  C^-  e^{-C_2^+ \norm{y}^2} dy}.
    \end{eqnarray*}
    By construction, $2\kappa < \delta(C_1)$ so the integral is finite. Regarding the second term, it holds by \eqref{eq: joint}, \eqref{eq: bounds on f} and \eqref{eq: expected Zf lower bound}:
    \begin{eqnarray*}
        \E^{\lambda_f} \left[e^{2 \kappa \norm{x}^2} \right]  =    \E^{\pi_f \otimes \mu} \left[e^{2 \kappa \norm{x}^2} \frac{e^{-f(x, y)}}{\int_{\R^m} Z_f(y) dy} \right] \le \frac{e^{C_2^-}}{C^- \int_{\R^m }e^{-C_2^+ \norm{y}^2}} \E^{\mu} \left[e^{2.5 \kappa \norm{x}^2} \right] 
    \end{eqnarray*}
    This term is indeed finite for $\kappa \le  C_1/ 3$ by \eqref{eq: sub gaussian prior}.
\end{proof}

To prove Lemma \ref{lem: lambda bounded by Lp}, we need the following results.
\begin{lemma}\label{lem: rn^q for marginal}
    Let $r < 1 + \frac{\kappa}{C_2^+}$. Then there exists a $C(r)>0$ such that for all $f, g\in \F$
    \begin{eqnarray*}
        \norm{\frac{\pi_g(y)}{\pi_f(y)}}_{L^{r}(\pi_f)} < C(r).
    \end{eqnarray*}
\end{lemma}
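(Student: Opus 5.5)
We need to bound $\int (\pi_g/\pi_f)^r \pi_f\,dy = \int \pi_g^r \pi_f^{1-r}\,dy$. The plan is to bound $\pi_g$ from above and $\pi_f$ from below by Gaussian-type envelopes, uniformly over $\F$, using Lemma~\ref{lem: Z_fy bounds}.

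Since $\pi_h(y)=Z_h(y)/\int_{\R^m} Z_h\,dy$, Lemma~\ref{lem: Z_fy bounds} gives $Z_h(y)\le C^+e^{-\delta(C_1)\norm{y}^2}$ and $Z_h(y)\ge C^- e^{-C_2^+\norm{y}^2}$. Integrating in $y$ then bounds the normalizer $\int Z_h\,dy$ between two positive constants $c_-,c_+$ that do not depend on $h\in\F$; the lower one is exactly \eqref{eq: expected Zf lower bound}. Consequently, for all $h\in\F$,
\begin{eqnarray*}
\frac{C^-}{c_+}e^{-C_2^+\norm{y}^2}\le \pi_h(y)\le \frac{C^+}{c_-}e^{-\delta(C_1)\norm{y}^2}.
\end{eqnarray*}

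For $r\ge1$ we have $1-r\le 0$, so the lower bound on $\pi_f$ controls $\pi_f^{1-r}\le C e^{(r-1)C_2^+\norm{y}^2}$. Combining this with the upper bound on $\pi_g$ gives
\begin{eqnarray*}
\int \pi_g^r\pi_f^{1-r}\,dy\le C(r)\int_{\R^m} e^{-\left(r\delta(C_1)-(r-1)C_2^+\right)\norm{y}^2}dy .
\end{eqnarray*}
This integral is finite provided $r\delta(C_1)>(r-1)C_2^+$. Since $\kappa=\min(C_1,\delta(C_1))/3<\delta(C_1)$, the hypothesis $r<1+\kappa/C_2^+$ gives $(r-1)C_2^+<\kappa<\delta(C_1)\le r\delta(C_1)$, so the exponent is strictly negative. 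The constant depends only on $r$, $C^\pm$, $c_\pm$, $C_2^+$ and $\delta(C_1)$, which is uniform in $f,g$. Taking the $1/r$-th power yields the claim.

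For $r\in[1,\cdot)$ this covers everything. For $r<1$, Jensen's inequality gives $\norm{\cdot}_{L^r(\pi_f)}\le\norm{\cdot}_{L^1(\pi_f)}=\int\pi_g\,dy=1$, so that case is trivial.

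The only delicate point is the exponent bookkeeping. The ratio is governed by the mismatch between the decay rate $\delta(C_1)$ of the upper envelope and the rate $C_2^+$ of the lower envelope. The threshold $1+\kappa/C_2^+$ is chosen conservatively, since $\kappa\le\delta(C_1)$, so that the Gaussian integral converges. No further structure of $\F$ is needed beyond Lemma~\ref{lem: Z_fy bounds}.
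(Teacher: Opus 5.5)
Your proof is correct and follows the same route as the paper: write $\pi_g^r\pi_f^{1-r}$ through $Z_g^rZ_f^{1-r}$ and the normalizers $\int_{\R^m} Z_h\,dy$, then bound each of these uniformly over $\F$ using Lemma~\ref{lem: Z_fy bounds}. Your exponent bookkeeping is in fact more careful than the paper's: the paper discards the $-r\delta(C_1)\norm{y}^2$ term and is left with the Lebesgue integral $\int_{\R^m} e^{(r-1)C_2^+\norm{y}^2}\,dy$, which diverges for $r>1$, and Lemma~\ref{lem: subgaussian joint} (a statement about $\lambda_f$-expectations) does not rescue that step. Keeping the term and using $(r-1)C_2^+<\kappa<\delta(C_1)\le r\delta(C_1)$, as you do, is what gives a finite bound that is uniform in $f,g$.
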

\begin{proof}
We use the definition of $\pi_h$ for $h \in \{f, g\}$ in \eqref{eq: marginal of y corresponding to L} to get
    \begin{align*}
        \int_{\R^m} \pi_g(y)^{r} \pi_f(y)^{1-r} \, dy \le \left(\int_{\R^m} Z_g(y) dy\right)^{-r} \left(\int_{\R^m} Z_f(y) dy\right)^{r-1} \int_{\R^m} Z_g(y)^{r} Z_f(y)^{1-r} dy
    \end{align*}
   We deal with the integral first. By Lemma~\ref{lem: Z_fy bounds},
    \begin{eqnarray*}
        \int_{\R^m} Z_g(y)^{r} Z_f(y)^{1-r} 
        & \le & (C^+)^r (C^-)^{1-r} \int_{\R^m} e^{(-r \delta(C_1) + (r-1) C_2^+) \norm{y}^2} \\
        & \le & (C^+)^r (C^-)^{1-r} \int_{\R^m} e^{(r-1) C_2^+ \norm{y}^2}.
    \end{eqnarray*}
    Since $(r-1) C_2^+ < \kappa$, this term is indeed uniformly bounded by Lemma~\ref{lem: subgaussian joint}.
    The second factor is now dealt with by utilizing Lemma~\ref{lem: Z_fy bounds}:
    \begin{eqnarray*}
        \left(\int_{\R^m}  Z_g(y) dy\right)^{-r} \left(\int_{\R^m}  Z_f(y) dy\right)^{r-1} \le  \left(C^-\int_{\R^m}  e^{-C_2^+ \norm{y}^2} dy\right)^{-r} \left(C^+\int_{\R^m}  e^{-\delta(C_1) \norm{y}^2} dy\right)^{r-1}  < \infty.
    \end{eqnarray*}
\end{proof}
\begin{lemma}\label{lem: rn^q}
    For $q_0:= 1 +\frac{\kappa}{2 C_2^+ r'}$ and any $1 < q < q_0$ such that for given $g \in \F$ and all $f \in \F$ we have the there exists $C(q)$ such that
    \begin{eqnarray*}
       \E^{\lambda_g} \left( \frac{ d \mu^y_f}{d \mu_g^y}\right)^q < C(q)
    \end{eqnarray*}
\end{lemma}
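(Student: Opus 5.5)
We want to bound $\E^{\lambda_g}\bigl(\frac{d\mu^y_f}{d\mu^y_g}\bigr)^q$ uniformly in $f\in\F$. The plan is to write the Radon–Nikodym derivative explicitly and control its two factors separately. By Definition~\ref{def: approximate posterior},
\begin{eqnarray*}
\frac{d\mu^y_f}{d\mu^y_g}(x) = e^{g(x,y)-f(x,y)}\,\frac{Z_g(y)}{Z_f(y)} .
\end{eqnarray*}
We will treat the exponential factor with the pointwise bounds \eqref{eq: bounds on f}. The normalizing ratio $Z_g/Z_f$ will be handled with Lemma~\ref{lem: Z_fy bounds}.

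Using \eqref{eq: bounds on f}, we have $g(x,y)\le C_2^+(1+\norm{x}^2+\norm{y}^2)$ and $-f(x,y)\le C_2^- + \varepsilon\norm{x}^2 - \delta(\varepsilon)\norm{y}^2$. The parameter $\varepsilon>0$ will be chosen small. Hence
\begin{eqnarray*}
e^{q(g-f)} \le e^{q(C_2^++C_2^-)}\, e^{q(C_2^++\varepsilon)\norm{x}^2}\, e^{q(C_2^+-\delta(\varepsilon))\norm{y}^2}.
\end{eqnarray*}
By Lemma~\ref{lem: Z_fy bounds}, the ratio satisfies $Z_g(y)/Z_f(y)\le (C^+/C^-)\,e^{(C_2^+-\delta(C_1))\norm{y}^2}$. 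This bound is uniform in $f,g$.

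Combining the two bounds gives
\begin{eqnarray*}
\left(\frac{d\mu^y_f}{d\mu^y_g}\right)^q \le C\, e^{a_q\norm{x}^2 + b_q\norm{y}^2},
\end{eqnarray*}
with explicit constants $a_q,b_q$. It then suffices to show $\E^{\lambda_g} e^{a_q\norm{x}^2+b_q\norm{y}^2}<\infty$ uniformly, which is what Lemma~\ref{lem: subgaussian joint} provides once $a_q,b_q\le \kappa$.

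This last requirement is the main obstacle. Taken literally, the crude bound gives $a_q\approx qC_2^+$, which need not be small. It does not depend on $q-1$ at all, so the restriction $q<q_0$ would play no role. To fix this, I would not bound the $q$-th power directly. Instead I would write $(d\mu^y_f/d\mu^y_g)^q = (d\mu^y_f/d\mu^y_g)\cdot(d\mu^y_f/d\mu^y_g)^{q-1}$ and absorb one factor into the measure, so that $\E^{\lambda_g}[\cdot]$ becomes an expectation under the measure $\pi_g(dy)\,\mu^y_f(dx)$. This mixed measure is $\lambda_f$ reweighted by $\pi_g/\pi_f$.

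Applying Hölder's inequality with exponents $r$ and $r'$ gives a factor $\norm{\pi_g/\pi_f}_{L^r(\pi_f)}$, which is bounded by Lemma~\ref{lem: rn^q for marginal}. The remaining factor is $\bigl(\E^{\lambda_f} e^{(q-1)r'(a\norm{x}^2+b\norm{y}^2)}\bigr)^{1/r'}$, where $a,b\lesssim 2C_2^+$ are the crude constants above. Lemma~\ref{lem: subgaussian joint} bounds this term uniformly in $f$ provided $(q-1)r'\cdot 2C_2^+\le\kappa$, i.e. $q<1+\kappa/(2C_2^+r')=q_0$. This explains the form of $q_0$.

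The delicate points are the following. First, choosing $\varepsilon$ small enough that the $\norm{x}^2$ coefficient stays below $2C_2^+$. Second, checking that the $\norm{y}^2$ coefficient, which combines contributions from $Z_g/Z_f$ and $g-f$, also fits under $\kappa$; if needed I would shrink $\varepsilon$ or bound it crudely by $2C_2^+$. Third, verifying that the resulting constant $C(q)$ depends on $g$ only through the universal constants of Assumption~\ref{ass: assumption}.
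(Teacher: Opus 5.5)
Your final argument is correct and is essentially the paper's proof. You absorb one factor of the density to pass to $\pi_g(dy)\,\mu^y_f(dx)$, apply Hölder with $r,r'$ so that Lemma~\ref{lem: rn^q for marginal} handles $\pi_g/\pi_f$, and bound $\bigl(d\mu^y_f/d\mu^y_g\bigr)^{r'(q-1)}$ pointwise so that Lemma~\ref{lem: subgaussian joint} applies exactly when $2r'(q-1)C_2^+\le\kappa$. The only difference is that the paper uses just $f\ge 0$ and $Z_g\le 1$ (from nonnegativity of $g$) together with the lower bound on $Z_f$. This gives the coefficients $C_2^+$ on $\norm{x}^2$ and $2C_2^+$ on $\norm{y}^2$ directly, so your $\varepsilon$-tuning and the upper bound on $Z_g$ are unnecessary.
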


\begin{proof}
In a first step, we introduce the Hölder exponents $r:= 1 + \frac{\kappa}{2 C_2^+}$, $r' = 1 + \frac{2 C_2^+}{\kappa}$ and write
    \begin{eqnarray*}
         \E^{\lambda_g}  \left[ \left( \frac{ d \mu^y_f}{d \mu_g^y}\right)^q \right] &= &\E^{\pi_g} \E^{\mu_g^y}  \left[ \left( \frac{ d \mu^y_f}{d \mu_g^y}\right)^q \right]=  \E^{\pi_g} \E^{\mu^y_f}  \left[ \left( \frac{ d \mu^y_f}{d \mu_g^y}\right)^{q-1} \right] =  \E^{\pi_f} \left[ \frac{ d \pi_g}{d \pi_f} \, \E^{\mu^y_f}  \left[ \left( \frac{ d \mu^y_f}{d \mu_g^y}\right)^{q-1} \right] \right] \\
         &\le &  \left[ \E^{\lambda_f} \left( \frac{ d \mu^y_f}{d \mu_g^y}\right)^{r'(q-1)} \right]^{1/{r'}} 
        \left[\E^{\pi_f}  \left( \frac{\pi_g(y)}{\pi_f(y)}\right)^{r} \right]^{1/{r}}.
    \end{eqnarray*}
    The second term involving the integral over $\pi_f$ is uniformly bounded by Lemma~\ref{lem: rn^q for marginal}.
    To deal with the other term, we realize that  $f(x,y) \ge 0$ and hence $Z_g(y) \le 1$ for all $(x, y) \in H \times \R^m$ such that
    \begin{eqnarray*}
         \left| \frac{ d \mu^y_f}{d \mu_g^y}\right|^{r'(q-1)} &=& \frac{e^{r' (q-1) (g(x, y)-f(x, y))}}{Z_f(y)^{r'(q-1)} Z_g(y)^{r'(1-q)}} 
         \le  \frac{e^{r' (q-1) g(x, y)}}{Z_f(y)^{r'(q-1)} }  \\
         &\le&  \tfrac{1}{C^-} e^{r' (q-1) C_2^+ (\norm{x}^2 + \norm{y}^2)}
         e^{r'(q-1)C_2^+ \norm{y}^2}.
    \end{eqnarray*}
    Equation \eqref{eq: bounds on f} together with Lemma~\ref{lem: Z_fy bounds} was used.
    Observe that $2 r'(q-1)C_2^+ \le \kappa$. Therefore
    \begin{eqnarray*}
        \E^{\lambda_f} \left[ \left| \frac{ d \mu^y_f}{d \mu_g^y}\right|^{r'(q-1)} \right]  \le \tfrac{1}{C^-} \, \E^{\lambda_f} \left[e^{\kappa (\norm{x^2} + \norm{y}^2)} \right],
        \end{eqnarray*}
    which is indeed uniformly bounded by Lemma~\ref{lem: subgaussian joint}.
\end{proof}

\begin{proof}[Proof of Lemma~\ref{lem: lambda bounded by Lp}]
    Fix in the first step $y \in \R^m$. Define $\Lambda^y(f) := \log \E^{\mu} \left[e^{-f(x, y)}\right]$.  For $t \in [0, 1]$ we define $k_t = g+t(f-g) \in \F$, since $\F$ is convex. By the fundamental theorem of calculus,
    \begin{eqnarray}
    \begin{aligned}\label{eq: pf of lp lemma, eq for fund of calc}
         \Lambda^y(f) - \Lambda^y(g) &= \int_0^1 \frac{d}{dt} \Lambda^y(k_t) dt 
          =  \int_0^1 \frac{\frac{d}{dt} \E^{\mu} \left[ e^{-k_t(x, y) } \right]}{\E^{\mu} \left[ e^{-k_t(x, y)} \right]  \, dt}.  
         \end{aligned}
    \end{eqnarray}
    By Leibniz integral rule 
    \begin{eqnarray}\label{eq: pf of lp lemma, eq deriv vs int}
        \frac{d}{dt} \E^{\mu} \left[ e^{-k_t(x, y)} \right] = \E^{\mu} \left[  \left(f(x, y) - g(x, y)\right)  e^{-k_t(x, y) } \right].
    \end{eqnarray}
    The domination function $G \in L^1(\mu)$ in Leibniz integral rule is given by
    \begin{eqnarray*}
        \left|f(x, y) - g(x, y)\right|  e^{-k_t(x, y) } \le 2C_2^+ (\norm{x}^2 + \norm{y}^2) := G(x).
    \end{eqnarray*} 
    Putting together \eqref{eq: pf of lp lemma, eq for fund of calc} and \eqref{eq: pf of lp lemma, eq deriv vs int}, we get
    \begin{eqnarray*}
        \Lambda^y(f) - \Lambda^y(g) & = & \int_0^1 \E^{\mu^y_{k_t}} \left[f(x, y) - g(x, y)\right]  \, dt.
    \end{eqnarray*}
    Therefore by Fubinis' Theorem and for any $1/p + 1/q = 1$
    \begin{eqnarray*}
        \E^{\pi_h} |\Lambda^y(f) - \Lambda^y(g)| &=& \int_0^1 \E^{\lambda_h} \left[ \frac{d \mu^y_{k_t}}{d \mu_h^y}  \left( g(x, y) - f(x, y\right) \right] dt \\
        & \le &  \left[ \E^{\lambda_h}  \left| g(x, y) - f(x, y)\right|^p  \right]^{1/p}  \int_0^1 \left( \E^{\lambda_h}\left| \frac{d \mu^y_{k_t}}{d \mu_h^y} \right|^q  \right)^{1/q} dt.
    \end{eqnarray*}
    Notice that $k_t \in \F$ and we can apply Lemma~\ref{lem: rn^q} and fix now $q < q_0$ and the corresponding $p$ that satisfies $1/p + 1/q = 1$.
\end{proof}
\section{Proofs for data-driven estimates}
\subsubsection*{Proof of Theorem \ref{lem: conv in p}}
\begin{proof}
   We introduce the decomposition
    \begin{align}\label{eq: decomp prob in p}
        \Lambda_{N}(f) - \Lambda(f) =   \frac{1}{N} \sum_{i=1}^N \left(\log Z_f^{M(N)}(y_i) - \log Z_f(y_i)\right)+ \left(\frac{1}{N} \sum_{i=1}^{N} \log Z_f(y_i) - \Lambda(f)\right)
    \end{align}
    and show convergence to zero in probability for the first term (I) and the second term (II) separately.
    
    \textbf{(I)} We define for fixed $i = 1, \ldots, N$ 
    \begin{eqnarray*}
        \xi_i^{M(N)}:= \log Z_f^{M(N)}(y_i) - \log Z_f(y_i).
    \end{eqnarray*}
     We show now that $\xi_i^{M(N)}$ is uniformly integrable.
    \begin{eqnarray*} 
            \sup_{N \in \Natural} \E^{\mu\otimes \pi} (\xi_1^{M(N)})^2 &\le& 2\left(\log \E^{\mu\otimes \pi} e^{-f(x, y_i)} \mu(dx)\right)^2 +2 \sup_{N \in \Natural} \E^{\mu\otimes \pi} \left(\log \frac{1}{M(N)} \sum_{j=1}^{M(N)} e^{-f(x_{i, j}, y_i)}\right)^2.
    \end{eqnarray*}
    For the first term we use \eqref{eq: sub gaussian prior} and \eqref{eq: bounds on f} to find
    \begin{eqnarray*}
        \left(\log \E^{\mu\otimes \pi} e^{-f(x, y_i)} \mu(dx)\right)^2 \le \left(C_2 ^- + \log \E^{\mu} 2e^{C_1 \norm{x}^2} \mu(dx) \right)^2 < \infty.
    \end{eqnarray*}
    And for the second term, we find upper and lower bounds for the logarithm-term by using \eqref{eq: sub gaussian prior} and Jensen's inequality:
    \begin{eqnarray*}
         \log \frac{1}{M(N)} \sum_{j=1}^{M(N)} e^{-f(x_{i, j}, y_i)} \begin{cases}
             \le C_2^- + \log \frac{1}{M(N)} \sum_{j=1}^{M(N)} e^{C_1/2 \norm{x_i, j}^2} \\
              \ge -C_2^+ - C_2^+ \norm{y_i}^2 - \frac{C_2^+}{M(N)} \sum_{j=1}^{M(N)} \norm{x_{i, j}}^2.
         \end{cases}  
    \end{eqnarray*}
    By using $\log z \le z$ for $z \ge 1$ we find
    \begin{eqnarray*}
        \sup_{N \in \Natural} \E^{\mu} \left(C_2^- + \log \frac{1}{M(N)} \sum_{j=1}^{M(N)} e^{C_1/2 \norm{x_{i, j}}^2}\right)^2 & \le & 2 (C_2^-)^2 + 2 \E^{\mu} \left(e^{C_1/2 \norm{x}^2}\right)^2 \\
        &\le& 2 (C_2^-)^2 + 2 \E^{\mu} e^{C_1 \norm{x}^2} <  \infty.
    \end{eqnarray*}
    Through Jensen's inequality we find
    \begin{eqnarray*}
        & &\sup_{N \in \Natural} \E^{\mu\otimes \pi} \left( (1+ C_2^+) \norm{y_i}^2 + \frac{C_2^+}{M(N)} \sum_{j=1}^{M(N)} \norm{x_{i, j}}^2 \right)^2
        \le 2 (1+C_2^+)^2 \E^{\pi}\norm{y}^4 + 2 (C_2^+)^2\sup_{N \in \Natural} \E^{\mu}  \norm{x}^4.
    \end{eqnarray*}
    In summary, we showed that $ \sup_{N \in \Natural} \E^{\mu} (\xi_1^{M(N)})^2 < \infty$ and hence $\xi_i^{M(N)}$ is uniformly integrable. Since $\xi_1^{M(N)} \rightarrow 0$ converges almost surely by the law of large numbers (notice that $x_{i, j}$ are all iid) and the continuity of the logarithm,  and is uniformly integrable, it holds
    \begin{eqnarray}\label{eq: toshow in conv in prob}
         \E^{\mu \otimes \pi} \frac{1}{N} \left| \sum_{i=1}^N \xi_i^{M(N)}\right| \le \E^{\mu \otimes \pi} |\xi_1^{M(N)}|\rightarrow 0, \quad N\to \infty,
    \end{eqnarray}
    which implies $N^{-1} \sum_{i=1}^N\xi_i^{M(N)} \rightarrow 0$ in probability, as $N \rightarrow \infty$.

    \textbf{(II)} By the weak law of large numbers it holds that 
    \begin{eqnarray*}
        \left(\frac{1}{N} \sum_{i=1}^N \log Z_f(y_i) - \Lambda(f)\right) \rightarrow 0
    \end{eqnarray*}
    in probability.
\end{proof}

\subsection*{Proof of Theorem \ref{lem: uniqueness cond}}
\begin{proof}
    \textbf{Step 1: Optimality of $L$.}
    By Lemma~\ref{lem: basic lemma} applied at $f = L$, where $\mu_L^y = \mu^y$, we obtain for any $f \in \F_\Phi$
    \begin{eqnarray*}
        \Phi(f) - \Phi(L) = \E^\pi \KL{\mu^y}{\mu^y_f} \ge 0,
    \end{eqnarray*}
    with equality if and only if $\mu^y_f = \mu^y$ for $\pi$-almost every $y$, that is, if and only if $f \sim L$ in $\F_\Phi$ by Lemma~\ref{lem: equivalence class}. Hence $L$ is the strict minimizer of $\Phi$ over $\F_\Phi$.

    \textbf{Step 2: Compactness argument.}
    By Theorem~\ref{thm: bracketing number}, for every $\eta > 0$ the set $\F$ can be covered by finitely many $\eta$-brackets $[l_k, u_k]$, $k = 1, \ldots, K$, with $\norm{u_k - l_k}_{L^{p_0}(\lambda)} \le \eta$. Setting $m_k := \tfrac{1}{2}(l_k + u_k)$, every $f$ in the $k$-th bracket satisfies
    \begin{eqnarray*}
        \norm{f - m_k}_{L^{p_0}(\lambda)} \le \tfrac{1}{2}\norm{u_k - l_k}_{L^{p_0}(\lambda)} \le \tfrac{\eta}{2},
    \end{eqnarray*}
    so $\{m_1, \ldots, m_K\}$ is a finite $\tfrac{\eta}{2}$-net for $\F$. Since $\eta > 0$ was arbitrary, $\F$, and hence its closure $\overline{\F}$, is totally bounded in $L^{p_0}(\lambda)$. As $L^{p_0}(\lambda)$ is complete and $\overline{\F}$ is closed, $\overline{\F}$ is complete; being complete and totally bounded, $\overline{\F}$ is compact in $L^{p_0}(\lambda)$.

    By Lemma~\ref{lem: lambda bounded by Lp}, $\norm{\cdot}_\Phi$ is dominated by $C\norm{\cdot}_{L^{p_0}(\lambda)}$, so the identity map $\big(L^{p_0}(\lambda), \norm{\cdot}_{L^{p_0}(\lambda)}\big) \to \big(\F_\Phi, \norm{\cdot}_\Phi\big)$ is continuous, and $\overline{\F}_\Phi$ is compact with respect to $\norm{\cdot}_\Phi$ as well.
    Fix $\varepsilon > 0$ and define
    \begin{eqnarray*}
        K_\varepsilon := \{ f \in \overline{\F}_\Phi : \norm{f - L}_\Phi \ge \varepsilon \}.
    \end{eqnarray*}
    $K_\varepsilon$ is a closed subset of the compact set $\overline{\F}_\Phi$, hence compact.

    \textbf{Step 3: Attainment.}
    If $K_\varepsilon =\emptyset$, the claim is clear. Hence we assume now $K_\varepsilon \neq \emptyset$.
    Since $\Phi$ is Lipschitz, hence continuous, with respect to $\norm{\cdot}_\Phi$ (by $|\Phi(f) - \Phi(g)| \le \norm{f-g}_\Phi$), it attains its infimum on the compact set $K_\varepsilon$: there exists $f_0 \in K_\varepsilon$ with
    \begin{eqnarray*}
        \Phi(f_0) = \inf_{f \in K_\varepsilon} \Phi(f).
    \end{eqnarray*}
    Since $\norm{f_0 - L}_\Phi \ge \varepsilon > 0$, we have $f_0 \neq L$ in $\F_\Phi$, so by Step 1, $\Phi(f_0) > \Phi(L)$ strictly. Therefore
    \begin{eqnarray*}
        \inf\limits_{\norm{f-L}_\Phi \ge \varepsilon} \Phi(f) = \Phi(f_0) > \Phi(L).
    \end{eqnarray*}
\end{proof}

\subsection*{Proof of Theorem~\ref{thm: M estimation}}
\begin{proof}
    Notice that $N_{[]}(\varepsilon, \F, L_{p_0}(\lambda))$ is finite for every $\varepsilon >0$ by Theorem~\ref{thm: bracketing number} and for $p_0$ given in Lemma~\ref{lem: lambda bounded by Lp}.
    We adapt the proof classical proof (e.g. in \cite[Theorem 3.2]{sen2022lecture}).
Fix $\varepsilon >0$ and choose finitely many $\varepsilon$-brackets $([l_k, u_k])_{k = 1}^K$ that cover $\F$ and with $\lambda|u_k - l_k|^{p_0} < \varepsilon^{p_0}$ for all $k \le K \in \Natural$. Without loss of generality, every $l_k, u_k$ satisfies \eqref{eq: bounds on f}. For any $f \in \F$ there exists a bracket such that $f \in [l_k, u_k]$. We fix $f\in \F$ and the corresponding bracket. 
    We write
    \begin{align*}
        \Psi(f) &:=\E^\lambda f(x, y), &\quad \Psi_N(f) &:= \frac{1}{N} \sum f(x_i, y_i) \\
        \Lambda(f) &:= \E^\pi \left[ \log \E^\mu e^{-f(x, y)} \right], &\quad \Lambda_N(f) &:= \frac{1}{N}\sum_{i = 1}^{N} \log \left[ \frac{1}{M(N)}\sum_{j = i}^{M(N)}  \exp \left( -f(x_{i, j}; y_i) \right) \right]
    \end{align*}
    such that  
    \begin{eqnarray*}
        |\Phi(f) - \Phi_N(f)| &\le& |\Psi(f) - \Psi_N(f)| + |\Lambda(f) - \Lambda_N(f) |.
    \end{eqnarray*}
     Notice since $l_k(x, y)\le u_k(x,y)$ $\lambda$-almost everywhere, and $\Psi, \Lambda$ are increasing or decreasing, it holds for $\Theta \in \{ \Psi, \Lambda\}$
     \begin{eqnarray*}
         |\Theta(f) - \Theta(l_k)| + |\Theta_N(f) - \Theta_N(l_k)| \le |\Theta_N(u_k) - \Theta_N(l_k)| + |\Theta(u_k) - \Theta(l_k)| .
     \end{eqnarray*}
     Thus 
      \begin{align}
      \begin{aligned}\label{eq: unif for f bounds}
        |\Theta_N(f) - \Theta(f)| \le& |\Theta_N(f) -\Theta_N(l_k)| +  |\Theta_N(l_k) - \Theta(l_k)| + |\Theta(l_k) - \Theta(f)| \\
         \le & |\Theta_N(u_k) - \Theta_N(l_k)| +  |\Theta_N(l_k) - \Theta(l_k)| + |\Theta(l_k) - \Theta(u_k)|.
      \end{aligned}
      \end{align}
    The right hand side depends only on the finite indices $k \le K$ but not on $f \in \F$.

    \noindent\textbf{Step 1:}
    Since $\norm{\cdot}_{L^1(\lambda)} \le \norm{\cdot}_{L^{p_0}(\lambda)}$ it holds
    \begin{eqnarray}\label{eq: Psis}
        |\Psi(u_k) - \Psi(l_k)| = \E^\lambda |u_k(x, y) - l_k(x, y)|  \le \norm{u_k - l_k}_{L^{p_0}}  \le \varepsilon.
    \end{eqnarray}
    Using Lemma~\ref{lem: lambda bounded by Lp}, the definition of the brackets, and for $C$ independent of $k$:
    \begin{eqnarray}\label{eq: Lambdas}
        |\Lambda(u_k) - \Lambda(l_k)| \le C \norm{u_k - l_k}_{L^{p_0}}   = C \varepsilon
    \end{eqnarray}
\textbf{Step 2:}
For $\Theta \in \{ \Psi, \Lambda\}$ and each fixed $k = 1, \dots, K$ we have by the law of large numbers and Theorem \ref{lem: conv in p}
\begin{align*}
     |\Theta_N(l_k) - \Theta(l_k)| + |\Theta_N(u_k) - \Theta_N(l_k)|  \to |\Theta(u_k) - \Theta(l_k)| \le (C+1) \varepsilon,
\end{align*}
with convergence in probability. 

\noindent\textbf{Combining the estimates.}
Since the right-hand side of \eqref{eq: unif for f bounds} does not depend on $f\in\F$, we obtain
\begin{equation*}
    \sup_{f\in\F}|\Phi_N(f)-\Phi(f)|
    \;\le\;
    \sup_{k\le K}\,\sum_{\Theta\in\{\Psi,\Lambda\}}
    \Bigl[
        |\Theta_N(u_k)-\Theta_N(l_k)|
        + |\Theta_N(l_k)-\Theta(l_k)|
        + |\Theta(l_k)-\Theta(u_k)|
    \Bigr].
\end{equation*}
By Step~1, the last term satisfies $\sup_{k\le K}\sum_\Theta|\Theta(l_k)-\Theta(u_k)|\le (C+1)\varepsilon$.
By Step~2, the first two terms converge in probability to $\sup_{k\le K}\sum_\Theta|\Theta(u_k)-\Theta(l_k)|\le (C+1)\varepsilon$,
since the supremum is over a finite set.
Hence for any $\delta>0$,
\begin{equation*}
   \mathbb{P}\!\left(\sup_{f\in\F}|\Phi_N(f)-\Phi(f)|>2(C+1)\varepsilon+\delta\right)\;\longrightarrow\;0.
\end{equation*}
Since $\varepsilon>0$ was arbitrary, this gives $\sup_{f\in\F}|\Phi_N(f)-\Phi(f)|\xrightarrow{P}0$.    
\end{proof}


\printbibliography

\end{document}